\documentclass{article}

%


\usepackage[numbers]{natbib}

\usepackage[final]{nips_2016}

\usepackage[utf8]{inputenc} 
\usepackage[T1]{fontenc}    
\usepackage{hyperref}       
\usepackage{url}            
\usepackage{booktabs}       
\usepackage{amsfonts}       
\usepackage{nicefrac}       
\usepackage{microtype}      

\usepackage{amsmath}
\usepackage{amssymb}
\usepackage{amsthm}

\usepackage{graphicx} 
\usepackage{subfigure} 
\graphicspath{{figures/}}

\usepackage{xcolor}

\usepackage{wrapfig}

\usepackage{algorithm}
\usepackage{algorithmic}


\input{Definitions}

\ifx\proof\undefined
\newenvironment{proof}{\par\noindent{\bf Proof\ }}{\hfill\BlackBox\\[2mm]}
\fi

\ifx\theorem\undefined
\newtheorem{theorem}{Theorem}
\fi
\ifx\example\undefined

\fi
\ifx\lemma\undefined
\newtheorem{lemma}[theorem]{Lemma}
\fi

\ifx\corollary\undefined
\newtheorem{corollary}[theorem]{Corollary}
\fi

\ifx\assumption\undefined
\newtheorem{assumption}{Assumption}
\fi

\ifx\definition\undefined

\fi

\ifx\proposition\undefined

\fi

\ifx\remark\undefined

\fi

\ifx\conjecture\undefined

\fi

\ifx\factoid\undefined

\fi

\ifx\axiom\undefined

\fi

\newcommand{\RN}[1]{%
  \textup{\lowercase\expandafter{\it \romannumeral#1}}%
}

\title{Stochastic Gradient MCMC with Stale Gradients}

%

\author{
	Changyou Chen$^\dag$~~~~~~~ Nan Ding$^\ddag$~~~~~~~ Chunyuan Li$^\dag$~~~~~~~ Yizhe Zhang$^\dag$ ~~~~~~~ Lawrence Carin$^\dag$ \\
	$^\dag$Dept. of Electrical and Computer Engineering, Duke University, Durham, NC, USA \\
	$^\ddag$Google Inc., Venice, CA, USA \\
	\texttt{$^\dag$\{cc448,cl319,yz196,lcarin\}@duke.edu; $^\ddag$dingnan@google.com}
}

\begin{document}

\maketitle

\begin{abstract}
Stochastic gradient MCMC (SG-MCMC) has played an important role in large-scale 
Bayesian learning, with well-developed theoretical convergence properties. 
In such applications of SG-MCMC, it is becoming increasingly popular to employ 
distributed systems, where stochastic gradients are computed based on some 
outdated parameters, yielding what are termed {\em stale gradients}. While 
stale gradients could be directly used in SG-MCMC, their impact on convergence 
properties has not been well studied. In this paper we develop theory to show 
that while the bias and MSE of an SG-MCMC 
algorithm depend on the staleness of stochastic gradients, its estimation variance 
(relative to the {\em expected} estimate, based on a prescribed number of samples) is 
independent of it. In a simple Bayesian distributed system with SG-MCMC, where stale 
gradients are computed asynchronously by a set of workers, our theory indicates a
linear speedup on the decrease of estimation variance w.r.t.\! the number of workers. 
Experiments on synthetic data and deep neural 
networks validate our theory, demonstrating the effectiveness and scalability of SG-MCMC 
with stale gradients.
\end{abstract}

\section{Introduction}
\label{sec:intro}

The pervasiveness of big data has made scalable machine learning increasingly 
important, especially for deep models. A basic technique is to adopt stochastic optimization algorithms \cite{Bottou:98}, 
{\it e.g.}, stochastic gradient descent and its extensions \cite{Bottou:12}.
In each iteration of stochastic optimization, a minibatch of data is used to 
evaluate the gradients of the objective function and update model 
parameters (errors are introduced in the gradients, because they are computed 
based on minibatches rather than the entire dataset; since the minibatches are 
typically selected at random, this yields the term ``stochastic'' gradient). 
This is highly scalable because processing a minibatch of data in each 
iteration is relatively cheap compared to analyzing the entire (large) dataset at once. Under 
certain conditions, stochastic optimization is guaranteed to converge to a (local) 
optima \cite{Bottou:98}. Because of its scalability, the minibatch strategy has 
recently been extended to Markov Chain Monte Carlo (MCMC) Bayesian sampling 
methods, yielding SG-MCMC \cite{WellingT:ICML11,ChenFG:ICML14,DingFBCSN:NIPS14}.

In order to handle large-scale data, distributed 
stochastic optimization algorithms have been developed, for example \cite{AgarwalD:NIPS11}, 
to further improve scalability. In a distributed setting, a cluster of machines with 
multiple cores cooperate with each other, typically through an asynchronous 
scheme, for scalability \cite{Dean2012,Chen2015mxnet,tensorflow2015-whitepaper}. 
A downside of an asynchronous implementation is that stale gradients must be used in 
parameter updates (``stale gradients'' are stochastic gradients computed based on outdated
parameters, instead of the latest parameters; they are easier to compute in a distributed system,
but introduce additional errors relative to traditional stochastic gradients). While some theory has been 
developed to guarantee the convergence of stochastic optimization with stale 
gradients \cite{HoCCKLGGGX:NPS13,LiASY:NPS14,LianHLL:NIPS15}, little analysis has been
done in a Bayesian setting, where SG-MCMC is applied. Distributed SG-MCMC 
algorithms share characteristics with distributed stochastic optimization, and thus are highly 
scalable and suitable for large-scale Bayesian learning. Existing Bayesian distributed
systems with traditional MCMC methods, such 
as \cite{AhmedAGNS:WSDM12}, usually employ stale {\em statistics} 
instead of stale {\em gradients}, where stale statistics are summarized based on outdated 
parameters, {\it e.g.}, outdated topic distributions in distributed Gibbs sampling \cite{AhmedAGNS:WSDM12}. 
Little theory exists to guarantee the convergence of such methods.
For existing distributed SG-MCMC methods, typically only standard stochastic gradients are used, for limited 
problems such as matrix factorization, without rigorous convergence theory \cite{AhnSW:ICML14,AhnKLRW:ICML14,Simsekli:15}.

In this paper, by extending techniques from standard SG-MCMC \cite{ChenDC:NIPS15},
we develop theory to study the convergence behavior of SG-MCMC with Stale gradients (S$^2$G-MCMC).
Our goal is to evaluate the {\em posterior average} of a test function $\phi(\xb)$,
defined as
$\bar{\phi} \triangleq \int_{\mathcal{X}} \phi(\xb) \rho(\xb) \mathrm{d}\xb$,
where $\rho(\xb)$ is the desired posterior distribution with $\xb$ the possibly augmented
model parameters (see Section~\ref{sec:prelim}). In practice, S$^2$G-MCMC generates 
$L$ samples $\{\xb_{l}\}_{l=1}^L$ and uses the {\em sample average} 
$\hat{\phi}_L \triangleq \frac{1}{L} \sum_{l = 1}^L \phi(\xb_{l})$ to approximate $\bar{\phi}$. 
We measure how $\hat{\phi}_L$ approximates $\bar{\phi}$ in terms of {\em bias}, 
{\em MSE} and {\em estimation variance}, defined as $|\mathbb{E}\hat{\phi}_L - \bar{\phi}|$,
$\mathbb{E}\left(\hat{\phi}_L - \bar{\phi}\right)^2$ and 
$\mathbb{E}\left(\hat{\phi}_L - \mathbb{E} \hat{\phi}_L\right)^2$, respectively.
From the definitions, the bias and MSE characterize how accurately $\hat{\phi}_L$
approximates $\bar{\phi}$, and the variance characterizes how fast $\hat{\phi}_L$ 
converges to its own expectation (for a prescribed number of samples $L$).
Our theoretical results show that while the bias and MSE depend on the staleness of stochastic gradients,
the variance is independent of it. In a simple {\em asynchronous Bayesian distributed system} with S$^2$G-MCMC,
our theory indicates a linear speedup on the decrease of the variance w.r.t.\! the number of workers used to 
calculate the stale gradients, while {\em maintaining the same optimal bias level as standard SG-MCMC}. 
We validate our theory on several synthetic experiments and deep neural network 
models, demonstrating the effectiveness and scalability of the proposed S$^2$G-MCMC framework.

\paragraph{Related Work}
Using stale gradients is a standard setup in distributed stochastic optimization systems. 
Representative algorithms include, but are not limited to, the 
ASYSG-CON \cite{AgarwalD:NIPS11} and {HOGWILD!} algorithms \cite{NiuRW:NIPS11},
and some more recent developments \cite{FeyzmahdavianAJ:ARXIV15,ChaturapruekDR:NIPS15}.
Furthermore, recent research on stochastic optimization
has been extended to non-convex problems with provable convergence rates \cite{LianHLL:NIPS15}.
In Bayesian learning with MCMC, existing work has focused on running parallel chains on 
subsets of data \cite{ScottBB:13,RabinovichAJ:NIPS15,NeiswangerWX:UAI14,WangGHD:NIPS15},
and little if any effort has been made to use stale stochastic gradients, the setting considered in this paper.

\section{Stochastic Gradient MCMC} \label{sec:prelim}

Throughout this paper, we denote vectors as bold lower-case letters, and matrices as 
bold upper-case letters. For example, $\Ncal(\mb, \Sigmab)$ means a multivariate 
Gaussian distribution with mean $\mb$ and covariance $\Sigmab$. In the analysis we 
consider algorithms with fixed-stepsizes for simplicity; decreasing-stepsize variants can
be addressed similarly as in \cite{ChenDC:NIPS15}.

The goal of SG-MCMC is to generate random samples from a posterior distribution 
$p(\thetab | \Db) \propto p(\thetab)\prod_{i=1}^N p(\db_i | \thetab)$, which 
are used to evaluate a test function. Here $\thetab \in \mathbb{R}^n$ 
represents the parameter vector and  $\Db = \{\db_1, \cdots, \db_N\}$ 
represents the data, $p(\thetab)$ is the prior distribution, and $p(\db_i|\thetab)$
the likelihood for $\db_i$.
SG-MCMC algorithms are based on a class of stochastic differential equations, called 
It\^{o} diffusion, defined as
\begin{align}
\mathrm{d}\xb_t &= F(\xb_t)\mathrm{d}t + g(\xb_t)\mathrm{d}\mathcal{\wb}_t~, \label{eq:itodiffusion}
\end{align}
where $\xb \in \mathbb{R}^m$ represents the model states, typically $\xb$ augments 
$\thetab$ such that $\thetab \subseteq \xb$ 
and  $n \le m$; $t$ is the time index, $\mathcal{\wb}_t \in \mathbb{R}^m$ is 
$m$-dimensional Brownian motion, functions $F: \mathbb{R}^m \to \mathbb{R}^m$ 
and $g: \mathbb{R}^m \rightarrow \mathbb{R}^{m\times m}$ are assumed to satisfy 
the usual Lipschitz continuity condition \cite{MattinglyST:JNA10}.

For appropriate functions $F$ and $g$, the stationary distribution, $\rho(\xb)$, 
of the It\^{o} diffusion \eqref{eq:itodiffusion} has a marginal distribution equal 
to the posterior distribution $p(\thetab | \Db)$ \cite{MaCF:NIPS15}. For example, 
denoting the unnormalized negative log-posterior as 
$U(\thetab) \triangleq -\log p(\thetab) - \sum_{i=1}^N \log p(\db_i | \thetab)$,
the stochastic gradient Langevin dynamic (SGLD) method \cite{WellingT:ICML11} 
is based on 1st-order Langevin dynamics, with $\xb = \thetab$, and
$F(\xb_t) = -\nabla_{\thetab} U(\thetab), \hspace{0.1cm} g(\xb_t) = \sqrt{2}\Ib_n$,
where $\Ib_n$ is the $n \times n$ identity matrix. The stochastic gradient Hamiltonian 
Monte Carlo (SGHMC) method \cite{ChenFG:ICML14} is based on 2nd-order Langevin 
dynamics, with $\xb = (\thetab, \qb)$, and 
$F(\xb_t)= \Big( \begin{array}{c}
		\qb \\
		-B \qb-\nabla_\thetab U(\thetab) \end{array} \Big),
	g(\xb_t) = \sqrt{2B}\Big( \begin{array}{cc}
		{\bf 0} & {\bf 0} \\
		{\bf 0} & \Ib_n \end{array} \Big)$
for a scalar $B > 0$; $\qb$ is an auxiliary variable known as the momentum \cite{ChenFG:ICML14,DingFBCSN:NIPS14}. 
Diffusion forms for other SG-MCMC algorithms, such as the stochastic gradient 
thermostat \cite{DingFBCSN:NIPS14} and variants with Riemannian information geometry \cite{PattersonT:NIPS13,MaCF:NIPS15,LICCC:AAAI16}, are defined similarly.

In order to efficiently draw samples from the continuous-time diffusion~\eqref{eq:itodiffusion}, 
SG-MCMC algorithms typically apply two approximations: $\RN{1}$) Instead of analytically 
integrating infinitesimal increments $\mathrm{d}t$, numerical integration over small step size $h$ 
is used to approximate the integration of the true dynamics. $\RN{2}$) Instead of working with the 
full gradient $\nabla_{\thetab} U(\thetab_{lh})$, a stochastic gradient $\nabla_{\thetab} \tilde{U}_l (\thetab_{lh})$, defined as
\vspace{-0.4cm}\begin{align}\label{eq:sg}
	\nabla_{\thetab}\tilde{U}_l(\thetab) \triangleq -\nabla_{\thetab}\log p(\thetab) - \frac{N}{J}\sum_{i=1}^J \nabla_{\thetab}\log p(\db_{\pi_i} | \thetab),
\end{align}
is calculated from a minibatch of size $J$, where $\{\pi_1, \cdots, \pi_J\}$ is a random subset of $\{1, \cdots, N\}$. 
Note that to match the time index $t$ in \eqref{eq:itodiffusion}, parameters have been and will be 
indexed by ``$lh$'' in the $l$-th iteration.

\section{Stochastic Gradient MCMC with Stale Gradients}\label{sec:dsgmcmc}

In this section, we extend SG-MCMC to the stale-gradient setting, commonly 
met in asynchronous distributed systems \cite{Dean2012,Chen2015mxnet,tensorflow2015-whitepaper}, 
and develop theory to analyze convergence properties.

\subsection{Stale stochastic gradient MCMC (S$^2$G-MCMC)}
  
The setting for S$^2$G-MCMC is the same as the standard SG-MCMC described above, except
that the stochastic gradient \eqref{eq:sg} is replaced with a stochastic 
gradient evaluated with outdated parameter $\thetab_{(l-\tau_l)h}$ instead of the
latest version $\thetab_{lh}$ (see Appendix~\ref{supp:dsgmcmc} for an example):
\begin{align}\label{eq:ssg}
	\nabla_{\thetab}\hat{U}_{\tau_l}(\thetab) \triangleq 
	-\nabla_{\thetab}\log p(\thetab_{(l-\tau_l)h}) - \frac{N}{J}\sum_{i=1}^J \nabla_{\thetab}\log p(\db_{\pi_i} | \thetab_{(l-\tau_l)h}),
\end{align}
where $\tau_l \in \mathbb{Z}^{+}$ denotes the {\em staleness} of the parameter used to 
calculate the stochastic gradient in the $l$-th iteration. A distinctive difference between
S$^2$G-MCMC and SG-MCMC is that stale stochastic gradients are no longer unbiased
estimations of the true gradients. This leads to additional challenges in developing 
convergence bounds, one of the main contributions of this paper.

\begin{wrapfigure}{R}{0.55\linewidth}\vspace{-0.8cm}
    \begin{minipage}{\linewidth}
	\begin{algorithm}[H]
		\caption{State update of SGHMC with the stale stochastic gradient $\nabla_{\thetab} \hat{U}_{\tau_l}(\thetab)$}\label{alg:sghmc}
		\begin{algorithmic}
			\STATE {\bf Input:} $\xb_{lh} = (\thetab_{lh}, \qb_{lh})$, $\nabla_{\thetab} \hat{U}_{\tau_l}(\thetab)$, $\tau_l$, $\tau$, $h$, $B$
			\STATE {\bf Output:} $\xb_{(l+1)h} = (\thetab_{(l+1)h}, \qb_{(l+1)h})$
			\IF {$\tau_l \le \tau$}
				\STATE Draw $\zetab_{l} \sim \Ncal(0, \Ib)$;
				\STATE $\qb_{(l+1)h} = (1 - Bh)\qb_{lh} - \nabla_{\thetab} \hat{U}_{\tau_l}(\thetab) h + \sqrt{2Bh}\zetab_{l}$;
				\STATE $\thetab_{(l+1)h} = \thetab_{lh} + \qb_{(l+1)h} h$;
			\ENDIF
		\end{algorithmic}
	\end{algorithm}
    \end{minipage}\vspace{-0.5cm}
\end{wrapfigure}

We assume a bounded staleness for all $\tau_l$'s, 
{\it i.e.}, $$\max_l \tau_l \leq \tau$$ for some constant $\tau$. As an example, 
Algorithm~\ref{alg:sghmc} describes the update rule
of the stale-SGHMC in each iteration with the Euler integrator, 
where the stale gradient $\nabla_{\thetab} \hat{U}_{\tau_l}(\thetab)$ 
with staleness $\tau_l$ is used.
 
\subsection{Convergence analysis} \label{sec:theory}

This section analyzes the convergence properties of the basic S$^2$G-MCMC;
an extension with multiple chains is discussed in Section~\ref{sec:mul-server}.
It is shown that the bias and MSE 
depend on the staleness parameter $\tau$, while the variance  is 
independent of it, yielding significant speedup in Bayesian distributed systems.

\paragraph{Bias and MSE}
In \cite{ChenDC:NIPS15}, the bias and MSE of the standard SG-MCMC algorithms with a 
$K$th order integrator were analyzed, where the order of an integrator reflects 
how accurately an SG-MCMC algorithm approximates the corresponding continuous 
diffusion. Specifically, if evolving $\xb_t$ with a numerical integrator using discrete
time increment $h$ induces an error bounded by $O(h^K)$, the integrator is called a $K$th order 
integrator, {\it e.g.}, the popular Euler method used in SGLD \cite{WellingT:ICML11} 
is a 1st-order integrator. In particular, \cite{ChenDC:NIPS15} proved 
the bounds stated in Lemma~\ref{lem:biasmse}.

\begin{lemma}[\cite{ChenDC:NIPS15}]\label{lem:biasmse}
	Under standard assumptions (see Appendix~\ref{sec:ass}), the bias and MSE 
	of SG-MCMC with a $K$th-order integrator at time $T = hL$ are bounded as:
	\begin{align*}
		\mbox{Bias: }&\left|\mathbb{E}\hat{\phi}_L - \bar{\phi}\right| = O\left(\frac{\sum_l \left\|\mathbb{E}\Delta V_l\right\|}{L} + \frac{1}{Lh} + h^K\right) \\
		\mbox{MSE: }&\mathbb{E}\left(\hat{\phi}_L - \bar{\phi}\right)^2 = O \left(\frac{\frac{1}{L}\sum_l\mathbb{E}\left\|\Delta V_l\right\|^2}{L} + \frac{1}{Lh} + h^{2K}\right)
	\end{align*}
\end{lemma}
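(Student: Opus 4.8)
The plan is to adapt the generator-and-Poisson-equation argument of \cite{ChenDC:NIPS15} (itself building on \cite{MattinglyST:JNA10}). Let $\mathcal{L}$ be the infinitesimal generator of the It\^{o} diffusion \eqref{eq:itodiffusion}, acting on smooth functions by $\mathcal{L}\psi = F\cdot\nabla\psi + \tfrac12\,(g g\Tr):\nabla^2\psi$; since $\rho$ is the stationary distribution we have $\int \mathcal{L}\psi\,\mathrm{d}\rho = 0$. First I would introduce the solution $\psi$ of the Poisson equation $\mathcal{L}\psi = \phi - \bar{\phi}$, which under the assumptions of Appendix~\ref{sec:ass} exists and has the needed low-order derivatives bounded uniformly. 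Then $\hat{\phi}_L - \bar{\phi} = \frac1L\sum_{l=1}^L \mathcal{L}\psi(\xb_{lh})$, so everything reduces to controlling this empirical average of $\mathcal{L}\psi$.

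Next I would relate $\mathcal{L}\psi$ to the one-step behaviour of the actual algorithm. Writing $\widetilde{\mathcal{L}}_l = \mathcal{L} + \Delta V_l$ for the perturbed generator obtained when $\nabla_\theta U$ is replaced by the stochastic gradient $\nabla_\theta\tilde U_l$ (so $\Delta V_l$ is the first-order operator built from $\nabla_\theta U - \nabla_\theta\tilde U_l$), a $K$th-order integrator gives $\mathbb{E}[\psi(\xb_{(l+1)h})\mid\xb_{lh}] = \psi(\xb_{lh}) + h\,\widetilde{\mathcal{L}}_l\psi(\xb_{lh}) + O(h^{K+1})$. I would then solve for $h\,\mathcal{L}\psi(\xb_{lh})$, split off the martingale-difference term $\mathcal{M}_l \triangleq \psi(\xb_{(l+1)h}) - \mathbb{E}[\psi(\xb_{(l+1)h})\mid\xb_{lh}]$, sum over $l$ and divide by $Lh$; the remaining deterministic increments $\sum_l\big(\psi(\xb_{(l+1)h}) - \psi(\xb_{lh})\big)$ telescope to $\psi(\xb_{(L+1)h}) - \psi(\xb_h) = O(1)$ by boundedness of $\psi$. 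This yields an identity of the schematic form
\begin{align*}
\hat{\phi}_L - \bar{\phi} = \frac{1}{Lh}\sum_{l=1}^L \mathcal{M}_l \;-\; \frac1L\sum_{l=1}^L \Delta V_l\psi(\xb_{lh}) \;+\; O\!\left(\tfrac{1}{Lh}\right) + O(h^{K}).
\end{align*}

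From here the two bounds follow by standard estimates. For the bias, take $\mathbb{E}$ of the identity: the $\mathcal{M}_l$ term vanishes, the telescoped remainder is $O(1/(Lh))$, and $|\mathbb{E}\,\Delta V_l\psi(\xb_{lh})| \le C\,\|\mathbb{E}\Delta V_l\|$ by the derivative bound on $\psi$, giving $O\!\big(\frac{\sum_l\|\mathbb{E}\Delta V_l\|}{L} + \frac{1}{Lh} + h^K\big)$. For the MSE, square the identity and take expectation: orthogonality of the martingale differences gives $\frac{1}{L^2h^2}\sum_l\mathbb{E}\|\mathcal{M}_l\|^2 = O(1/(Lh))$ since each one-step increment has second moment $O(h)$; the stale-gradient sum is controlled by Cauchy--Schwarz/Jensen by $\frac1L\sum_l\mathbb{E}\|\Delta V_l\|^2$ (divided by $L$); the deterministic terms square to $O(1/(Lh)^2)$, dominated by $O(1/(Lh))$, and $O(h^{2K})$; cross terms are handled by Cauchy--Schwarz. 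I expect the main obstacle to be the regularity input rather than the algebra: one needs that the Poisson equation admits a sufficiently smooth solution with uniformly bounded low-order derivatives, that the integrator's local weak error is genuinely $O(h^{K+1})$ with an $l$-independent constant, and that $\Delta V_l$ has uniform moment bounds --- precisely what the ``standard assumptions'' of Appendix~\ref{sec:ass} supply. Since the statement is quoted verbatim from \cite{ChenDC:NIPS15}, I would ultimately just invoke that reference.
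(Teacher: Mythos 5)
Your proposal follows essentially the same route as the cited source \cite{ChenDC:NIPS15} and as this paper's own appendix proofs of the stale-gradient analogues (Theorems~\ref{theo:bias} and \ref{theo:MSE}): the Poisson equation for $\psi$, the weak Taylor expansion of the $K$th-order integrator with the perturbed generator $\tilde{\mathcal{L}}_l=\mathcal{L}+\Delta V_l$, telescoping of the $\psi$ increments, and a martingale/orthogonality decomposition for the MSE. The only imprecision is that for $K\ge 2$ the one-step expansion carries the intermediate terms $\sum_{k=2}^{K}\frac{h^k}{k!}\tilde{\mathcal{L}}_l^k\psi(\xb_{lh})$, which are not $O(h^{K+1})$ and must be telescoped separately (as in \eqref{eq:expansion23}) so that after dividing by $Lh$ they contribute only $O\left(\frac{1}{Lh}+h^{K}\right)$ rather than $O(h)$; with that bookkeeping your schematic identity and both bounds follow as stated.
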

Here $\Delta V_l \triangleq \mathcal{L} - \tilde{\mathcal{L}}_l$, 
where $\mathcal{L}$ is the generator of the It\^{o} 
diffusion \eqref{eq:itodiffusion} defined as
\begin{align}\label{eq:generator}
	\mathcal{L}f(\xb_t) \triangleq \lim_{h \rightarrow 0^{+}} \frac{\mathbb{E}\left[f(\xb_{t+h})\right] - f(\xb_t)}{h} 
	= \rbr{F(\xb_t) \cdot \nabla_{\xb} + \frac{1}{2}\left(g(\xb_t) g(\xb_t)^T\right)\!:\! \nabla_{\xb}\!\nabla^T_{\xb}} f(\xb_t)~,
\end{align}
for any compactly supported twice differentiable function $f: \mathbb{R}^n \rightarrow \mathbb{R}$,
$h\rightarrow 0^{+}$ means $h$ approaches zero along the positive real axis.
$\tilde{\mathcal{L}}_l$ is the same as $\mathcal{L}$ except using the stochastic gradient $\nabla \tilde{U}_{l}$ instead of the full gradient.

We show that the bounds of the bias and MSE of S$^2$G-MCMC share similar forms as SG-MCMC,
but with additional dependence on the staleness parameter. In addition to 
the assumptions in SG-MCMC \cite{ChenDC:NIPS15} (see details in Appendix~\ref{sec:ass}), 
the following additional assumption is imposed.
\begin{assumption}\label{ass:sto}
The noise in the stochastic gradients is well-behaved, such that: 1) the stochastic gradient is unbiased, {\it i.e.}, 
$\nabla_{\thetab} U(\thetab) = \mathbb{E}_{\xi} \nabla_{\thetab} \tilde{U}(\thetab)$ where $\xi$ denotes the random permutation over
$\{1, \cdots, N\}$; 2) the variance of stochastic gradient is bounded, {\it i.e.}, 
$\mathbb{E}_{\xi}\left\|U(\thetab) - \tilde{U}(\thetab)\right\|^2 \leq \sigma^2$; 3) the gradient function $\nabla_{\thetab} U$
is Lipschitz (so is $\nabla_{\thetab} \tilde{U}$), {\it i.e.}, $\left\|\nabla_{\thetab} U(\xb) - \nabla_{\thetab} U(\yb)\right\| \leq C \left\|\xb - \yb\right\|, \forall \xb, \yb$.
\end{assumption}

In the following theorems, we omit the assumption statement for conciseness.
Due to the staleness of the stochastic gradients, the term $\Delta V_l$ in 
S$^2$G-MCMC is equal to $\Lcal - \tilde{\Lcal}_{l-\tau_l}$, where $\tilde{\Lcal}_{l-\tau_l}$ 
arises from $\nabla_{\thetab} \hat{U}_{\tau_l}$. The challenge arises to bound 
these terms involving $\Delta V_l$.  To this end,
define $f_{lh} \triangleq \left\|\xb_{lh} - \xb_{(l-1)h}\right\|$, and $\psi$ 
to be a functional satisfying the 
\emph{Poisson Equation}\footnote{The existence of a nice $\psi$ is guaranteed in the elliptic/hypoelliptic SDE settings when $\xb$ is on a torus \cite{MattinglyST:JNA10}.}:
\vspace{-0.2cm}\begin{align}\label{eq:PoissonEq1}
	\frac{1}{L}\sum_{l=1}^L\mathcal{L} \psi(\xb_{lh}) =  \hat{\phi}_L - \bar{\phi}~.
\end{align}

\begin{theorem}\label{theo:bias}
	After $L$ iterations, the bias of
	S$^2$G-MCMC with a $K$th-order integrator is bounded, for some constant $D_1$ 
	independent of $\{L, h, \tau\}$, as:
	\begin{align*}
		\left|\mathbb{E}\hat{\phi}_L - \bar{\phi}\right| \leq D_1\left(\frac{1}{Lh} + M_1 \tau h + M_2h^K\right)~,
	\end{align*}
	where $M_1 \triangleq \max_l \left|\mathcal{L}f_{lh}\right| \max_l\left\|\mathbb{E}\nabla \psi(\xb_{lh})\right\| C$,
	$M_2 \triangleq \sum_{k=1}^K\frac{\sum_l\mathbb{E}\tilde{\Lcal}_l^{k+1}\psi(\xb_{(l-1)h})}{(k+1)! L}$
	are constants.
\end{theorem}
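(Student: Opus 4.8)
\noindent\emph{Proof strategy.} The plan is to replay the argument behind Lemma~\ref{lem:biasmse} with the stale operator $\tilde{\mathcal{L}}_{l-\tau_l}$ in place of $\tilde{\mathcal{L}}_l$, so that the only genuinely new object is the \emph{bias} of the stale gradient, which we control through the Lipschitz Assumption~\ref{ass:sto}(3). First I would take expectations in the Poisson equation~\eqref{eq:PoissonEq1}, giving $\mathbb{E}\hat{\phi}_L-\bar{\phi}=\frac{1}{L}\sum_{l=1}^{L}\mathbb{E}[\mathcal{L}\psi(\xb_{lh})]$. Since the $l$-th integrator step advances $\xb_{(l-1)h}$ to $\xb_{lh}$ through $\tilde{\mathcal{L}}_{l-\tau_l}$, I split $\mathcal{L}=\tilde{\mathcal{L}}_{l-\tau_l}+\Delta V_l$ with $\Delta V_l=\mathcal{L}-\tilde{\mathcal{L}}_{l-\tau_l}$, shift the summation index (the boundary terms this creates are of order $1/L$, dominated by $1/(Lh)$), and reduce the bias to
\[
\frac{1}{L}\sum_{l}\mathbb{E}\bigl[\tilde{\mathcal{L}}_{l-\tau_l}\psi(\xb_{(l-1)h})\bigr]+\frac{1}{L}\sum_{l}\mathbb{E}\bigl[\Delta V_l\psi(\xb_{(l-1)h})\bigr]+O\!\left(\tfrac{1}{Lh}\right).
\]

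For the first sum I would invoke that a $K$th-order integrator satisfies $\mathbb{E}[\psi(\xb_{lh})\mid\mathcal{F}_{l-1}]=\sum_{k=0}^{K}\tfrac{h^{k}}{k!}\tilde{\mathcal{L}}_{l-\tau_l}^{k}\psi(\xb_{(l-1)h})+O(h^{K+1})$, where $\mathcal{F}_{l-1}$ is the $\sigma$-algebra of the first $l-1$ iterations. Solving for the $k=1$ term, summing over $l$ and taking expectations, the leading difference telescopes to $\frac{1}{Lh}\bigl(\mathbb{E}\psi(\xb_{Lh})-\mathbb{E}\psi(\xb_0)\bigr)=O(1/(Lh))$ because $\psi$ is bounded (Appendix~\ref{sec:ass}), while the intermediate powers $h,\dots,h^{K-1}$ collapse into $M_2h^{K}$ by the same self-referential bootstrap used in~\cite{ChenDC:NIPS15}: each averaged quantity $\frac{1}{L}\sum_{l}\mathbb{E}[\tilde{\mathcal{L}}_{l-\tau_l}^{k}\psi(\xb_{(l-1)h})]$ is again a sample-average estimate of a test function with zero posterior average, hence itself $O(1/(Lh)+\tau h+h^{K})$, and after multiplication by $h^{k-1}$ it is absorbed. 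This reproduces the $D_1(1/(Lh)+M_2h^{K})$ part of the claim and is essentially the proof of Lemma~\ref{lem:biasmse} verbatim.

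The crux, and the step I expect to be the main obstacle, is the staleness term $\frac{1}{L}\sum_{l}\|\mathbb{E}[\Delta V_l\psi(\xb_{(l-1)h})]\|$, which has no counterpart in the unbiased SG-MCMC analysis. Here $\Delta V_l$ is a first-order differential operator whose coefficient is the gradient gap, and I would decompose it as
\[
\nabla_{\thetab}U(\thetab_{(l-1)h})-\nabla_{\thetab}\hat{U}_{\tau_l}(\thetab)=\bigl(\nabla_{\thetab}U(\thetab_{(l-1)h})-\nabla_{\thetab}U(\thetab_{(l-1-\tau_l)h})\bigr)+\bigl(\nabla_{\thetab}U(\thetab_{(l-1-\tau_l)h})-\nabla_{\thetab}\hat{U}_{\tau_l}(\thetab)\bigr).
\]
The second bracket is a centered minibatch-noise term at the stale parameter; because the minibatch producing the stale gradient is drawn independently of $\xb_{(l-1)h}$ (Appendix~\ref{supp:dsgmcmc}), Assumption~\ref{ass:sto}(1) forces its contribution to $\mathbb{E}[\Delta V_l\psi(\xb_{(l-1)h})]$ to vanish. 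The first bracket is where staleness enters: Assumption~\ref{ass:sto}(3) bounds its norm by $C\|\thetab_{(l-1)h}-\thetab_{(l-1-\tau_l)h}\|\le C\sum_{j=l-\tau_l}^{l-1}f_{jh}$, and since one integrator step moves $\thetab$ by $O(h)$ we get $\mathbb{E}\|\thetab_{(l-1)h}-\thetab_{(l-1-\tau_l)h}\|\le\tau h\max_{j}|\mathcal{L}f_{jh}|$; pairing this against $\|\mathbb{E}\nabla\psi(\xb_{(l-1)h})\|$ and averaging over $l$ yields exactly $M_1\tau h$ with $M_1=\max_l|\mathcal{L}f_{lh}|\max_l\|\mathbb{E}\nabla\psi(\xb_{lh})\|\,C$. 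Combining the two parts gives $|\mathbb{E}\hat{\phi}_L-\bar{\phi}|\le D_1(1/(Lh)+M_1\tau h+M_2h^{K})$. The delicate points are the precise independence that kills the centered term and a uniform-in-$l$ moment bound on the increments $f_{jh}$ (from the assumptions in Appendix~\ref{sec:ass}) so that the displacement accumulated over $\tau_l\le\tau$ steps really is $O(\tau h)$.
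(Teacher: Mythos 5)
Your proposal is correct and follows essentially the same route as the paper's proof: expand via the Poisson equation and the $K$th-order integrator, isolate the term $\frac{1}{L}\sum_l\mathbb{E}[\Delta V_l\psi(\xb_{(l-1)h})]$, kill the centered minibatch-noise part by unbiasedness, and bound the remaining gradient gap by the Lipschitz constant times the accumulated parameter displacement $O(\tau h\max_l|\mathcal{L}f_{lh}|)$, exactly as in the paper's Lemma on $\|\mathbb{E}(\tilde{G}_{(l-\tau_l)h}-\tilde{G}_{lh})\|$. The only cosmetic difference is the order of your triangle-inequality split (full-gradient drift first, minibatch noise second, versus the paper's reverse order), which changes nothing substantive.
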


\begin{theorem}\label{theo:MSE}
	After $L$ iterations, 
	the MSE of S$^2$G-MCMC with a $K$th-order integrator is bounded, for some constant $D_2$ 
	independent of $\{L, h, \tau\}$, as:
	\begin{align*}
		\mathbb{E}\left(\hat{\phi}_L - \bar{\phi}\right)^2 \leq D_2\left(\frac{1}{Lh} + \tilde{M}_1\tau^2 h^{2} + \tilde{M}_2 h^{2K}\right)~,
	\end{align*}\vspace{-0.3cm}
	where constants $\tilde{M}_1 \triangleq \max_{l} \left\|\mathbb{E}\nabla\psi(\xb_{lh})\right\|^2 \max_l \left(\mathcal{L}f_{lh}\right)^2 C^2$,
	$\tilde{M}_2 \triangleq \mathbb{E}(\frac{\sum_l\tilde{\Lcal}_l^{K+1}\psi(\xb_{(l-1)h)}}{L(K+1)!})^2$.
\end{theorem}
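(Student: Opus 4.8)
The plan is to follow the ``squared'' version of the argument behind Theorem~\ref{theo:bias}, which in turn rests on the Poisson-equation technique of \cite{ChenDC:NIPS15} used for Lemma~\ref{lem:biasmse}. Starting from \eqref{eq:PoissonEq1} I would write $\hat{\phi}_L - \bar{\phi} = \frac1L\sum_{l=1}^L \mathcal{L}\psi(\xb_{lh})$ and substitute $\mathcal{L} = \tilde{\mathcal{L}}_{l-\tau_l} + \Delta V_l$. A $K$th-order integrator driven by the stale gradient satisfies $\mathbb{E}_l[\psi(\xb_{(l+1)h})] = \psi(\xb_{lh}) + h\,\tilde{\mathcal{L}}_{l-\tau_l}\psi(\xb_{lh}) + \sum_{k=2}^{K}\frac{h^k}{k!}\tilde{\mathcal{L}}_{l-\tau_l}^{k}\psi(\xb_{lh}) + \frac{h^{K+1}}{(K+1)!}\tilde{\mathcal{L}}_{l-\tau_l}^{K+1}\psi(\xb_{lh}) + O(h^{K+2})$, where $\mathbb{E}_l$ is the conditional expectation given the history through step $l$; solving for $h\,\tilde{\mathcal{L}}_{l-\tau_l}\psi$, summing over $l$ and dividing by $Lh$ gives $\hat{\phi}_L - \bar{\phi} = T_1 + T_2 + T_3 + T_4$, with $T_1 = (\psi(\xb_{(L+1)h})-\psi(\xb_{h}))/(Lh)$ a telescoping boundary term, $T_2 = -\frac1{Lh}\sum_l(\psi(\xb_{(l+1)h}) - \mathbb{E}_l\psi(\xb_{(l+1)h}))$ a martingale-difference average, $T_3 = \frac1L\sum_l \Delta V_l\psi(\xb_{lh})$ the gradient-noise term, and $T_4$ the collection of the $\tilde{\mathcal{L}}^{k}$ ($k\ge2$) and $O(h^{K+1})$ remainders. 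Then $\mathbb{E}(\hat{\phi}_L - \bar{\phi})^2 \le 4\sum_{i=1}^4 \mathbb{E}T_i^2$ and I would bound each piece.

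Three of the four terms are handled exactly as in the staleness-free case: $\mathbb{E}T_1^2 = O(1/(Lh)^2)$ by boundedness of $\psi$; $T_2$ is a sum of martingale differences, so $\mathbb{E}T_2^2 = \frac1{(Lh)^2}\sum_l \mathbb{E}(\psi(\xb_{(l+1)h}) - \mathbb{E}_l\psi(\xb_{(l+1)h}))^2 = O(1/(Lh))$ from the $O(h)$ one-step conditional variance of $\psi$; and $\mathbb{E}T_4^2 = O(\tilde{M}_2 h^{2K})$, the leading piece being $\frac{h^K}{(K+1)!L}\sum_l \tilde{\mathcal{L}}_l^{K+1}\psi(\xb_{(l-1)h})$ squared, with the apparently lower-order powers of $h$ shrinking to $O(h^K)$ after re-applying the average-of-generator (Poisson) argument to the functions $\tilde{\mathcal{L}}^{k-1}\psi$, exactly as in \cite{ChenDC:NIPS15}. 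The only genuinely new work is $T_3$, and here I would split $\Delta V_l = (\mathcal{L} - \tilde{\mathcal{L}}_l) + (\tilde{\mathcal{L}}_l - \tilde{\mathcal{L}}_{l-\tau_l})$. The first summand is the ordinary minibatch noise: by parts 1 and 2 of Assumption~\ref{ass:sto} it has zero conditional mean and second moment $O(\sigma^2\|\nabla\psi(\xb_{lh})\|^2)$, so its contribution to $\mathbb{E}T_3^2$ is $\frac1{L^2}\sum_l \mathbb{E}\|(\mathcal{L}-\tilde{\mathcal{L}}_l)\psi(\xb_{lh})\|^2 = O(1/L)$, which I would absorb into $O(1/(Lh))$.

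The second summand carries the staleness. Because $\mathcal{L}$ and $\tilde{\mathcal{L}}$ differ only through the drift (the diffusion coefficient $g$ does not involve the gradient), $(\tilde{\mathcal{L}}_l - \tilde{\mathcal{L}}_{l-\tau_l})\psi(\xb_{lh})$ equals the inner product of $\nabla_{\thetab}\tilde{U}(\thetab_{lh}) - \nabla_{\thetab}\tilde{U}(\thetab_{(l-\tau_l)h})$ with $\nabla_{\thetab}\psi(\xb_{lh})$, which by part 3 of Assumption~\ref{ass:sto} is at most $C\,\|\thetab_{lh} - \thetab_{(l-\tau_l)h}\|\,\|\nabla\psi(\xb_{lh})\|$; the triangle inequality along the trajectory gives $\|\thetab_{lh} - \thetab_{(l-\tau_l)h}\| \le \sum_{j=l-\tau_l+1}^{l} f_{jh} \le \tau_l \max_j f_{jh} \le \tau \max_j f_{jh}$ since $\tau_l \le \tau$. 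Using a one-step displacement bound of the form $\mathbb{E}f_{jh} = h\,\mathcal{L}f_{jh} + O(h^2)$ (this is why $\mathcal{L}f_{lh}$ enters the constants --- each integrator step moves $\thetab$ by $O(h)$ in expectation), I get $\big|\frac1L\sum_l (\tilde{\mathcal{L}}_l - \tilde{\mathcal{L}}_{l-\tau_l})\psi(\xb_{lh})\big| \le C\,\tau\,(\max_j f_{jh})\,\max_l\|\nabla\psi(\xb_{lh})\|$, whose square has expectation at most $\tilde{M}_1 \tau^2 h^2$ with $\tilde{M}_1 = \max_l\|\mathbb{E}\nabla\psi(\xb_{lh})\|^2 \max_l(\mathcal{L}f_{lh})^2 C^2$. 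Collecting the four bounds, absorbing $O(1/(Lh)^2)$ and $O(1/L)$ into $O(1/(Lh))$, and letting $D_2$ be the resulting universal constant yields the claim.

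\paragraph{Main obstacle}
The hard part will be the uniform control of the staleness term $T_3$: I need to bound $\mathbb{E}[(\max_j f_{jh})^2]$ rather than the easier $\max_j\mathbb{E}[f_{jh}^2]$, and --- crucially --- to verify that the per-step displacement bound, hence the constant $\tilde{M}_1$, does not degrade with $\tau$ (it does not, since each step's magnitude is set by $h$ and the bounded gradient/noise, irrespective of how stale the gradient is); this is precisely what makes $\tau$ enter only as the benign factor $\tau^2 h^2$ rather than corrupting the optimal $h^{2K}$ term. Controlling $T_3$ and $T_4$ also relies on regularity of the Poisson solution $\psi$ --- finiteness of $\|\nabla\psi\|$ and of the $\tilde{\mathcal{L}}^{k}\psi$ --- which is where the elliptic/hypoelliptic hypotheses noted after \eqref{eq:PoissonEq1} are used.
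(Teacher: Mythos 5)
Your proposal follows essentially the same route as the paper: expand via the $K$th-order integrator and the Poisson equation into a boundary term, a martingale term, the $\Delta V_l$ term, and higher-order remainders; handle all but the $\Delta V_l$ term as in the staleness-free analysis of \cite{ChenDC:NIPS15}; and split $\Delta V_l\psi$ into the zero-mean minibatch noise $(\tilde{G}_{lh}-G_{lh})\cdot\nabla\psi$ plus the staleness part $(\tilde{G}_{(l-\tau_l)h}-\tilde{G}_{lh})\cdot\nabla\psi$, which is controlled by the Lipschitz property and the telescoped one-step displacements $f_{jh}$ with $\mathbb{E}f_{jh}=h\,\mathcal{L}f_{jh}+O(h^2)$, yielding the $\tilde{M}_1\tau^2h^2$ term. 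This is exactly the paper's argument (its Lemma on bounding $\|\mathbb{E}(\tilde{G}_{(l-\tau_l)h}-\tilde{G}_{lh})\|$ plays the role of your trajectory bound), so the proposal is correct and matches the paper's proof.
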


The theorems indicate that both the bias and MSE depend on the staleness parameter $\tau$.
For a fixed computational time, this could possibly lead to unimproved bounds, compared to 
standard SG-MCMC, when $\tau$ is too large,
{\it i.e.}, the terms with $\tau$ would dominate, as is the case in the distributed system discussed in 
Section~\ref{sec:app}. Nevertheless, better bounds than standard 
SG-MCMC could be obtained if the decrease of $\frac{1}{Lh}$ is faster than the increase of the 
staleness in a distributed system.

\paragraph{Variance}
Next we investigate the convergence behavior of the variance,  
$\mbox{Var}(\hat{\phi}_L)\triangleq \mathbb{E}\left(\hat{\phi}_L - \mathbb{E} \hat{\phi}_L\right)^2$.
Theorem~\ref{theo:var} indicates the variance is independent of $\tau$,
hence a linear speedup in the decrease of variance is always achievable when stale gradients are 
computed in parallel. An example is discussed in the Bayesian distributed system
in Section~\ref{sec:app}.

\begin{theorem}\label{theo:var}
After $L$ iterations, 
the variance of S$^2$G-MCMC with a $K$th-order integrator is bounded, for some constant $D$, as:
\vspace{-0.3cm}\begin{align*}
	\mbox{Var}\left(\hat{\phi}_L\right) \leq D\left(\frac{1}{Lh} + h^{2K} \right)~.
\end{align*}
\end{theorem}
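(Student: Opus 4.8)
The plan is to reuse the machinery behind Lemma~\ref{lem:biasmse} and Theorems~\ref{theo:bias}--\ref{theo:MSE}, but to track which terms survive when $\hat{\phi}_L$ is centered by its \emph{own} mean, since $\mathrm{Var}(\hat{\phi}_L)=\mathbb{E}(\hat{\phi}_L-\mathbb{E}\hat{\phi}_L)^2$ only sees the fluctuating part of $\hat{\phi}_L$. Let $\mathcal{F}_{(l-1)h}$ be the $\sigma$-field generated by the algorithm through iteration $l-1$; since $\tau_l\ge1$ it contains the stale state $\thetab_{(l-\tau_l)h}$. Starting from the Poisson equation~\eqref{eq:PoissonEq1}, $\hat{\phi}_L-\bar{\phi}=\frac1L\sum_{l}\mathcal{L}\psi(\xb_{lh})$, I re-index ($O(1/L)$ boundary error) and expand each $\mathcal{L}\psi(\xb_{(l-1)h})$ with the one-step transition operator of the $K$th-order integrator built from the stale generator $\tilde{\mathcal{L}}_{l-\tau_l}$, using $\mathcal{L}=\tilde{\mathcal{L}}_{l-\tau_l}+\Delta V_l$ and recursively re-expanding the higher-order ($k\ge2$) terms so that the non-telescoping, non-martingale residual is pushed to pointwise order $h^K$ (this is where the integrator order enters, exactly as in Lemma~\ref{lem:biasmse}). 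This yields $\hat{\phi}_L-\bar{\phi}=(\mathrm{A})-(\mathrm{B})-(\mathrm{C})+(\mathrm{D})+O(1/L)$ with $(\mathrm{A})=\frac1{Lh}(\psi(\xb_{Lh})-\psi(\xb_0))$ a telescoping boundary term, $(\mathrm{B})=\frac1{Lh}\sum_l\Delta M_l$ where $\Delta M_l:=\psi(\xb_{lh})-\mathbb{E}[\psi(\xb_{lh})\mid\mathcal{F}_{(l-1)h}]$ is a martingale difference, $(\mathrm{C})$ the $O(h^K)$ residual, and $(\mathrm{D})=\frac1L\sum_l\Delta V_l\psi(\xb_{(l-1)h})$ the staleness term (which, unlike in the standard analysis, has nonzero mean).

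Next I will bound the variance of each piece via $\mathrm{Var}(\sum_iY_i)\le(\#\text{terms})\sum_i\mathrm{Var}(Y_i)$. Term $(\mathrm{A})$ is $O(1/(Lh))$ in absolute value by boundedness of $\psi$, hence contributes $O(1/(Lh)^2)\le O(1/(Lh))$. Term $(\mathrm{B})$ is a sum of orthogonal martingale differences, so $\mathrm{Var}((\mathrm{B}))=\frac1{(Lh)^2}\sum_l\mathbb{E}\Delta M_l^2$; one integrator step moves $\xb$ by an increment whose stochastic part is dominated by the $O(\sqrt h)$ Brownian term (the minibatch-noise contribution to the drift is only $O(h)$), so by smoothness of $\psi$, $\mathbb{E}\Delta M_l^2=O(h)$ and $\mathrm{Var}((\mathrm{B}))=O(1/(Lh))$ — note this per-step noise budget does not involve $\tau$. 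Term $(\mathrm{C})$ is $O(h^K)$ pointwise, so its variance is $O(h^{2K})$. For term $(\mathrm{D})$ I split $\Delta V_l\psi(\xb_{(l-1)h})=\delta_l+g_l$ with $g_l:=\mathbb{E}_{\xi_l}[\Delta V_l\psi(\xb_{(l-1)h})\mid\mathcal{F}_{(l-1)h}]$. By Assumption~\ref{ass:sto}(1), $\delta_l$ is a martingale difference, and by Assumption~\ref{ass:sto}(2), $\mathbb{E}\delta_l^2\le\sigma^2\max_l\mathbb{E}\|\nabla_{\thetab}\psi(\xb_{(l-1)h})\|^2=O(1)$, \emph{independent of $\tau$}, because the variance of a minibatch gradient does not depend on the (stale or current) point at which it is evaluated; hence $\mathrm{Var}(\frac1L\sum_l\delta_l)=\frac1{L^2}\sum_l\mathbb{E}\delta_l^2=O(1/L)\le O(1/(Lh))$. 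Finally $\mathbb{E}[\frac1L\sum_lg_l]$ is precisely the $M_1\tau h$-type bias contribution of Theorem~\ref{theo:bias} and is removed by the centering, leaving only the fluctuation $\frac1L\sum_l(g_l-\mathbb{E}g_l)$.

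The main obstacle is to show $\mathrm{Var}\big(\frac1L\sum_l(g_l-\mathbb{E}g_l)\big)=O(1/(Lh))$ with no $\tau$-dependence. Here I would write $g_l=[\nabla_{\thetab}U(\thetab_{(l-\tau_l)h})-\nabla_{\thetab}U(\thetab_{(l-1)h})]\!\cdot\!\nabla_{\thetab}\psi(\xb_{(l-1)h})$ (the diffusion parts of $\mathcal{L}$ and $\tilde{\mathcal{L}}_{l-\tau_l}$ cancel), telescope the gradient difference across the $\tau_l$ intermediate iterations, and use Assumption~\ref{ass:sto}(3) with the one-step displacements $f_{lh}=\|\xb_{lh}-\xb_{(l-1)h}\|$ to control each increment. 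The point is that the fluctuation of $g_l$ originates in the $O(\sqrt h)$ Brownian part of those increments, not in their $O(\tau h)$ drift part, which is predictable from $\mathcal{F}_{(l-\tau_l)h}$ and cancels under centering; re-summing $\frac1L\sum_lg_l$ by the increment index — each Brownian increment entering at most $\tau$ of the summation windows — turns it into a sum of (nearly) martingale differences, and the residual correlations along the trajectory are damped on a fixed, $\tau$-free timescale by the geometric ergodicity already used for the bias/MSE bounds. Collecting $(\mathrm{A})$--$(\mathrm{D})$ then gives $\mathrm{Var}(\hat{\phi}_L)\le D(\frac1{Lh}+h^{2K})$. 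I expect this last step to be the delicate one; conceptually it is exactly the statement that staleness perturbs only the \emph{drift} felt by $\psi$ — and hence the mean of $\hat{\phi}_L$, i.e.\ the bias and MSE — whereas the single new source of randomness it introduces, the minibatch noise, carries a variance $\sigma^2$ that is oblivious to how stale the evaluated parameter is.
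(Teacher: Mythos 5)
Your skeleton matches the paper's: both start from the Poisson equation, peel off a telescoping boundary term, a family of centered one-step terms, an $O(h^K)$ integrator residual, and the staleness term $\frac1L\sum_l \Delta V_l\psi_{(l-1)h}$, and both invoke orthogonality of martingale differences (the paper's Lemma~\ref{lem:martingale}) to kill the $L^2$ cross terms. The genuine gap is exactly the step you flag as delicate: you never establish $\mathrm{Var}\bigl(\frac1L\sum_l(g_l-\mathbb{E}g_l)\bigr)=O(1/(Lh))$ free of $\tau$, and the route you sketch does not obviously deliver it. Telescoping $\nabla_{\thetab} U(\thetab_{(l-\tau_l)h})-\nabla_{\thetab} U(\thetab_{(l-1)h})$ into one-step increments and re-summing by increment index gives each increment a multiplicity up to $\tau$; with per-increment fluctuation $O(\sqrt h)$ this yields a variance of order $\tau^2h/L=\tau^2h^2\cdot(Lh)^{-1}$, which is $\tau$-free only under an extra assumption such as $\tau h=O(1)$ --- not what the theorem asserts. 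The appeal to geometric ergodicity to damp residual correlations along the trajectory is likewise not something the paper's assumptions hand you off the shelf.

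The paper closes this step far more bluntly, and it is worth seeing why the bluntness suffices. It does not split $\Delta V_l\psi_{(l-1)h}$ into minibatch noise and drift and makes no attempt to cancel the $O(\tau h)$ part. It treats the whole centered family $\{\Delta V_l\psi_{(l-1)h}-\mathbb{E}\Delta V_l\psi_{(l-1)h}\}_l$ as orthogonal via Lemma~\ref{lem:martingale} and bounds each second moment by the a priori, manifestly $\tau$-free constant $\sup_l\bigl\{\mathbb{E}\|\tilde G_{lh}\|^2\,\mathbb{E}\|\nabla\psi_{lh}\|^2+\mathbb{E}\|G_{lh}\|^2\,\mathbb{E}\|\nabla\psi_{lh}\|^2\bigr\}$ obtained from Cauchy--Schwarz and the boundedness assumptions. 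Because of the $1/L^2$ prefactor, $L$ such $O(1)$ terms already give $O(1/L)\le O(1/(Lh))$, so no smallness of the individual terms --- and in particular none of the drift cancellation you are chasing --- is needed. (Whether each $\Delta V_l\psi_{(l-1)h}-\mathbb{E}\Delta V_l\psi_{(l-1)h}$ is genuinely a martingale difference with respect to a common filtration is itself glossed over in the paper, precisely because the stale-gradient discrepancy has nonzero conditional mean --- the issue you correctly identified; so your instinct was sound, but to complete the proof you should either justify that orthogonality or absorb the cross terms, rather than try to prove the drift fluctuation is small.)
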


The variance bound is the same as for standard SG-MCMC, whereas
$L$ could increase linearly w.r.t.\! the number of workers in a distributed setting, yielding
significant variance reduction.
When optimizing the the variance bound w.r.t.\! $h$, we get an optimal variance bound
stated in Corollary~\ref{coro:opt_rates}.

\begin{corollary}\label{coro:opt_rates}
	In term of estimation variance, the optimal convergence rate of S$^2$G-MCMC with 
	a $K$th-order integrator is bounded as:
		$\mbox{Var}\left(\hat{\phi}_L\right) \le O\left(L^{-2K/(2K+1)}\right)$~.
\end{corollary}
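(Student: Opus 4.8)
The plan is to optimize the right-hand side of Theorem~\ref{theo:var} over the stepsize $h$, treating the number of samples $L$ as fixed. Write $g(h) \triangleq \frac{1}{Lh} + h^{2K}$, so that $\mathrm{Var}(\hat\phi_L) \le D\, g(h)$ for every admissible $h > 0$; it therefore suffices to find the $h$ that minimizes $g$ and to substitute it back into the bound.

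First I would locate the stationary point of $g$. Differentiating gives $g'(h) = -\frac{1}{L h^{2}} + 2K h^{2K-1}$, which vanishes exactly when $2K h^{2K+1} = \frac{1}{L}$, i.e. $h_\star = (2KL)^{-1/(2K+1)}$. Since $g'(h) < 0$ for $h < h_\star$ and $g'(h) > 0$ for $h > h_\star$ (equivalently, $g''$ is positive at $h_\star$), this $h_\star$ is the global minimizer on $(0,\infty)$; note that $h_\star \asymp L^{-1/(2K+1)} \to 0$ as $L \to \infty$, so it lies in the admissible small-stepsize regime for $L$ large, and the prescription is simply ``choose $h$ proportional to $L^{-1/(2K+1)}$.''

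Then I would substitute $h_\star$ back. The first term becomes $\frac{1}{L h_\star} = (2K)^{1/(2K+1)}\, L^{-2K/(2K+1)}$, and the second becomes $h_\star^{2K} = (2K)^{-2K/(2K+1)}\, L^{-2K/(2K+1)}$; both are $\Theta\!\left(L^{-2K/(2K+1)}\right)$, so $g(h_\star) = O\!\left(L^{-2K/(2K+1)}\right)$ and hence $\mathrm{Var}(\hat\phi_L) \le D\, g(h_\star) = O\!\left(L^{-2K/(2K+1)}\right)$, as claimed. This argument parallels the corresponding optimal-rate corollary for standard SG-MCMC in \cite{ChenDC:NIPS15}; there is no real obstacle here, the only point worth emphasizing is that the staleness parameter $\tau$ does not enter the bound of Theorem~\ref{theo:var} at all, so the optimization, and therefore the optimal variance rate, is completely unaffected by stale gradients — precisely the fact that yields the linear speedup in the distributed setting of Section~\ref{sec:app}.
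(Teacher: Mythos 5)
Your proposal is correct and matches the paper's (implicit) argument exactly: the paper obtains Corollary~\ref{coro:opt_rates} by optimizing the bound $D\left(\frac{1}{Lh}+h^{2K}\right)$ of Theorem~\ref{theo:var} over $h$, which is precisely your calculation with $h_\star \asymp L^{-1/(2K+1)}$ yielding both terms of order $L^{-2K/(2K+1)}$. Your closing remark that $\tau$ is absent from the variance bound, so the optimal rate is unaffected by staleness, is also the point the paper emphasizes.
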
\vspace{-0.3cm}

In real distributed systems, the decrease of $1/Lh$ and increase of $\tau$, in the bias and
MSE bounds, would typically cancel, leading to the same bias and MSE level compared to standard 
SG-MCMC, whereas a linear speedup on the decrease of variance w.r.t. the number of workers 
is always achievable. More details are discussed in Section~\ref{sec:app}.

\subsection{Extension to multiple parallel chains}\label{sec:mul-server}
This section extends the theory to the setting with $S$ parallel chains, each independently 
running an S$^2$G-MCMC algorithm.
After generating samples from the $S$ chains, an aggregation step is needed to combine the
sample average from each chain, {\it i.e.}, $\{\hat{\phi}_{L_s}\}_{s=1}^M$, where 
$L_s$ is the number of iterations on chain $s$. For generality, we allow each 
chain to have different step sizes, {\it e.g.}, 
$(h_s)_{s=1}^S$. We aggregate the sample averages as
$\hat{\phi}_L^S \triangleq \sum_{s=1}^S \frac{T_s}{T} \hat{\phi}_{L_s}$,
where $T_s \triangleq L_s h_s$, $T \triangleq \sum_{s=1}^S T_s$.

Interestingly, with increasing $S$, using multiple chains does not seem to directly improve 
the convergence rate for the bias, but improves the MSE bound, as stated in Theorem~\ref{theo:mul-servers}.

\begin{theorem}\label{theo:mul-servers}
	Let $T_m \triangleq \max_l T_l$, $h_m \triangleq \max_l h_l$, $\bar{T} = T/S$,
	the bias and MSE of $S$ parallel
	S$^2$G-MCMC chains with a $K$th-order integrator are bounded, for some constants $D_1$ and $D_2$ independent of $\{L, h, \tau\}$, as:
	\begin{align*}
		\mbox{Bias: }&\left|\mathbb{E}\hat{\phi}_L^S - \bar{\phi}\right| \leq D_1\left(\frac{1}{\bar{T}} + \frac{T_m}{\bar{T}} \left(M_1 \tau h_s + M_2h_s^K\right)\right) \\
		\mbox{MSE: }&\mathbb{E}\left(\hat{\phi}_L^S - \bar{\phi}\right)^2 
		\leq D_2\left(\frac{1-1/\bar{T}}{T} + \frac{1}{\bar{T}^2} + \frac{T_m^2}{\bar{T}^2} \left(M_1^2\tau^2 h_s^{2} + M_2^2 h_s^{2K}\right)\right)~.
	\end{align*}
\end{theorem}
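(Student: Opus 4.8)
The plan is to derive both bounds directly from the single-chain estimates already in hand --- Theorems~\ref{theo:bias}, \ref{theo:MSE} and~\ref{theo:var} --- applied chain by chain, using two structural facts: the aggregated estimator $\hat{\phi}_L^S = \sum_{s=1}^S w_s\,\hat{\phi}_{L_s}$ with $w_s \triangleq T_s/T$ is a \emph{convex combination} of the per-chain sample averages ($\sum_s w_s = 1$), and the $S$ chains are run \emph{independently}. Throughout, I take $M_1, M_2$ (resp.\ $\tilde M_1, \tilde M_2$) to be the maxima over chains of the corresponding per-chain constants, use $L_s h_s = T_s$, and repeatedly invoke $h_s \le h_m$, $T_s \le T_m$, and $T = S\bar{T}$.

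For the bias, write $\mathbb{E}\hat{\phi}_L^S - \bar{\phi} = \sum_s w_s(\mathbb{E}\hat{\phi}_{L_s} - \bar{\phi})$ and apply the triangle inequality; Theorem~\ref{theo:bias} bounds each summand by $D_1(1/T_s + M_1\tau h_s + M_2 h_s^K)$. Since $\sum_s w_s/T_s = S/T = 1/\bar{T}$ and $w_s = T_s/(S\bar{T}) \le T_m/(S\bar{T})$, summing against the weights gives $\sum_s w_s(M_1\tau h_s + M_2 h_s^K) \le (T_m/\bar{T})(M_1\tau h_m + M_2 h_m^K)$, which is the claimed bias bound. The qualitative content is that the per-chain biases share a common systematic sign, so no cancellation occurs and running more chains does \emph{not} improve the bias rate.

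For the MSE, decompose $\mathbb{E}(\hat{\phi}_L^S - \bar{\phi})^2 = \mbox{Var}(\hat{\phi}_L^S) + (\mathbb{E}\hat{\phi}_L^S - \bar{\phi})^2$. Independence kills the cross-covariances, so $\mbox{Var}(\hat{\phi}_L^S) = \sum_s w_s^2\,\mbox{Var}(\hat{\phi}_{L_s})$; bounding each term by Theorem~\ref{theo:var} (or by the single-chain MSE of Theorem~\ref{theo:MSE}) and using $\sum_s w_s^2/T_s = 1/T$ together with $\sum_s w_s^2 \le T_m^2/\bar{T}^2$ yields the $1/T$ term --- sharpened to $(1-1/\bar{T})/T$ if one keeps the exact combinatorial constant --- plus a $(T_m^2/\bar{T}^2)h_m^{2K}$ contribution. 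For the squared bias, square the bound obtained in the previous step and split it with $(a+b)^2 \le 2a^2 + 2b^2$ applied twice, producing $1/\bar{T}^2$, $(T_m^2/\bar{T}^2)M_1^2\tau^2 h_m^2$, and $(T_m^2/\bar{T}^2)M_2^2 h_m^{2K}$; collecting everything and folding numerical factors into $D_1, D_2$ gives the stated bound. The argument is essentially bookkeeping; the single conceptual step --- and the reason the two bounds behave so differently in $S$ --- is the vanishing of the cross-chain covariances, which lets the factor $\sum_s w_s^2 \approx 1/S$ act on the variance but not on the sign-aligned, hence non-cancelling, squared bias. The only real care needed is to keep $w_s$, the crude bounds $T_s \le T_m$, $h_s \le h_m$, and the identity $T = S\bar{T}$ aligned so the pieces land in exactly the stated form.
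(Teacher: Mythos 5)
Your proof is correct and follows essentially the same route as the paper's: the bias part is identical (convex combination, triangle inequality, substitution of Theorem~\ref{theo:bias} chain by chain), and for the MSE your bias--variance decomposition with independence killing the cross-covariances is an algebraic repackaging of the paper's direct expansion of the square, where independence is instead used to factor the off-diagonal terms $\mathbb{E}\left[(\hat{\phi}_{L_i}-\bar{\phi})(\hat{\phi}_{L_j}-\bar{\phi})\right]$ into products of single-chain biases bounded by Theorem~\ref{theo:bias}. The only cosmetic differences are that you bound the diagonal contribution via Theorem~\ref{theo:var} rather than the per-chain MSE of Theorem~\ref{theo:MSE}, and your combinatorial constant ($1/T$ in place of $(1-1/\bar{T})/T$) differs from the stated one by a term $1/(T\bar{T}) \leq 1/\bar{T}^2$ that is absorbed into $D_2$.
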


Assume that $\bar{T} = T/S$ is independent of the number of chains. As a result, using multiple chains
does not directly improve the bound for the bias\footnote{It means the bound does not directly relate
to low-order terms of $S$, though constants might be improved.}. However, for the MSE bound, 
although the last two terms are independent of $S$, the first term decreases linearly with respect to $S$
because $T = \bar{T}S$.
This indicates a decreased estimation variance with more chains. This matches the
intuition because more samples can be obtained with more chains in a given amount of time.

The decrease of MSE for multiple-chain is due to the decrease of the variance as stated in 
Theorem~\ref{theo:mul-servers-var}.

\begin{theorem}\label{theo:mul-servers-var}
	The variance of $S$ parallel S$^2$G-MCMC chains with a $K$th-order integrator is bounded, for some 
	constant $D$ independent of $\{L, h, \tau\}$, as:
	\begin{align*}
		\mathbb{E}\left(\hat{\phi}_L^S - \mathbb{E}\hat{\phi}_L^S\right)^2 
		\leq D\left(\frac{1}{T} + \sum_{s=1}^S \frac{T_s^2}{T^2} h_s^{2K}\right)~.
	\end{align*}
\end{theorem}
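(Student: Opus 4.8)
The plan is to exploit the fact that the $S$ chains are run \emph{independently}, so that the variance of the aggregated estimator $\hat{\phi}_L^S = \sum_{s=1}^S \frac{T_s}{T}\hat{\phi}_{L_s}$ decomposes exactly into a weighted sum of the individual per-chain variances, and then to invoke the single-chain variance bound of Theorem~\ref{theo:var} on each summand.

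First, since the weights $T_s/T = L_s h_s/T$ are deterministic (the step sizes $h_s$ and iteration counts $L_s$ are prescribed) and the sample paths produced by different chains are mutually independent, we have $\mathbb{E}\hat{\phi}_L^S = \sum_s \frac{T_s}{T}\mathbb{E}\hat{\phi}_{L_s}$ and all cross-covariance terms vanish, so that
\[
\mathbb{E}\left(\hat{\phi}_L^S - \mathbb{E}\hat{\phi}_L^S\right)^2 = \sum_{s=1}^S \frac{T_s^2}{T^2}\,\mathbb{E}\left(\hat{\phi}_{L_s} - \mathbb{E}\hat{\phi}_{L_s}\right)^2 = \sum_{s=1}^S \frac{T_s^2}{T^2}\,\mbox{Var}\!\left(\hat{\phi}_{L_s}\right)~.
\]
Next, apply Theorem~\ref{theo:var} to each chain $s$; since every chain satisfies the same standing assumptions (Appendix~\ref{sec:ass} together with Assumption~\ref{ass:sto}) and the constant there is independent of $\{L,h,\tau\}$, the per-chain bounds hold with a common constant $D$, giving $\mbox{Var}(\hat{\phi}_{L_s}) \le D\left(\frac{1}{L_s h_s} + h_s^{2K}\right) = D\left(\frac{1}{T_s} + h_s^{2K}\right)$ using $T_s = L_s h_s$. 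Substituting and using $\sum_s T_s = T$,
\[
\mathbb{E}\left(\hat{\phi}_L^S - \mathbb{E}\hat{\phi}_L^S\right)^2 \le D\sum_{s=1}^S \frac{T_s^2}{T^2}\left(\frac{1}{T_s} + h_s^{2K}\right) = D\left(\frac{\sum_{s}T_s}{T^2} + \sum_{s=1}^S\frac{T_s^2}{T^2}h_s^{2K}\right) = D\left(\frac{1}{T} + \sum_{s=1}^S \frac{T_s^2}{T^2}h_s^{2K}\right)~,
\]
which is exactly the claimed bound.

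This argument is essentially routine once Theorem~\ref{theo:var} is available, so I do not expect a serious obstacle. The one point that needs a word of care is the passage from the per-chain constants to a single universal $D$: this is legitimate precisely because the single-chain variance constant does not depend on the chain's step size $h_s$ (nor on $L_s$ or $\tau$), so it is literally the same number for every chain. A second point worth flagging, though not an obstacle, is the independence of the chains: it is what annihilates the cross terms, and hence it is the mechanism by which adding chains shrinks the leading $1/T$ term linearly in $S$ (recall $T = \bar{T}S$) while leaving the $h_s^{2K}$ discretization terms untouched — consistent with the MSE improvement recorded in Theorem~\ref{theo:mul-servers}.
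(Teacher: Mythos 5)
Your proposal is correct and follows essentially the same route as the paper's own proof: decompose the variance of $\hat{\phi}_L^S$ via independence of the chains so that the cross-covariance terms vanish, apply the single-chain bound of Theorem~\ref{theo:var} to each $\mbox{Var}(\hat{\phi}_{L_s})$, and simplify using $T_s = L_s h_s$ and $\sum_s T_s = T$. Your added remarks on the uniformity of the constant $D$ across chains are a sensible clarification but do not change the argument.
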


When using the same step size for all chains, Theorem~\ref{theo:mul-servers-var} gives 
an optimal variance bound of $O\left((\sum_s L_s)^{-2K/(2K+1)}\right)$, {\it i.e.} a linear 
speedup with respect to $S$ is achieved.

In addition, Theorem~\ref{theo:mul-servers} with $\tau = 0$ and $K = 1$ provides 
convergence rates for the distributed SGLD algorithm in \cite{AhnSW:ICML14}, {\it i.e.}, 
improved MSE and variance bounds compared to the single-server SGLD.

\section{Applications to Distributed SG-MCMC Systems}\label{sec:app}

Our theory for S$^2$G-MCMC is general, serving as a basic analytic tool for 
distributed SG-MCMC systems. We propose two simple Bayesian distributed systems
with S$^2$G-MCMC in the following.

\paragraph{Single-chain distributed SG-MCMC}
Perhaps the simplest architecture is an asynchronous distributed SG-MCMC system, 
where a server runs an S$^2$G-MCMC algorithm, with stale gradients computed asynchronously
from $W$ workers. The detailed operations of the server and workers are described in 
Appendix~\ref{supp:dsgmcmc}. 

With our theory, now we explain the convergence property of this simple distributed system
with SG-MCMC, {\it i.e.}, a linear speedup w.r.t. the number of workers on the decrease of
variance, while maintaining the same bias level. To this end, rewrite $L = W\bar{L}$ from 
Theorems~\ref{theo:bias} and \ref{theo:MSE}, where $\bar{L}$ 
is the average number of iterations on each worker. We can observe from the theorems that when 
$M_1 \tau h > M_2 h^K$ in the bias and 
$\tilde{M}_1 \tau^2 h^2 > \tilde{M}_2 h^{2K}$ in the MSE, the terms with $\tau$ dominate. 
Optimizing the bounds with respect to $h$ yields a bound of $O((\tau/W\bar{L})^{1/2})$ 
for the bias, and
$O((\tau/W\bar{L})^{2/3})$ for the MSE. In practice, we usually observe $\tau \approx W$, making $W$ in the optimal bounds cancels, {\it i.e.}, the same optimal
bias and MSE bounds as standard SG-MCMC are obtained, no theoretical speedup is achieved 
when increasing $W$. However, from Corollary~\ref{coro:opt_rates}, the variance is 
independent of $\tau$, thus a linear speedup on the variance bound can be always obtained when 
increasing the number of workers,
{\it i.e.}, the distributed SG-MCMC system convergences a factor of $W$ faster than 
standard SG-MCMC with a single machine. 
We are not aware of similar conclusions from optimization, because most of the research 
focuses on the convex setting, thus only variance (equivalent to MSE) is studied.

\paragraph{Multiple-chain distributed SG-MCMC}
We can also adopt multiple servers based on the multiple-chain 
setup in Section~\ref{sec:mul-server}, where each chain corresponds to one server. 
The detailed architecture is described in Appendix~\ref{supp:dsgmcmc}.
This architecture trades off communication cost with convergence rates. 
As indicated by Theorems~\ref{theo:mul-servers} and 
\ref{theo:mul-servers-var}, the MSE and variance bounds can be improved with more servers.
Note that when only one worker is associated with one server, we recover the setting of $S$
independent servers. Compared to the single-server architecture described above with 
$S$ workers, from Theorems~\ref{theo:bias}--\ref{theo:mul-servers-var}, 
while the variance bound is the same, the single-server arthitecture improves the bias and MSE 
bounds by a factor of $S$. 

\paragraph{More advanced architectures}
More complex architectures could also be designed to reduce 
communication cost,
for example, by extending the downpour \cite{Dean2012} and elastic SGD \cite{ZhangCL:NIPS15} 
architectures to the SG-MCMC setting. Their convergence properties can also be analyzed with our 
theory since they are essentially using stale gradients. We leave the detailed analysis for future work.

%
%

\section{Experiments}\label{sec:exp}

Our primal goal is to validate the theory, comparing with different 
distributed architectures and algorithms, such as \cite{TehHLVWLB:arxiv15,BardenetDH:arxiv15},
is beyond the scope of this paper.
We first use two synthetic experiments to validate the theory, then apply the distributed architecture
described in Section~\ref{sec:app} for Bayesian deep learning. 
To quantitatively describe the speedup
property, we adopt the  the {\em iteration speedup} \cite{LianHLL:NIPS15}, defined as:
$\mbox{{\em iteration speedup}} \triangleq \frac{\text{\#iterations with a single worker}}{\text{average \#iterations on a worker}}$,
where \# is the iteration count when the same level of precision is achieved.
This speedup best matches with the theory. We also consider the {\em time speedup}, defined as:
$\frac{\text{running time for a single worker}}{\text{running time for $W$ worker}}$,
where the running time is recorded at the same accuracy. It is affected
significantly by hardware, thus is not accurately consistent with the theory.

\subsection{Synthetic experiments}

\begin{wrapfigure}{R}{5.06cm}\vspace{-1cm}
	\centering
	\hspace{-0.37cm}\includegraphics[width=0.38\columnwidth]{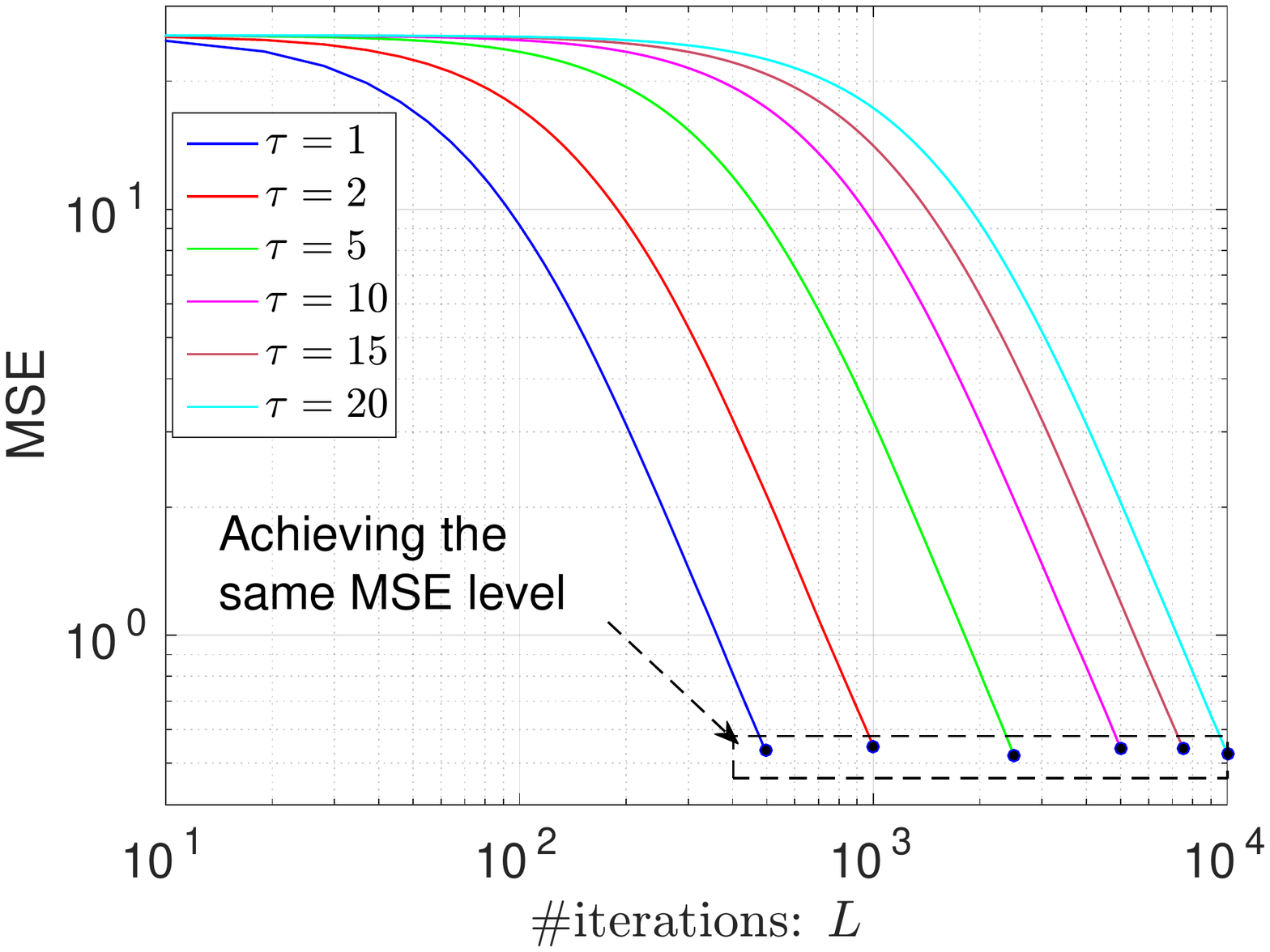}
	\vskip -0.1in
	\caption{MSE vs. \# iterations ($L =500\times\tau$) with increasing staleness $\tau$.
		Resulting in roughly the same MSE.}
	\label{fig:mse_staleness}
	\vskip -0.2in
\end{wrapfigure}

\paragraph{Impact of stale gradients}
A simple Gaussian model is used to verify the impact of stale gradients on the convergence accuracy,
with
$d_i \sim \mathcal{N}(\theta, 1), \theta \sim \mathcal{N}(0, 1)$. 1000 data samples $\{d_i\}$ 
are generated, with minibatches of size 10 to calculate stochastic gradients. The test function 
is $\phi(\theta) \triangleq \theta^2$. The distributed SGLD algorithm is adopted in this experiment.
We aim to verify that the optimal MSE bound $\propto \tau^{2/3}L^{-2/3}$, derived from 
Theorem~\ref{theo:MSE} and discussed in Section~\ref{sec:app} (with $W = 1$).
The optimal stepsize is $h = C \tau^{-2/3} L^{-1/3}$ for some constant $C$.
Based on the optimal bound, setting $L = L_0 \times \tau$ for some fixed $L_0$ and varying $\tau$'s 
would result in the same MSE, which is $\propto L_0^{-2/3}$. In the experiments we set 
$C = 1/30$, $L_0 = 500$, $\tau = \{1, 2, 5, 10, 15, 20\}$, 
and average over 200 runs to approximate the expectations in the MSE formula. As indicated in 
Figure~\ref{fig:mse_staleness}, approximately the same MSE's are obtained after $L_0\tau$ 
iterations for different $\tau$ values, consistent with the theory.
Note since the stepsizes are set to make end points of the curves reach the optimal MSE’s, 
the curves would not match the optimal MSE curves of $\tau^{2/3}L^{-2/3}$ in general, 
except for the end points, i.e., they are lower bounded by $\tau^{2/3}L^{-2/3}$.

%

\paragraph{Convergence speedup of the variance}
A Bayesian logistic regression model (BLR) is adopted to verify the 
variance convergence properties. We use the Adult 
dataset\footnote{\href{http://www.csie.ntu.edu.tw/ cjlin/libsvmtools/datasets/binary.html}{http://www.csie.ntu.edu.tw/ cjlin/libsvmtools/datasets/binary.html}.}, 
a9a, with 32,561 training samples and 16,281 test samples. 
The test function is defined as the standard logistic loss.
We average over 10 runs to estimate the expectation $\mathbb{E}\hat{\phi}_L$
in the variance. We use the single-server distributed architecture
in Section~\ref{sec:app}, with multiple workers computing stale gradients 
in parallel. We plot the variance versus
the average number of iterations on the workers ($\bar{L}$) and the running time in 
Figure~\ref{fig:var_iter} (a) and (b), respectively. 
 We can see that the variance drops faster with increasing number of workers. 
 To quantitatively relate these results to the theory, Corollary~\ref{coro:opt_rates} indicates
 that $\frac{L_1}{L_2} = \frac{W_1}{W_2}$, where $(W_i, L_i)_{i=1}^2$ means the number of workers
 and iterations at the same variance, {\it i.e.}, a linear speedup is achieved.
The {\em iteration speedup} and {\em time speedup} are plotted in Figure~\ref{fig:var_iter} (c),
showing that the {\em iteration speedup} approximately scales linearly worker numbers, 
consistent with Corollary~\ref{coro:opt_rates}; whereas the {\em time speedup} deteriorates when the 
worker number is large due to high system latency.

\begin{figure} 
	\centering
	\vspace{-0mm}
	\begin{tabular}{c c c}
		\hspace{-9mm}
		\begin{minipage}{0.32\linewidth}\vspace{0mm}
			\includegraphics[width=1\linewidth]{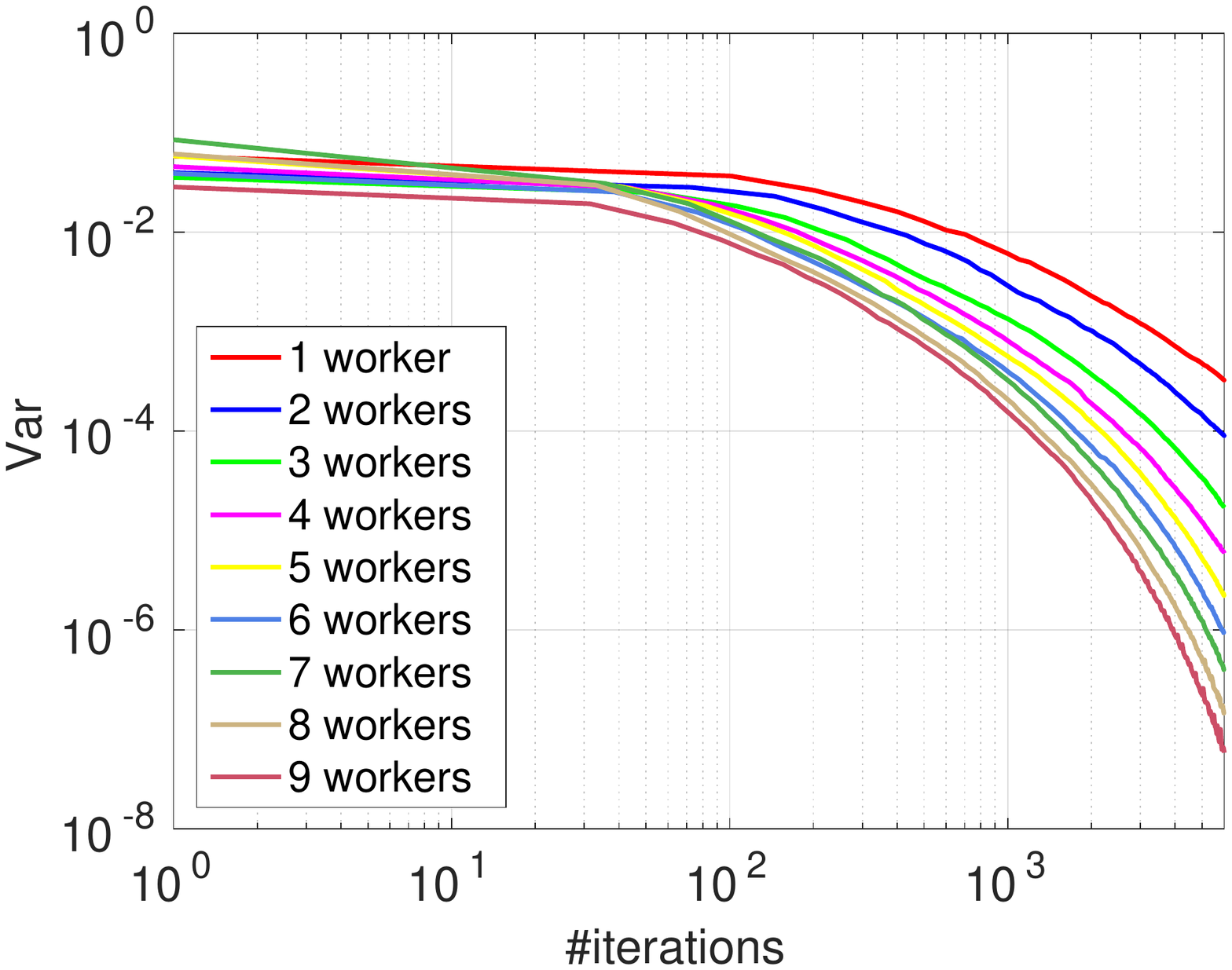} 
		\end{minipage}  &
		\hspace{-9mm}
		\begin{minipage}{0.32\linewidth}\vspace{0mm}\hspace{-3mm}
			\includegraphics[width=1\linewidth]{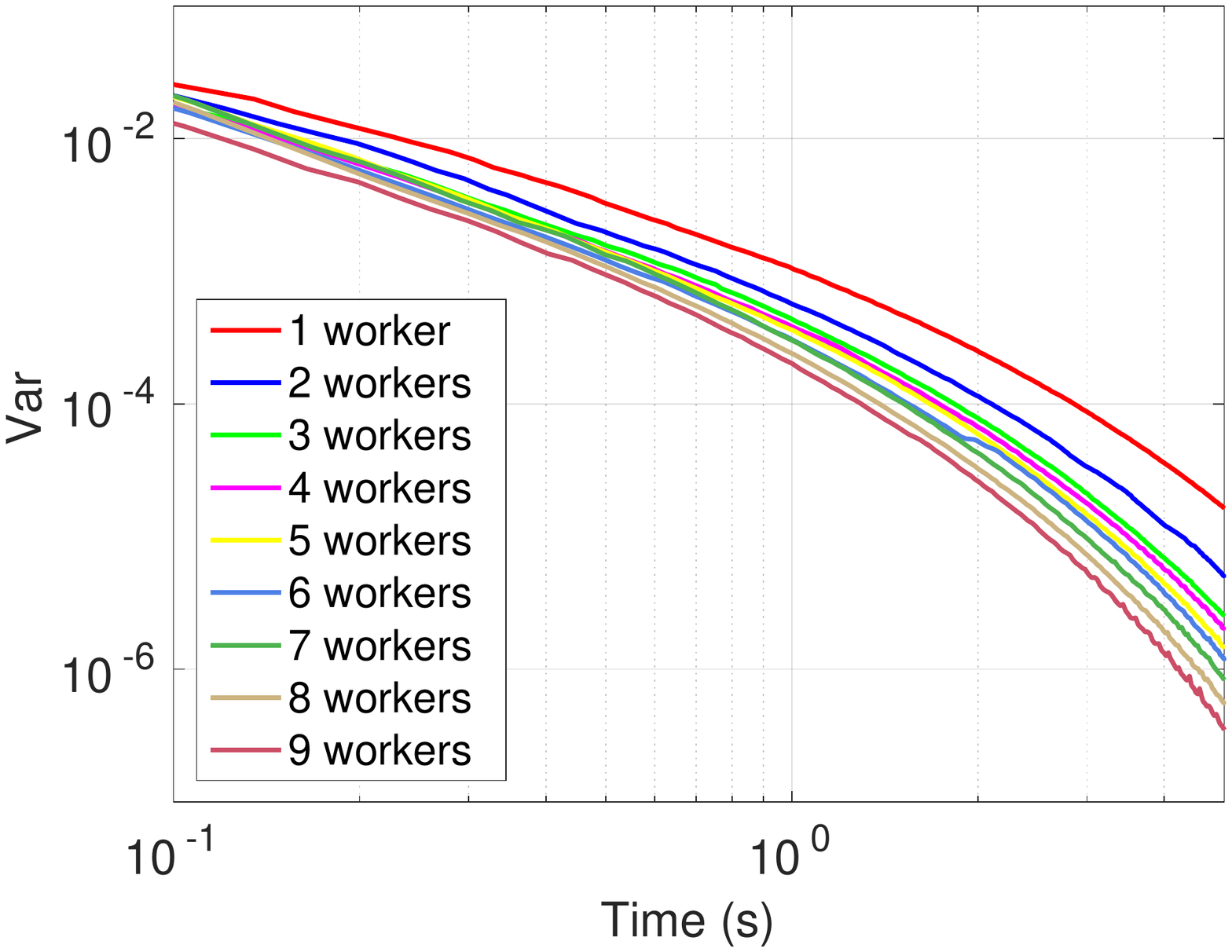} 
		\end{minipage} &
		\hspace{-9mm}
		\begin{minipage}{0.32\linewidth}\vspace{0mm}\hspace{-4mm}
			\includegraphics[width=1\linewidth]{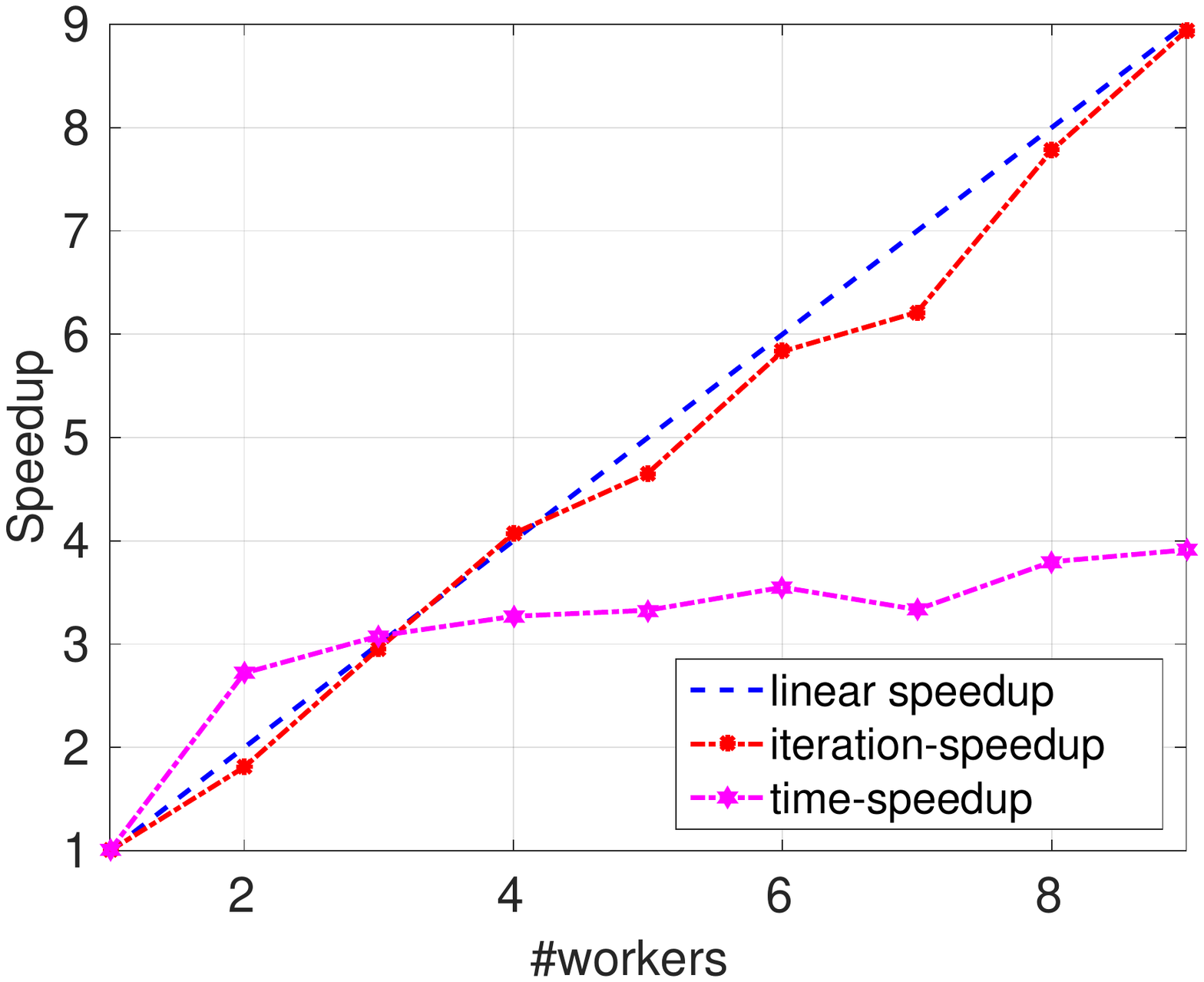} 
		\end{minipage} \\
		\vspace{-5mm}
		\begin{minipage}{0.32\linewidth}\vspace{-0mm}\centering (a) Variance vs. Iteration $\bar{L}$  \end{minipage} 
		& \begin{minipage}{0.32\linewidth}\vspace{0mm}\centering \hspace{-5mm}(b) Variance vs. Time \end{minipage}
		&	\begin{minipage}{0.32\linewidth}\vspace{-0mm}\centering \hspace{-7mm}(c) Speedup \end{minipage}	\\
	\end{tabular}\vspace{1mm}
		\caption{Variance with increasing number of workers.}
		\label{fig:var_iter}\vspace{-0.5cm}
\end{figure}

\subsection{Applications to deep learning}

We further test S$^2$G-MCMC on Bayesian learning of deep neural networks.
The distributed system is developed based on an MPI (message passing interface) 
extension of the popular Caffe package for deep learning \cite{jia2014caffe}. 
We implement the SGHMC algorithm, with the point-to-point communications 
between servers and workers handled by 
the MPICH library.
The algorithm is run 
on a cluster of five machines. Each machine is equipped with eight 3.60GHz Intel(R) Core(TM) i7-4790
CPU cores.

We evaluate S$^2$G-MCMC on the above BLR model and two deep convolutional neural networks (CNN). 
In all these models, zero mean and unit variance Gaussian priors are employed for the weights to capture weight 
uncertainties, an effective way to deal with overfitting \cite{BlundellCKW:ICML15}. We vary the 
number of servers $S$ among $\{1, 3, 5, 7\}$, and the number of workers for each server from 1 to 9.

\paragraph{LeNet for MNIST}
We modify the standard LeNet to a Bayesian setting for the MNIST 
dataset.
LeNet consists of 2 convolutional layers, 2 max pool layers and 2 ReLU nonlinear layers, 
followed by 2 fully connected layers \cite{KrizhevshySH:NIPS12}. The detailed specification 
can be found in Caffe. For simplicity, we use the default parameter setting specified in Caffe, with 
the additional parameter $B$ in SGHMC (Algorithm~\ref{alg:sghmc}) set to $(1 - m)$, where $m$ 
is the {\em moment} variable defined in the SGD algorithm in Caffe.

\paragraph{Cifar10-Quick net for CIFAR10}
The Cifar10-Quick net consists of 3 convolutional layers, 3 max pool layers and 3 ReLU 
nonlinear layers, followed by 2 fully connected layers. 
The CIFAR-10 dataset consists of 60,000 color images of size 32$\times$32 in 10 classes, 
with 50,000 for training and 10,000 for testing.
Similar to LeNet, default parameter setting specified in Caffe is used.

In these models, the test function is defined as the cross entropy of the 
{\em softmax} outputs $\{\ob_1, \cdots, \ob_N\}$ for test data 
$\{(\db_1, y_1), \cdots, (\db_N, y_N)\}$ with $C$ classes, {\it i.e.},
$\mbox{loss} = -\sum_{i=1}^N \ob_{y_i} + N \log \sum_{c=1}^C e^{\ob_c}$.
Since the theory indicates a linear speedup on the decrease of variance w.r.t. the number
of workers, this means for a single run of the models, the loss would converge faster to
its expectation with increasing number of workers. The following experiments
verify this intuition.

\subsubsection{Single-server experiments}

We first test the single-server architecture in Section~\ref{sec:app} on the three models. 
Because the expectations in the bias, MSE or variance are not analytically available in 
these complex models, we instead plot the {\em loss} versus {\em average number of iterations} 
($\bar{L}$ defined in Section~\ref{sec:app}) on each worker and the running {\em time} in 
Figure~\ref{fig:loss_single}. As mentioned above, faster decrease of the {\em loss} with 
more workers is expected.

For the ease of visualization, we only plot the results with $\{1, 2, 4, 6, 9\}$ workers;
more detailed results are provided in Appendix~\ref{sec:results}.
We can see that generally the loss decreases faster with increasing number of workers. 
In the CIFAR-10 dataset, the final losses of 6 and 9 workers are worst than the one 
with 4 workers. It shows that the accuracy of the sample average suffers from the 
increased staleness due to the increased number of workers. Therefore a smaller step 
size $h$ should be considered to maintain high accuracy when using a large number of workers. 
Note the 1-worker curves correspond to the standard SG-MCMC,
whose loss decreases much slower due to high 
estimation variance, though in theory it has the same level of bias as the single-server architecture 
for a given number of iterations (they will converge to the same accuracy).

\begin{figure*}
\begin{minipage}{0.325\linewidth}
	\includegraphics[width=\linewidth]{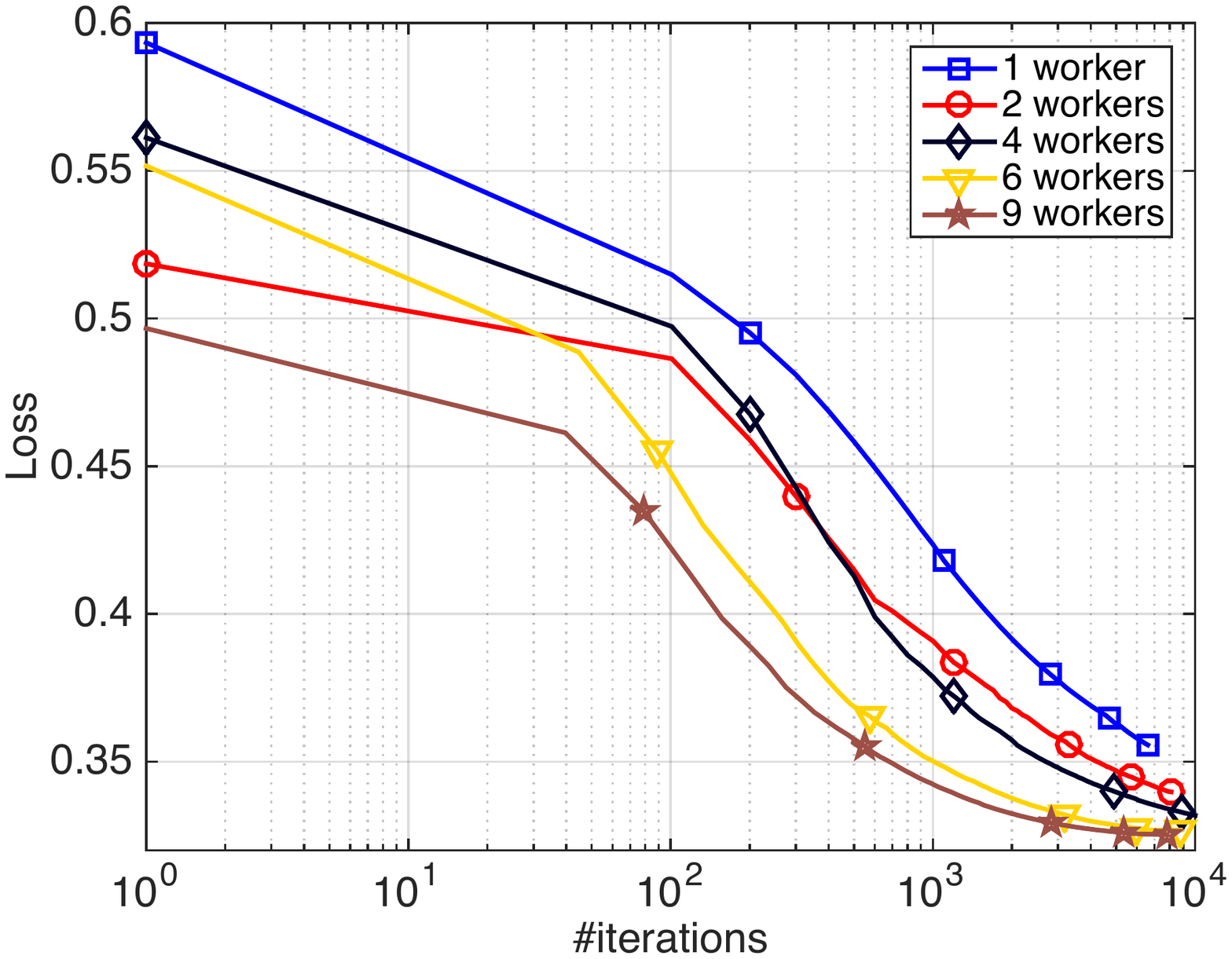}
\end{minipage}\hspace{2mm}
\begin{minipage}{0.325\linewidth}
	\includegraphics[width=1.0\linewidth]{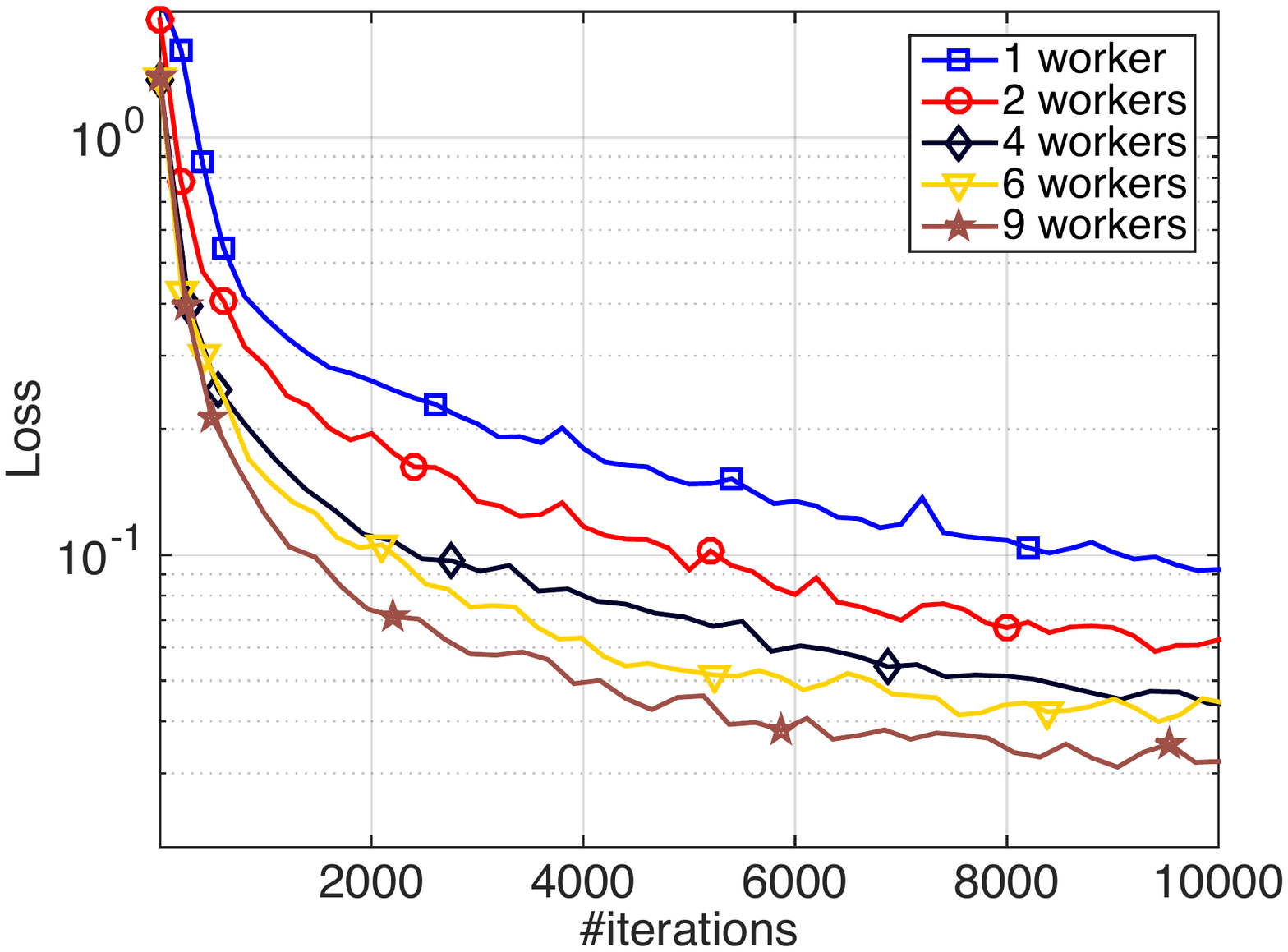}
\end{minipage}
\begin{minipage}{0.325\linewidth}\hspace{-1mm}
     \includegraphics[width=0.96\linewidth]{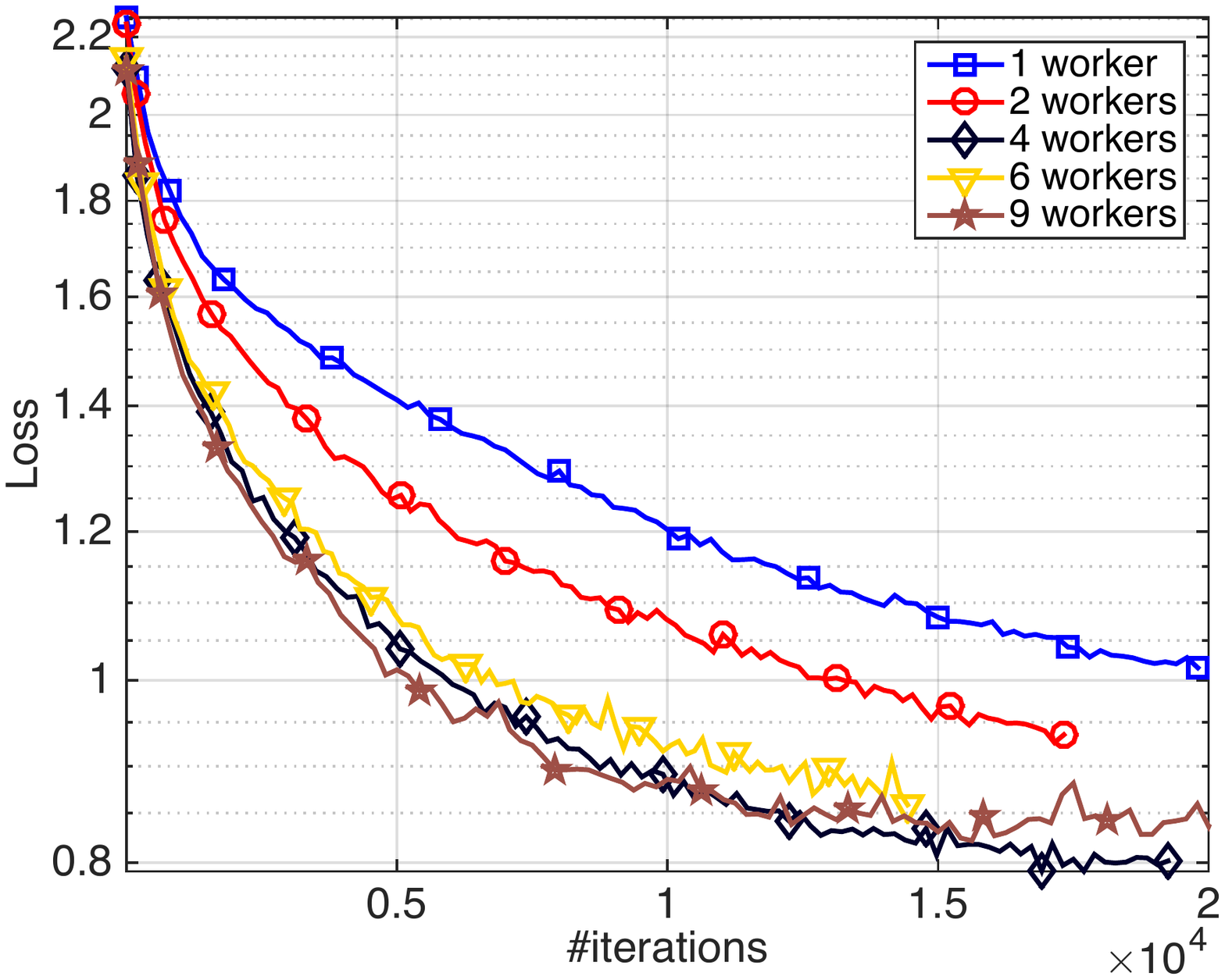}
\end{minipage}\\ \vspace{2mm}

\begin{minipage}{0.325\linewidth}
	\includegraphics[width=\linewidth]{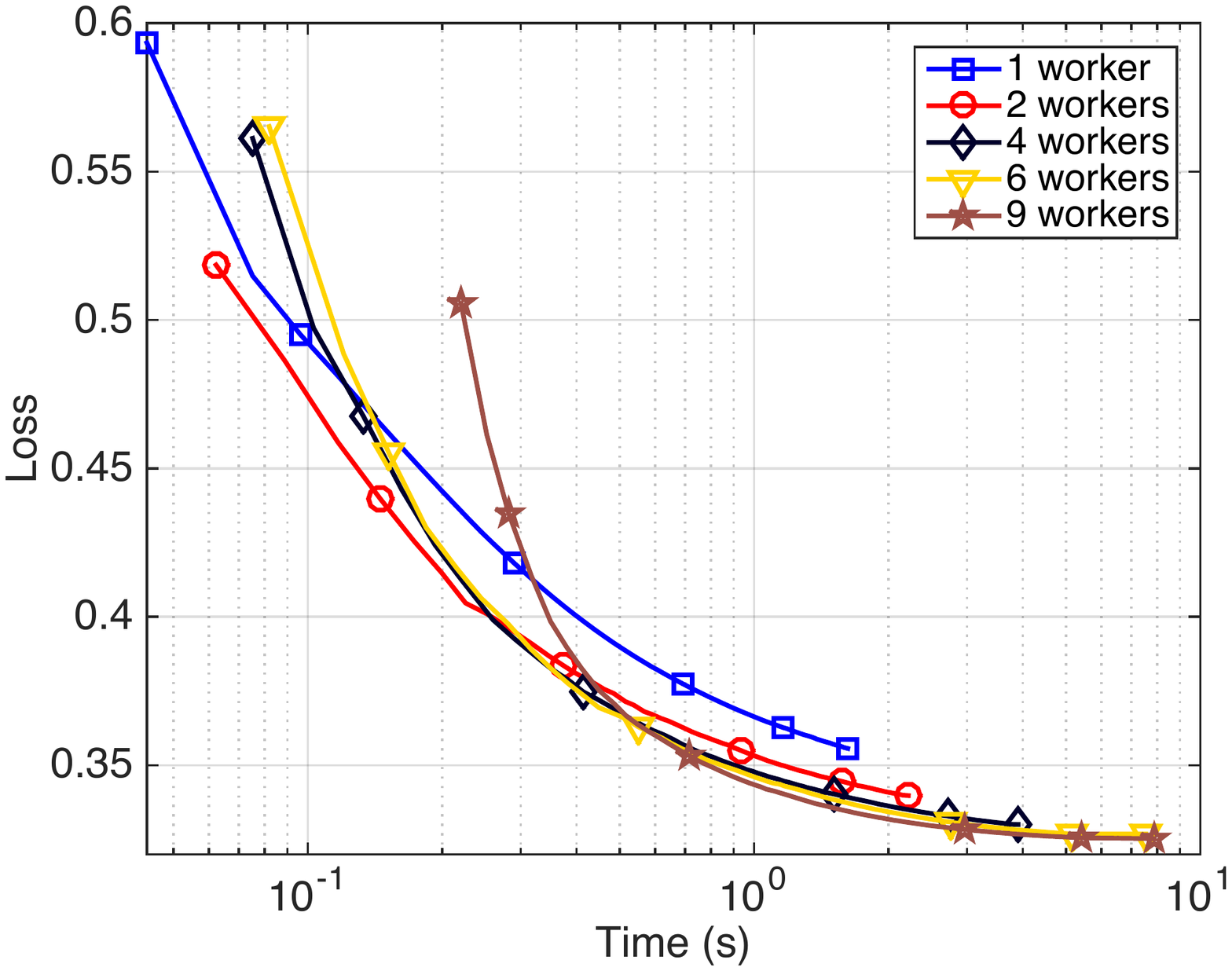}
\end{minipage}\hspace{2mm}
\begin{minipage}{0.325\linewidth}\vspace{0mm}\hspace{-1.5mm}
	\includegraphics[width=\linewidth]{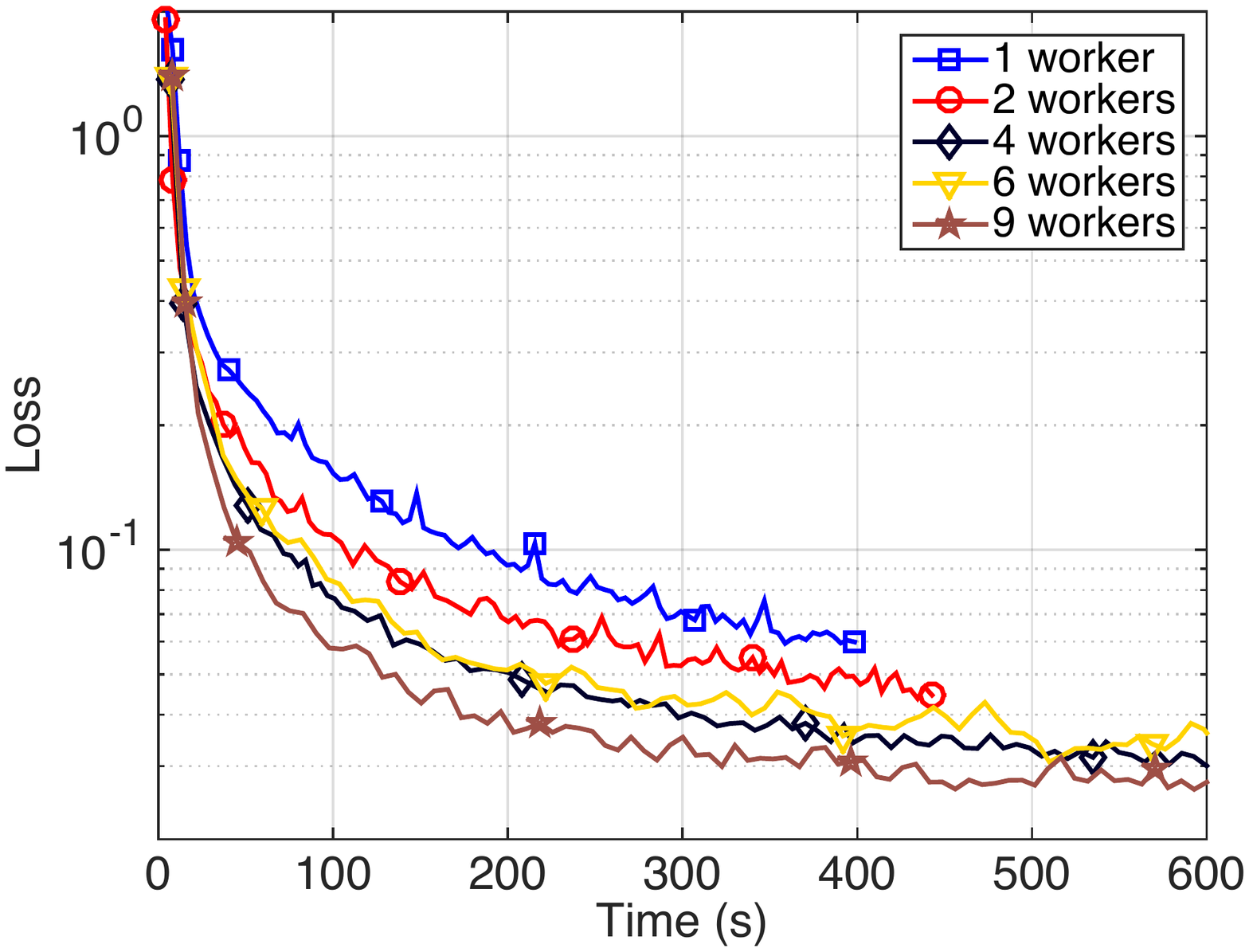}
\end{minipage}
\begin{minipage}{0.325\linewidth}\hspace{-1mm}
	\includegraphics[width=\linewidth]{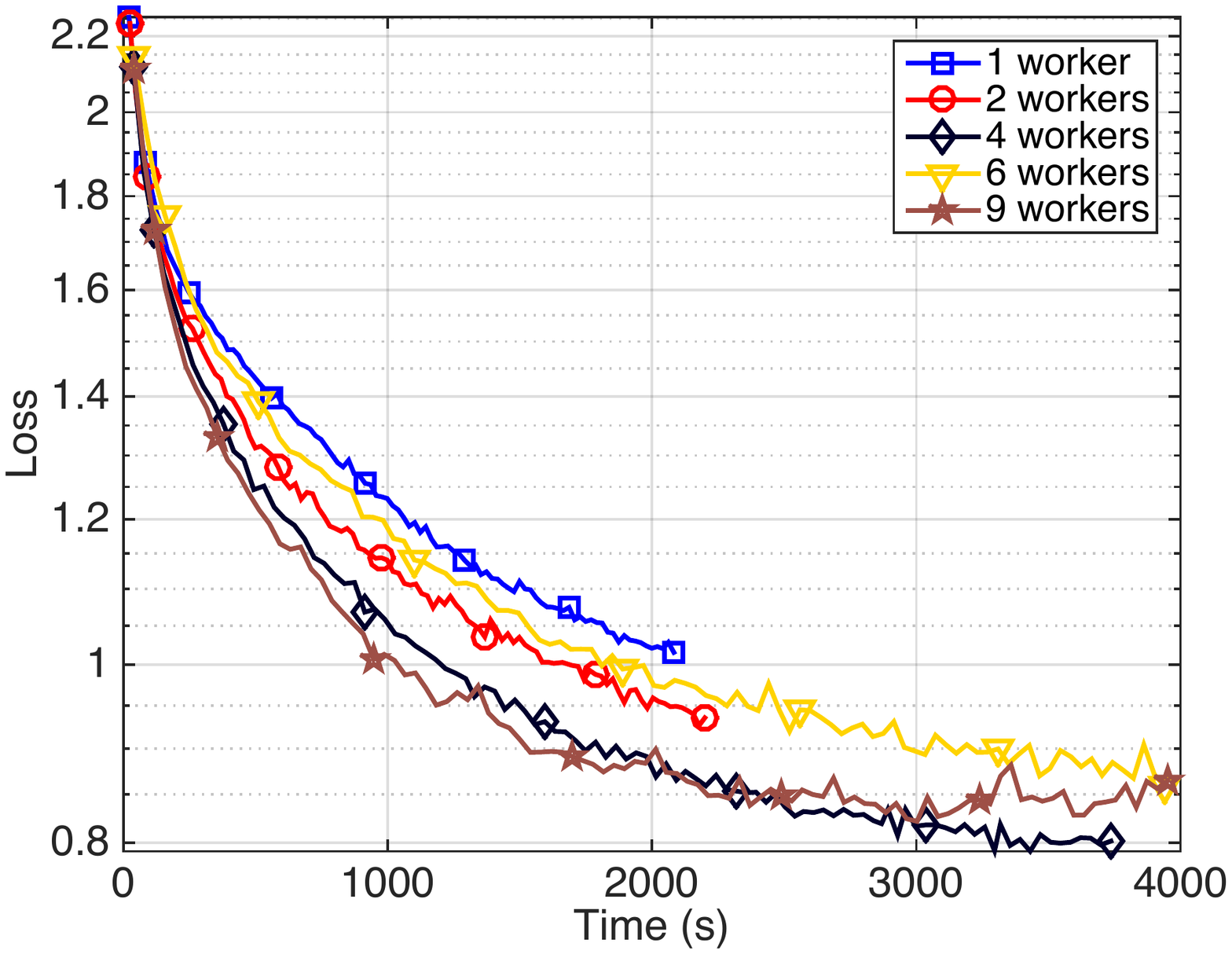}
\end{minipage}
\vskip -0.05in
\caption{Testing loss vs. \#workers. From left to right, each column corresponds to
the a9a, MNIST and CIFAR dataset, respectively. The loss is defined in the text.}
\label{fig:loss_single}
\vskip -0.1in
\end{figure*} 

\subsubsection{Multiple-server experiments}

Finally, we test the multiple-servers architecture on the same models. We use the same
criterion as the single-server setting to measure the convergence behavior.
The {\em loss} versus {\em average number of iterations} on each worker ($\bar{L}$
defined in Section~\ref{sec:app}) for the three datasets are plotted in 
Figure~\ref{fig:loss_mul}, where we vary the number of servers among 
$\{1, 3, 5, 7\}$, and use 2 workers for each server. The plots of {\em loss} versus
{\em time} and using different number of workers for each server are provided in the 
Appendix. We can see that in the simple BLR model, multiple servers do not seem to show 
significant speedup, probably due to the simplicity of the posterior, where the sample 
variance is too small for multiple servers to take effect; while in the more complicated deep neural networks, using more 
servers results in a faster decrease of the {\em loss}, especially in the MNIST dataset.
\begin{figure}[!htb]
\begin{center}
\begin{minipage}{0.325\linewidth}
	\centerline{\includegraphics[width=\linewidth]{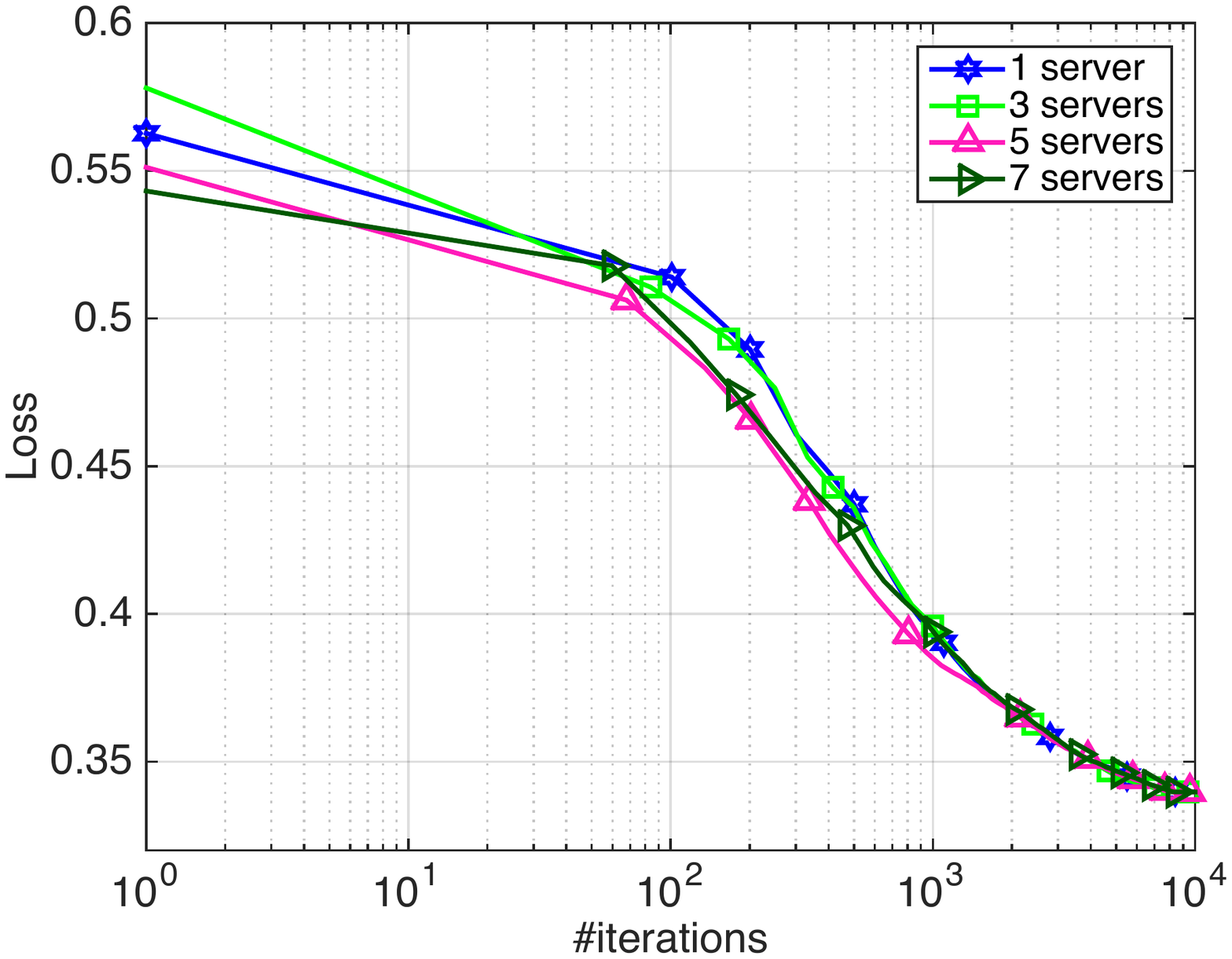}}
\end{minipage}
\begin{minipage}{0.325\linewidth}\vspace{-0mm}\hspace{1mm}
	\centerline{\includegraphics[width=1.00\linewidth,height=0.77\linewidth]{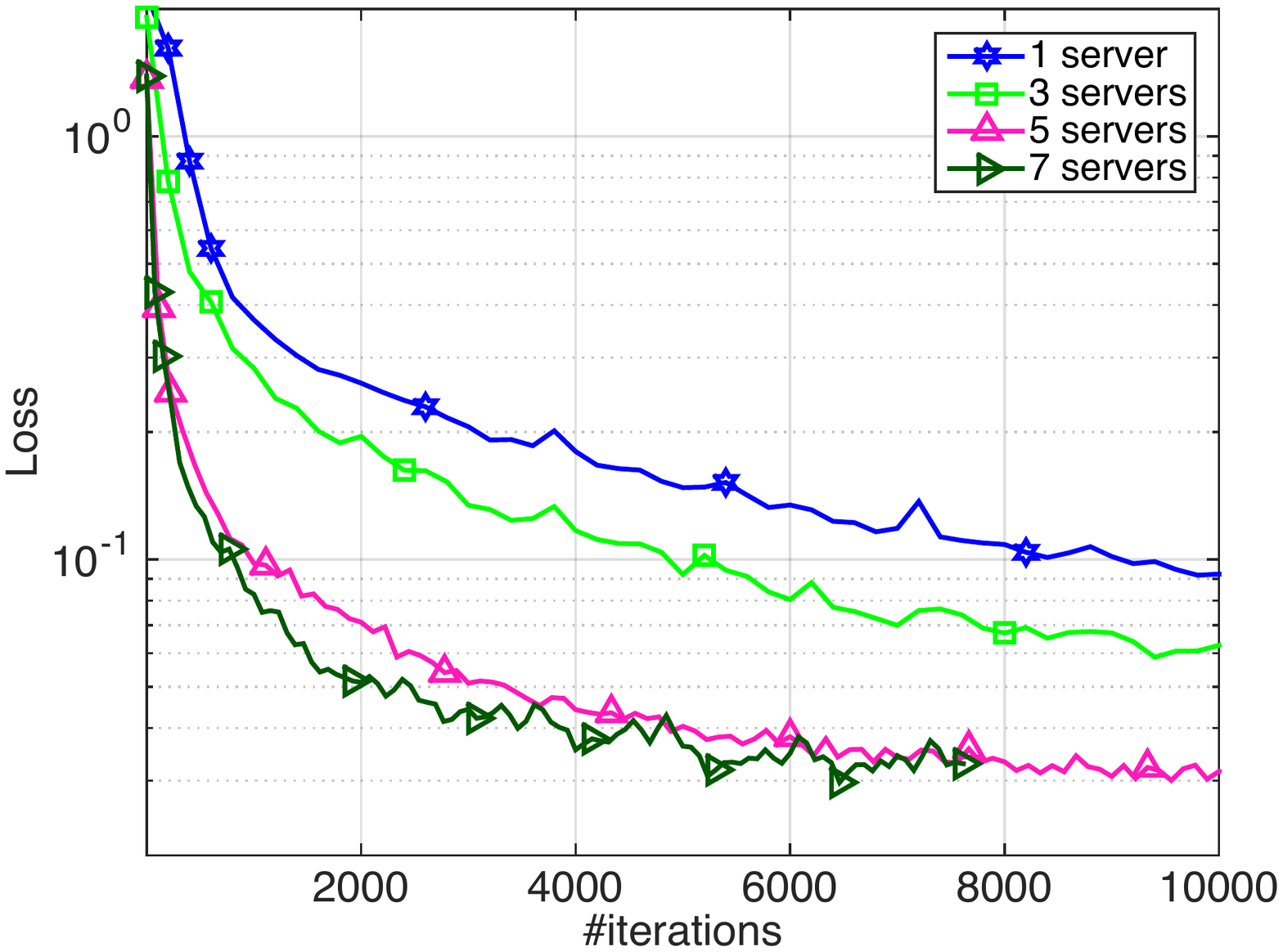}}
\end{minipage}
\begin{minipage}{0.325\linewidth}\vspace{0mm}\hspace{1mm}
	\centerline{\includegraphics[width=1\linewidth]{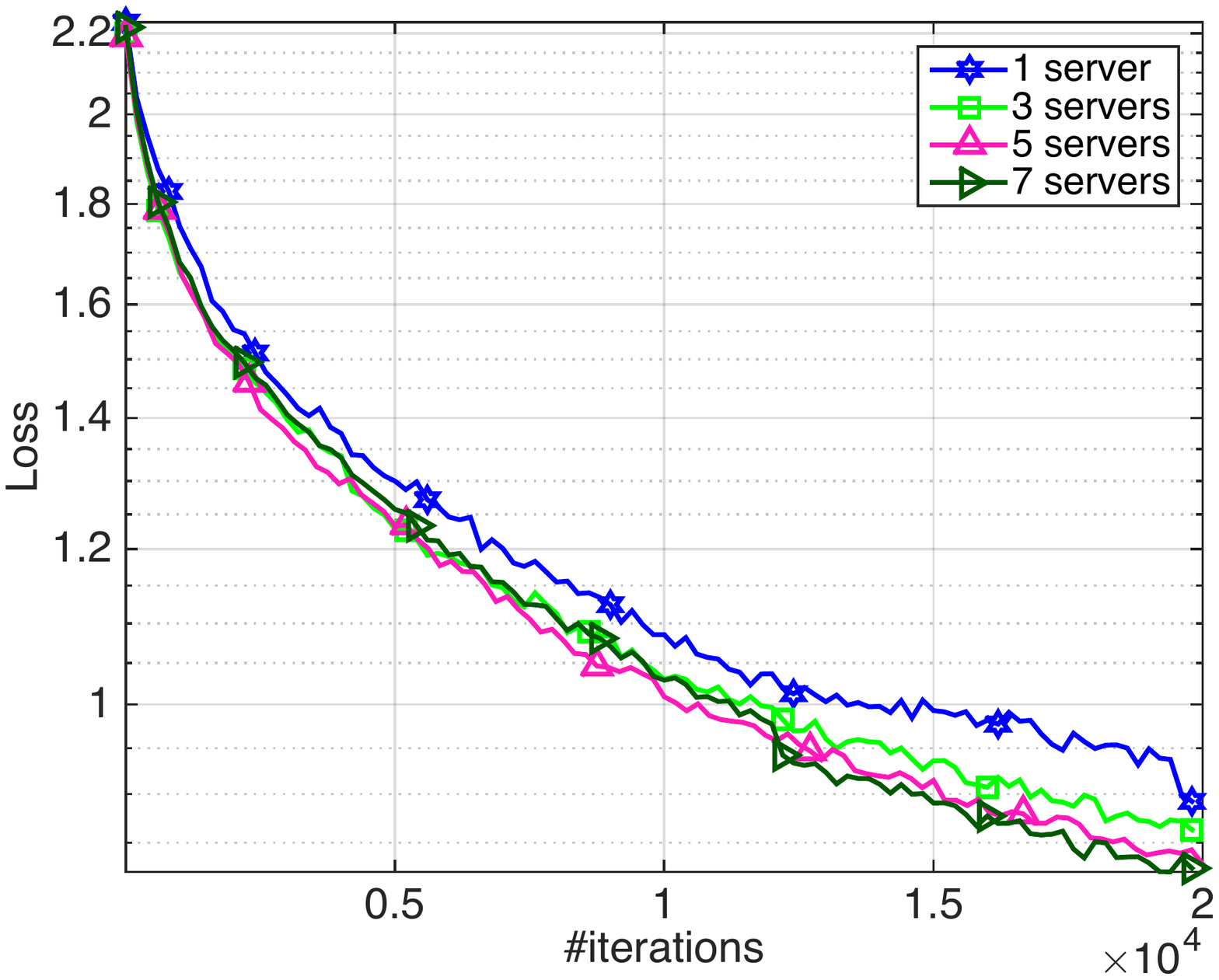}}
\end{minipage}

\caption{Testing loss vs. \#servers. From left to right, each column corresponds to
the a9a, MNIST and CIFAR dataset, respectively. The loss is defined in the text.}
\label{fig:loss_mul}
\end{center}
\vskip -0.3in
\end{figure}

\section{Conclusion}\label{sec:con}

We extend theory from standard SG-MCMC to the stale stochastic gradient
setting, and analyze the impacts of the staleness to the convergence behavior of an
S$^2$G-MCMC algorithm. Our theory reveals that the estimation variance is independent 
of the staleness, leading to a linear speedup w.r.t.\! the number of workers, although 
in practice little speedup in terms of optimal bias and MSE might be achieved due to 
their dependence on the staleness. We test our theory on a simple asynchronous 
distributed SG-MCMC system with two simulated examples and several deep neural 
network models. Experimental results verify the effectiveness and scalability of the 
proposed S$^2$G-MCMC framework. 

\paragraph{Acknowledgements}
Supported in part by ARO, DARPA, DOE, NGA, ONR and NSF.

\newpage
{\small
\setlength{\bibsep}{0.68pt}
\bibliographystyle{unsrt}
\bibliography{reference}
}

\newpage
\noindent\makebox[\linewidth]{\rule{\linewidth}{3.5pt}}
\begin{center}
\bf{\Large Supplementary Material for:\\ Stochastic Gradient MCMC with Stale Gradients}
\end{center}
\noindent\makebox[\linewidth]{\rule{\linewidth}{1pt}}

\begin{center}
	Changyou Chen$^\dag$~~~~~~~ Nan Ding$^\ddag$~~~~~~~ Chunyuan Li$^\dag$~~~~~~~ Yizhe Zhang$^\dag$ ~~~~~~~ Lawrence Carin$^\dag$ \\
	$^\dag$Dept. of Electrical and Computer Engineering, Duke University, Durham, NC, USA \\
	$^\ddag$Google Inc., Venice, CA, USA \\
	\texttt{$^\dag$\{cc448,cl319,yz196,lcarin\}@duke.edu; $^\ddag$dingnan@google.com}
\end{center}

\appendix

\section{A simple Bayesian distributed system based on S$^2$G-MCMC}\label{supp:dsgmcmc}

We provide the detailed architecture of the simple Bayesian distributed system
described in Section~\ref{sec:app}. We put the single-chain 
and multiple-chain distributed SG-MCMCs into a unified framework. 
Suppose there are $S$ servers and $W$ workers, the one with $S=1$ corresponds
to the single-chain distributed SG-MCMC, whereas the one with $S > 1$ corresponds
to the multiple-chain distributed SG-MCMC. The servers and workers are responsible 
for the following tasks: 
\begin{itemize}
	\item Each worker runs independently and communicates with a specific server. 
		They are responsible for computing the stochastic gradients\footnote{This is the most expensive part
		in an SG-MCMC algorithm.} of the parameter given by the server. Once the stochastic gradient is computed, 
		the worker sends it to its assigned server and receive a new parameter sample from the server.
	\item Each server independently maintains its own state vector and timestamp. At 
		the $l$-th timestamp\footnote{Each server is equipped with a timestamp 
		because they are independent with each other.}, it receives a stale stochastic gradient $\nabla_{\thetab} \hat{U}_{\tau_l}(\thetab) \triangleq \nabla_{\thetab} \tilde{U}(\thetab_{(l-\tau_l)h})$ from worker $w$, updates the state vector $\xb_{lh}$ to $\xb_{(l+1)h}$ and increments the timestamp, then sends the new parameter sample $\thetab_{(l+1)h}$ to worker $w$. 
\end{itemize}

The sending and receiving in the servers and workers are performed asynchronously, enabling minimum 
communication cost and latency between the servers and workers. 
At testing, all the samples from the servers are collected 
and applied to a test function. Apparently, the training time using multiple servers is basically 
the same as using a single server
because the sampling in different servers is independent.
Figure~\ref{fig:ps} depicts the architecture of the proposed Bayesian distributed framework. 
Algorithm~\ref{alg:dsgmcmc} details the algorithm on the servers and workers.

\begin{algorithm}[ht]
\caption{Asynchronous Distributed SG-MCMC}\label{alg:dsgmcmc}
\begin{algorithmic}
\STATE {\begin{center}\bfseries Server\end{center}}
\STATE {\bf Output:} $\cbr{\xb_{h}, \ldots, \xb_{Lh}}$
\STATE Initialize $\xb_{0} \in \RR^m$;
\STATE Send $\thetab_0$ to all assigned workers; 
\FOR {$l = 0, 1, \ldots, L-1$}
\STATE Receive a stale stochastic gradient $\nabla \tilde{U}_{(l-\tau_l)h}$ from a worker $w$.
\STATE Update $\xb_{lh}$ to $\xb_{(l+1)h}$ using $\nabla \tilde{U}_{(l-\tau_l)h}$. (*)
\STATE Send $\thetab_{(l+1)h}$ to the worker $w$.
\ENDFOR \\
\end{algorithmic}
\vspace{-0.3cm}
\noindent\makebox[\linewidth]{\rule{\linewidth}{1pt}}
\vspace{-0.5cm}
\begin{algorithmic}
\STATE {\begin{center}\bfseries Worker\end{center}}
\vspace{-0.2cm}
\REPEAT
\STATE Receive $\thetab_{lh}$ from server $s$.
\STATE Compute $\nabla \tilde{U}_{lh}$ with a minibatch.
\STATE Send $\nabla \tilde{U}_{lh}$ to server $s$.
\UNTIL{$stop$}
\end{algorithmic}
\end{algorithm}

The update rule (*) of the state vector in Algorithm~\ref{alg:dsgmcmc} depends on which SG-MCMC algorithm
is employed. For instance, Algorithm~\ref{alg:sghmc} describes the update rule of the SGHMC with a 1st-order Euler integrator.

\section{Assumptions}\label{sec:ass}

First, following \cite{MattinglyST:JNA10}, we will need to assume the corresponding SDE of SG-MCMC
to be either elliptic or hypoelliptic. The ellipticity/hypoellipticity describes whether the Brownian motion 
is able to spread over the whole parameter space. The SDE of the SGLD is elliptic, while for other SG-MCMC 
algorithms such as the SGHMC, the hypoellipticity assumption is usually reasonable. When the domain $\xb$ is on the torus, the ellipticity and hypoellipticity of an SDE guarantees the existence of a nice solution for the Poisson equation~\eqref{eq:PoissonEq1}. The assumption is
summarized in Assumption~\ref{ass:elliptic_ass}.

\begin{figure}[t!]
\vskip 0.1in
\begin{center}
\centerline{\includegraphics[width=0.8\columnwidth]{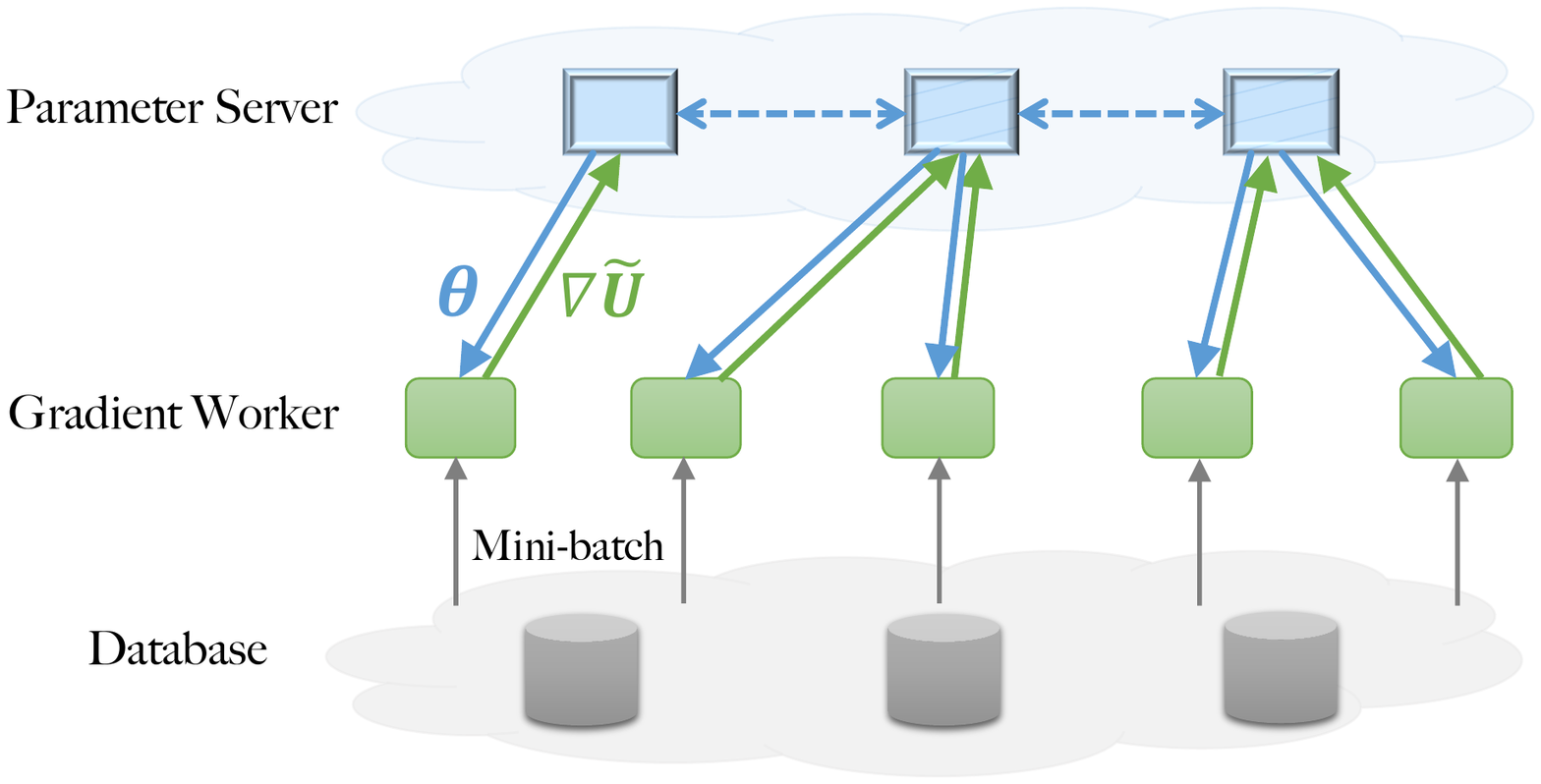}}
\caption{Architecture of the proposed Bayesian distributed framework. In the multi-server case,
the dash lines on the servers indicate a simple averaging operation for testing, otherwise
the servers are independent. Section~\ref{sec:mul-server} provides more details.}
\label{fig:ps}
\end{center}
\vskip -0.2in
\end{figure}

\begin{assumption}\label{ass:elliptic_ass}
The corresponding SDE of a SG-MCMC algorithm is either elliptic or hypoelliptic\footnote{The SDE of
the SGLD can be verified to be elliptic. For other SG-MCMC algorithms such as the SGHMC, the hypoellipticity
assumption is usually reasonable, see \cite{MattinglyST:JNA10} on how to verify hypoellipticity of an SDE.}.
\end{assumption}

When $\xb$ is extended to the domain of $\mathbb{R}^p$ for some integer $p > 0$, we need some assumptions on the solution of the Poisson equation \eqref{eq:PoissonEq1}.
Note \eqref{eq:PoissonEq1} can be equivalently written in an integration form \cite{VollmerZT:arxiv15} 
using It\^{o}'s formula:
\begin{align}\label{eq:PoissonEq2}
	&\frac{1}{t}\int_{0}^t \phi(\xb_s) \mathrm{d}s - \bar{\phi} \\
	=& \frac{1}{t}\left(\psi(\xb_t) - \psi(\xb_0)\right)
	- \frac{1}{t}\int_0^t \nabla \psi(\xb_s)\cdot g(\xb_s) \mathrm{d}\mathcal{\wb}_s~. \nonumber
\end{align} 

Intuitively, $\psi$ needs to be bounded if the discrepancy between $\hat{\phi}_L$ 
and $\bar{\phi}$ were to be bounded. This is satisfied if the SDE is defined in a
bounded domain \cite{MattinglyST:JNA10}. In the unbounded domain as for SG-MCMC algorithms, it turns out the following
boundedness assumptions on $\psi$ suffice \cite{ChenDC:NIPS15}.

\begin{assumption}\label{ass:assumption1}
1) $\psi$ and its up to 3rd-order derivatives, $\mathcal{D}^k \psi$, are bounded by a
function $\mathcal{V}$, {\it i.e.}, 
$\|\mathcal{D}^k \psi\| \leq C_k\mathcal{V}^{p_k}$ for $k=(0, 1, 2, 3)$, $C_k, p_k > 0$. 2)
the expectation of $\mathcal{V}$ on $\{\xb_{lh}\}$ is bounded: $\sup_l \mathbb{E}\mathcal{V}^p(\xb_{lh}) < \infty$.
3) $\mathcal{V}$ is smooth such that 
$\sup_{s \in (0, 1)} \mathcal{V}^p\left(s\xb + \left(1-s\right)\yb\right) \leq C\left(\mathcal{V}^p\left(\xb\right) + \mathcal{V}^p\left(\yb\right)\right)$, $\forall \xb, \yb, p \leq \max\{2p_k\}$ for some $C > 0$.
\end{assumption}

Furthermore, in our proofs the expectation of a function under a diffusion needs to be expanded in
a Taylor expansion style,
{\it e.g.}, $\mathbb{E}\phi(\xb_t) = \sum_{i=0}^\ell \frac{t^i}{i!}\mathcal{L}^i\phi(\xb_0) 
+ t^{\ell+1}r_{\ell, F, \phi}(\xb_0)$ by using Kolmogorov's backward equation. To ensure the remainder
term $r_{\ell, F, \phi}(\xb_0)$ to be bounded, it suffices to make the following assumption on the 
smoothness and boundedness of $F(\xb)$ \cite{VollmerZT:arxiv15,ChenDC:NIPS15}.
\begin{assumption}\label{ass:Fx}
	$F(\xb)$ is infinitely differentiable with bounded derivatives of any order; and $|F(\xb)| \leq A (1 + |\xb|^s)$
	for some integer $s > 0$ and $A > 0$.
\end{assumption}

\section{Notation}

For simplicity, we will simplify some notation used in the proof as follows:
\begin{align*}
	&\nabla_{\thetab}\tilde{U}_{l}(\thetab_{lh}) \triangleq \nabla_{\thetab} \tilde{U}_{lh} \triangleq \tilde{G}_{lh} \\
	&\nabla_{\thetab} U_{l}(\thetab_{lh}) \triangleq \nabla_{\thetab} U_{lh} \triangleq G_{lh} \\
	&\psi(\Xb_{lh}) \triangleq \psi_{lh}
\end{align*}

\section{Proof of Theorem~\ref{theo:bias}}

In S$^2$G-MCMC, for the $l$-th iteration, suppose a stochastic gradient with a staleness $\tau_l$ is used,
{\it e.g.}, $\tilde{G}_{(l - \tau_l)h}$. First, we will bound the difference between
$\tilde{G}_{(l - \tau_l)h}$ and the stochastic gradient at the $l$-th iteration $\tilde{G}_{lh}$,
by using the Lipschitz property of $\tilde{G}_{lh}$, with the following lemma.

\begin{lemma}\label{lem:bound_g}
Let $f_{lh} \triangleq \left\|\xb_{lh} - \xb_{(l-1)h}\right\|$,
the expected difference between $\tilde{G}_{(l - \tau_l)h}$ and $\tilde{G}_{lh}$
is bounded by:
\begin{align}
	&\left\|\mathbb{E}\left(\tilde{G}_{(l - \tau_l)h} - \tilde{G}_{lh}\right)\right\| =  \max_{i=l-\tau_l}^{l-1} \left|\mathcal{L}_if_{ih}\right| C \tau h + O(h^2), \label{bound_g2}
\end{align}
where the expectation is taken over the randomness of the SG-MCMC algorithm, {\it e.g.}, the
randomness from stochastic gradients and the injected Gaussian noise.
\end{lemma}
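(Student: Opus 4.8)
The plan is to write the gap between the stale gradient and the current one as a telescoping sum over the at most $\tau_l$ intervening iterations, bound a single one-step increment, and then add up the per-step estimates. Setting $\delta_i \triangleq \tilde G_{(i-1)h} - \tilde G_{ih}$, we have $\tilde G_{(l-\tau_l)h} - \tilde G_{lh} = \sum_{i=l-\tau_l+1}^{l}\delta_i$, hence $\|\mathbb{E}(\tilde G_{(l-\tau_l)h} - \tilde G_{lh})\| \le \sum_{i=l-\tau_l+1}^{l}\|\mathbb{E}\,\delta_i\|$, and it suffices to prove $\|\mathbb{E}\,\delta_i\| \le |\mathcal{L}_i f_{ih}|\,C\,h + O(h^2)$ for each $i$. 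Taking the maximum of $|\mathcal{L}_i f_{ih}|$ over the range and using $\tau_l \le \tau$ then yields the claim, the $\tau_l = O(\tau)$ remainder terms being each $O(h^2)$ and hence summing to $O(h^2)$ since $\tau$ is a fixed constant.

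For the single-step bound, first apply the Lipschitz part of Assumption~\ref{ass:sto}, together with $\thetab \subseteq \xb$, to replace a gradient increment by a state increment: $\|\tilde G_{ih} - \tilde G_{(i-1)h}\| \le C\,\|\thetab_{ih} - \thetab_{(i-1)h}\| \le C\,\|\xb_{ih} - \xb_{(i-1)h}\| = C f_{ih}$. The essential point, however, is that we need the \emph{expectation} of $\delta_i$, not its magnitude: conditioning on $\mathcal{F}_{(i-1)h}$ and using that a $K$th-order integrator ($K \ge 1$) reproduces the one-step mean of the It\^{o} diffusion~\eqref{eq:itodiffusion} up to $O(h^2)$, the mean-zero Brownian increments cancel, so the expected one-step displacement is $O(h)$ --- to leading order $h$ times the generator $\mathcal{L}_i$ (the generator associated with the stale gradient used at iteration $i$) acting on the displacement functional $f_{ih}$. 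Combining with the Lipschitz factor $C$ gives $\|\mathbb{E}[\delta_i \mid \mathcal{F}_{(i-1)h}]\| \le |\mathcal{L}_i f_{ih}|\,C\,h + O(h^2)$, and the outer expectation preserves this.

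The hard part is getting the correct power of $h$: the naive bound $\|\mathbb{E}\,\delta_i\| \le \mathbb{E}\|\delta_i\| \le C\,\mathbb{E} f_{ih}$ only gives $O(\sqrt h)$ per step because of the Brownian contribution, so one must keep the expectation inside and exploit the martingale structure of the injected noise, and then control the It\^{o}--Taylor remainders uniformly in $l$ --- this is where Assumption~\ref{ass:Fx} (smoothness and polynomial growth of $F$, hence of the gradient map) and the boundedness assumptions on $\psi$ and $\mathcal{V}$ enter, and where one must be careful that $f_{ih} = \|\xb_{ih} - \xb_{(i-1)h}\|$ corresponds to a function that is not smooth at its own base point, so the generator expansion is best carried out on the smooth coordinate displacement $\xb - \xb_{(i-1)h}$ (or a smooth surrogate) before taking norms. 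A secondary bookkeeping issue is that each per-step generator $\mathcal{L}_i$ is itself defined through a stale gradient, so the telescoping has to be organized consistently with the staleness pattern $\{\tau_i\}$.
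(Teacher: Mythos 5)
Your argument follows essentially the same route as the paper's own proof: telescope the stale-to-current gradient gap over the at most $\tau_l \le \tau$ intervening iterations, convert gradient increments to state increments via the Lipschitz constant $C$, bound each expected one-step displacement by $\left|\mathcal{L}_i f_{ih}\right| h + O(h^2)$ using the one-step expansion of the $K$th-order integrator, and finish by taking the maximum over the window. Your remark that the crude estimate $\mathbb{E}\left\|\xb_{ih} - \xb_{(i-1)h}\right\| = O(\sqrt{h})$ must be avoided by keeping the expectation inside the norm (so that the mean-zero Brownian increments cancel) identifies a real subtlety that the paper's own write-up glosses over --- it applies the integrator expansion directly to the non-smooth functional $f_{lh}$ --- so on that point your version is, if anything, the more careful one.
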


\begin{proof}
Note the randomness of $\tilde{G}_{lh}$ comes from two sources, the injected Gaussian noise
and the stochastic gradient noise. We denote the expectations with respect to these two randomness
as $\mathbb{E}_{\zeta}$ and $\mathbb{E}_{g}$, respectively. The whole expectation thus can be 
decomposed as $\mathbb{E} = \mathbb{E}_{\zeta} \mathbb{E}_g$.

Applying the Lipschitz property of $\tilde{G}_{lh}$, we have
%
\begin{allowdisplaybreaks}
\begin{align*}
	&\left\|\mathbb{E}\left(\tilde{G}_{(l - \tau_l)h} - \tilde{G}_{lh}\right)\right\| = \left\|\mathbb{E}_{\zeta}\left(G_{(l - \tau_l)h} - G_{lh}\right)\right\| \\
	&\leq \mathbb{E}_{\zeta}\left\|\left(G_{(l - \tau_l)h} - G_{lh}\right)\right\| \\
	&\leq C\mathbb{E}_{\zeta}\left\|\left(\thetab_{(l - \tau_l)h} - \theta_{lh}\right)\right\| \\
	&\leq C\mathbb{E}_{\zeta}\left\|\sum_{i = l - \tau_l}^{l-1} \left(\thetab_{(ih)} - \thetab_{(i+1)h}\right)\right\| \\
	&\leq C \sum_{i = l - \tau_l}^{l-1} \mathbb{E}_{\zeta}\left\|\left(\thetab_{(ih)} - \thetab_{(i+1)h}\right)\right\| \\
	&\leq C \sum_{i = l - \tau_l}^{l-1} \mathbb{E}_{\zeta}\left\|\xb_{(i+1)h} - \xb_{ih}\right\|\\
\end{align*}
From the definition of $K$th-order integrator, {\it i.e.}, $\mathbb{E}_{\zeta}f(\xb_{lh}) = e^{\tilde{\mathcal{L}}_l h}f(\xb_{(l-1)h}) + O(h^{K+1})$,
if we let 
$$f(\xb_{lh}) = \left\|\xb_{lh} - \xb_{(l-1)h}\right\| \triangleq f_{lh}~,$$ 
where $\xb_{(l-1)h}$ is the starting point in the $l$-th iteration, 
and note that 
$$f(\xb_{(l-1)h}) = 0~.$$ 
We have
\begin{align}
	&C \sum_{i = l - \tau_l}^{l-1} \mathbb{E}_{\zeta}\left\|\xb_{(i+1)h} - \xb_{ih}\right\| \triangleq C \sum_{i = l - \tau_l}^{l-1} \mathbb{E}_{\zeta} f(\xb_{lh}) \\
	\leq& C \sum_{i = l - \tau_l}^{l-1} \left(e^{\mathcal{L}_i h} f(\xb_{(i-1)h}) + O(h^{K+1})\right)\\
	\leq& C  \sum_{i = l - \tau_l}^{l-1} \left|\mathcal{L}_if_{ih}\right| h + O(h^2) \label{eq:x_dif2} \\
	\leq& \max_{i=l-\tau_l}^{l-1} \left|\mathcal{L}_i f_{ih}\right| C \tau h + O(h^2)~, \nonumber
\end{align}
where \eqref{eq:x_dif2} is obtained by expanding the exponential operator and the assumption that
the high order terms are bounded.
\end{allowdisplaybreaks}
\end{proof}

Now we proceed to prove Theorem~\ref{theo:bias}. The basic technique follows \cite{ChenDC:NIPS15},
thus we skip some derivations for some steps.

\begin{proof}[Proof of Theorem~\ref{theo:bias}]

Before the proof, let us first define some notation. First, define the operator $\Delta V_l$ for each $l$ as a differential operator as
for any function $\psi$:
\begin{align*}
	\Delta V_l \psi \triangleq \left(\tilde{G}_{l - \tau_l} - G_l\right) \cdot \nabla_{\pb} \psi~.
\end{align*}

Second, define the local generator, $\tilde{\mathcal{L}}_l$, for an It\^{o} diffusion, where the true gradient in \eqref{eq:itodiffusion} 
is replaced with the stochastic gradient from the $l$-th iteration, {\it i.e.}, $\tilde{\mathcal{L}}_lf(\Xb_t) \triangleq$
\begin{align*}
	\rbr{\tilde{F}_l(\xb_t) \cdot \nabla + \frac{1}{2}\left(\sigma(\xb_t) \sigma(\xb_t)^T\right)\!:\! \nabla \nabla^T} f(\xb_t)~,
\end{align*} 
for a compactly supported twice differentiable function $f$, where $\tilde{F}_l$ is the same as $F$ but with
the full gradient $G_{lh}$ replaced with the stochastic gradient $\tilde{G}_{lh}$. Based on these definitions,
we have
\begin{align*}
	\tilde{\mathcal{L}}_l = \mathcal{L} + \Delta V_l~.
\end{align*}

Following \cite{ChenDC:NIPS15}, for an SG-MCMC with a $K$th-order integrator, and a test function $\phi$,
we have: 
\begin{align} \label{eq:split_flow1}
	\mathbb{E}&[\psi(\xb_{lh})] = \left(\mathbb{I} + h\tilde{\Lcal}_l\right) \psi(\xb_{(l-1)h}) \\
	&+ \sum_{k=2}^K\frac{h^k}{k!}\tilde{\Lcal}_l^k\psi(\xb_{(l-1)h}) + O\left(\frac{h^{K+1}}{(K+1)!}\tilde{\Lcal}_l^{K+1}\psi_{(l-1)h}\right)~, \nonumber
\end{align}
where $\mathbb{I}$ is the identity map. Sum over $l = 1, \cdots, L$ in \eqref{eq:split_flow1}, 
take expectation on both sides, and use the relation $\tilde{\Lcal}_l = \Lcal + \Delta V_l$ 
to expand the first order term. We obtain
\begin{align*}
	\sum_{l=1}^{L}\mathbb{E}&[\psi(\xb_{lh})] = \psi(\xb_0) + \sum_{l=1}^{L-1} \mathbb{E}[\psi(\xb_{lh})] \\
	&+ h\sum_{l=1}^{L} \mathbb{E}[\mathcal{L}\psi(\xb_{(l-1)h})]
	+ h\sum_{l=1}^L \mathbb{E}[\Delta V_l \psi(\xb_{(l-1)h})] \\
	&+ \sum_{k=2}^K\frac{h^k}{k!}\sum_{l=1}^L \EE[\tilde{\Lcal}_l^k\psi(\xb_{(l-1)h})] \\
	&+ O\left(\frac{h^{K+1}}{(K+1)! }\sum_l\mathbb{E}\tilde{\Lcal}_l^{K+1}\psi_{(l-1)h}\right).
\end{align*}
Divide both sides by $Lh$, use the Poisson equation \eqref{eq:PoissonEq1}, and reorganize terms. We have:
\begin{align}\label{eq:expansion11}
	&\mathbb{E}[\frac{1}{L}\sum_l\phi(\xb_{lh}) - \bar{\phi}] = \frac{1}{L}\sum_{l=1}^{L} \mathbb{E}[\mathcal{L}\psi(\xb_{(l-1)h})] \\
	=&\frac{1}{Lh}\left(\mathbb{E}[\psi(\xb_{lh})] - \psi(\xb_0)\right)
	- \frac{1}{L}\sum_l \mathbb{E}[\Delta V_l\psi(\xb_{(l-1)h})] \nonumber\\
	&- \sum_{k=2}^K\frac{h^{k-1}}{k!L}\sum_{l=1}^L \EE[\tilde{\Lcal}_l^k\psi(\xb_{(l-1)h})] 
	+ O\left(\frac{h^{K}}{(K+1)! L}\sum_l\mathbb{E}\tilde{\Lcal}_l^{K+1}\psi_{(l-1)h}\right) \nonumber
\end{align}
According to \cite{ChenDC:NIPS15}, the term $\sum_l \mathbb{E}[\tilde{\Lcal}_l^{k} \psi(\xb_{(l-1)h})]$ is bounded by
$\sum_l \mathbb{E}[\tilde{\Lcal}_l^{k} \psi(\Xb_{(l-1)h})]$
\begin{align}\label{eq:expansion23}
	= O\left(\frac{1}{h} + h^{K-k+1}\sum_l\mathbb{E}\tilde{\Lcal}_l^{K+1}\psi_{(l-1)h}\right)~,
\end{align}
Substituting \eqref{eq:expansion23} into \eqref{eq:expansion11}, after simplification, we have:
$\mathbb{E}\left(\frac{1}{L}\sum_l\phi(\xb_{lh}) - \bar{\phi}\right)$
\begin{align*}
	=&\frac{1}{Lh}\underbrace{\left(\mathbb{E}[\psi(\xb_{lh})] - \psi(\xb_0)\right)}_{C_1}
	- \underbrace{\frac{1}{L}\sum_l \mathbb{E}[\Delta V_l\psi(\xb_{(l-1)h})]}_{C_2} \\
	&- \sum_{k=2}^KO\left(\frac{h^{k-1}}{Lh} + \frac{h^{K}}{L}\sum_l\frac{1}{k!}\mathbb{E}\tilde{\Lcal}_l^{K}\psi_{(l-1)h}\right) 
	+ \frac{h^{K}}{(K+1)! L}\sum_l\mathbb{E}\tilde{\Lcal}_l^{K+1}\psi_{(l-1)h}~,
\end{align*}
According to the assumption, the term $C_1$ is bounded. 
For term $C_2$, according to the Cauchy--Schwarz inequality, we have
\begin{align*}
	&\left|C_2\right| = \frac{1}{L} \left| \sum_l \mathbb{E}\left(\tilde{G}_{(l - \tau_l)h} - G_{lh}\right) \cdot \mathbb{E}\nabla \psi_{(l-1)h}\right| \\
	\leq& \frac{1}{L} \sum_l \left| \mathbb{E}\left(\tilde{G}_{(l - \tau_l)h} - G_{lh}\right) \cdot \mathbb{E}\nabla \psi_{(l-1)h}\right| \\
	\leq&\frac{1}{L}\sum_l \left\|\mathbb{E}\left(\tilde{G}_{(l - \tau_l)h} - G_{lh}\right)\right\|\left\|\mathbb{E}\nabla \psi_{(l-1)h}\right\| \\
	\leq&\frac{1}{L}\sum_l \left(\left\|\mathbb{E}\left(\tilde{G}_{(l - \tau_l)h} - \tilde{G}_{lh}\right)\right\| + \left\|\mathbb{E}\left(\tilde{G}_{lh} - G_{lh}\right)\right\|\right) 
	\left\|\mathbb{E}\nabla \psi_{(l-1)h}\right\| \\
	=&\frac{1}{L} \sum_l \left\|\mathbb{E}\left(\tilde{G}_{(l - \tau_l)h} - \tilde{G}_{lh}\right)\right\|\left\|\mathbb{E}\nabla \psi_{(l-1)h}\right\|
\end{align*}

Applying \eqref{bound_g2} from Lemma~\ref{lem:bound_g}, we have
\begin{align*}
	\left|C_2\right| &\leq \frac{1}{L} \sum_l \left(\max_{i=l-\tau_l}^l \left\|\mathcal{L}_{i}\right\| \left\|\mathbb{E}\nabla \psi_{lh}\right\| C\tau_l h\right) \\
	&\leq \max_l \left\|\mathcal{L}_{l}\right\| \max_l\left\|\mathbb{E}\nabla \psi_{lh}\right\| C \tau h~.
\end{align*}

As a result, collecting low order terms, the bias can be expressed as:
\begin{align}
	&\left|\mathbb{E}\hat{\phi} - \bar{\phi}\right| = \left|\mathbb{E}\left(\frac{1}{L}\sum_l\phi(\xb_{lh}) - \bar{\phi}\right)\right| \nonumber\\
	=& \left|\frac{C_1}{Lh} - C_2 + h^{K}\sum_{k=1}^K\frac{1}{(k+1)! L}\sum_l\mathbb{E}\tilde{\Lcal}_l^{k+1}\psi_{(l-1)h}\right|~. \nonumber\\
\end{align}
As a result, there exists some constant $D_1$ independent of $(L, h, \tau)$, such that
\begin{align}
	&\left|\mathbb{E}\hat{\phi} - \bar{\phi}\right| 
	\leq D_1\left|\frac{1}{Lh}\right| + \left|C_2\right| + \left|M_1 \tau h + \left|M_2 h^K\right| \right|\label{eq:final_bound}\\
	=& D_1\left(\frac{1}{Lh} + M_1 \tau h + M_2h^K\right)~,\nonumber
\end{align}
where $M_1 \triangleq \max_l \left\|\mathcal{L}_{l}\right\| \max_l\left\|\mathbb{E}\nabla \psi_{lh}\right\| C$,
$M_2 \triangleq \sum_{k=1}^K\frac{1}{(k+1)! L}\sum_l\mathbb{E}\tilde{\Lcal}_l^{k+1}\psi_{(l-1)h}$.
\eqref{eq:final_bound} follows by substituting the inequality for $C_2$ above. 
This completes the proof.
\end{proof}

\section{Proof of Theorem~\ref{theo:MSE}}

\begin{proof}
Similar to the proof of Theorem~\ref{theo:bias}, we first expand $\mathbb{E}\psi_{lh}$ using the property
of $K$th-order integrator as
\begin{align*}
	\sum_{l=1}^L&\mathbb{E}\left(\psi(\xb_{lh})\right) = \sum_{l=1}^L \psi(\xb_{(l-1)h}) + h\sum_{l=1}^L \mathcal{L}\psi(\xb_{(l-1)h}) \\
	&+ h\sum_{l=1}^L \Delta V_l\psi(\xb_{(l-1)h})
	+ \sum_{k=2}^K\frac{h^k}{k!}\sum_{l=1}^L\tilde{\mathcal{L}}_l^k \psi(\xb_{(l-1)h})\\
	& + O\left(\frac{h^{K+1}}{(K+1)!}\sum_l\tilde{\Lcal}_l^{K+1}\psi_{(l-1)h}\right)~.
\end{align*}
Substituting the Poisson equation \eqref{eq:PoissonEq1} into the above equation, dividing both sides by $Lh$ 
and rearranging related terms arrives
\begin{align}
	\hat{\phi} -& \bar{\phi} = \frac{1}{Lh}\left(\mathbb{E}\psi(\xb_{Lh}) - \psi(\xb_0)\right) \label{eq:expansion12}\\
	-& \frac{1}{Lh}\sum_{l=1}^{L}\left(\mathbb{E}\psi_{(l-1)h} - \psi_{(l-1)h}\right)
	- \frac{1}{L}\sum_{l=1}^L \Delta V_l\psi_{(l-1)h} \nonumber\\
	-& \sum_{k=2}^K\frac{h^{k-1}}{2L}\sum_{l=1}^L\tilde{\mathcal{L}}_l^k \psi(\xb_{(l-1)h}) 
	+ O\left(\frac{h^{K}}{L(K+1)!}\sum_l\tilde{\Lcal}_l^{K+1}\psi_{(l-1)h}\right) \nonumber
\end{align}
Taking square on both sides, we have there exists some positive constant $D$, such that
\begin{align}\label{eq:mse1}
	&\left(\hat{\phi} - \bar{\phi}\right)^2\leq D\left(\underbrace{\frac{\left(\mathbb{E}\psi_{Lh} - \psi_0\right)^2}{L^2h^2}}_{A_1} + \underbrace{\frac{1}{L^2h^2}\sum_{l=1}^L\left(\mathbb{E}\psi_{(l-1)h} - \psi_{(l-1)h}\right)^2}_{A_2} \right.\nonumber\\
	+& \left.\underbrace{\left(\frac{1}{L}\sum_{l=1}^L \Delta V_l\psi_{(l-1)h}\right)^2}_{A_3} + \underbrace{\sum_{k=2}^K\frac{h^{2(k-1)}}{k!L^2}\left(\sum_{l=1}^L\tilde{\mathcal{L}}_l^k \psi_{(l-1)h}\right)^2}_{A_4} 
	+ \underbrace{\left(\frac{\sum_l\tilde{\Lcal}_l^{K+1}\psi_{(l-1)h}}{L(K+1)!}\right)^2h^{2K}}_{A_5}\right) 
\end{align}
After taking expectation, we have
\begin{align*}
\mathbb{E}\left(\hat{\phi} - \bar{\phi}\right)^2 \leq C\left(\mathbb{E}A_1 + \mathbb{E}A_2 + \mathbb{E}A_3 + \mathbb{E}A_4 + \mathbb{E}A_5\right)
\end{align*}
$A_1$ is easily bounded by the assumption that $\|\psi\| \leq V^{p_0} < \infty$. From the proof of Theorem~3 in
\cite{ChenDC:NIPS15}, $A_2$ and $A_4$ are also bounded, which are summarized in Lemma~\ref{lem:A_2_A_3}.
\begin{lemma}\label{lem:A_2_A_3}
The terms $\mathbb{E}A_2$ and $\mathbb{E}A_4$ are bounded by:
\begin{align*}
	\mathbb{E}A_2 &= O\left(\frac{1}{Lh}\right) \\
	\mathbb{E}A_4 &= O\left(\frac{1}{Lh} + h^{2K}\sum_{k=2}^K\frac{1}{Lk!}\sum_{l}\tilde{\Lcal}_l^{k+1}\psi_{(l-1)h}\right)~.
\end{align*}
\end{lemma}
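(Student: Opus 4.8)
The plan is to notice that, in the decomposition \eqref{eq:mse1}, all dependence on the staleness parameter $\tau$ is confined to $A_3=\big(\tfrac{1}{L}\sum_l\Delta V_l\psi_{(l-1)h}\big)^2$: $\Delta V_l$ is the only object that involves the stale gradient, while $A_1,A_2,A_4,A_5$ admit the same bounds as in the MSE analysis of standard SG-MCMC. Hence the bounds on $A_2$ and $A_4$ follow from the proof of Theorem~3 of \cite{ChenDC:NIPS15}, whose argument uses only features that persist in the stale setting: the adaptedness of the $l$-th iterate to the $\sigma$-algebra $\mathcal{F}_{lh}$ generated by the minibatch and noise draws through step $l$, the single-step moment estimate, and the boundedness of $\psi$, its derivatives, and $F$ (Assumptions~\ref{ass:assumption1} and~\ref{ass:Fx}). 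I would therefore invoke that result, recalling the two key steps for completeness.

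For $\mathbb{E}A_2$: after the index shift, the quantity $A_2$ governs is a sum of martingale increments $\eta_l\triangleq\psi(\xb_{lh})-\mathbb{E}[\psi(\xb_{lh})\mid\mathcal{F}_{(l-1)h}]$ --- the difference between $\psi$ at step $l$ and its one-step-ahead conditional mean --- so orthogonality of the increments gives $\mathbb{E}\big(\sum_l\eta_l\big)^2=\sum_l\mathbb{E}\eta_l^2$. One step of a $K$th-order integrator displaces the state by an $O(h)$ drift, whose stale-gradient factor has bounded moments by Assumption~\ref{ass:sto}, plus an $O(\sqrt{h})$ Brownian increment, so $\mathbb{E}\|\xb_{lh}-\xb_{(l-1)h}\|^2=O(h)$; combined with $\mathbb{E}\eta_l^2\le\mathbb{E}\|\psi(\xb_{lh})-\psi(\xb_{(l-1)h})\|^2$, the mean-value inequality, the bound on $\nabla\psi$ from Assumption~\ref{ass:assumption1}, and Cauchy--Schwarz, this gives $\mathbb{E}\eta_l^2=O(h)$. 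Therefore $\mathbb{E}A_2=\tfrac{1}{L^2h^2}\sum_l\mathbb{E}\eta_l^2=\tfrac{1}{L^2h^2}\cdot O(Lh)=O\!\big(\tfrac{1}{Lh}\big)$.

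For $\mathbb{E}A_4$: I would treat each summand $\tfrac{h^{2(k-1)}}{k!L^2}\big(\sum_l\tilde{\mathcal{L}}_l^k\psi_{(l-1)h}\big)^2$ with the device behind \eqref{eq:expansion23}: for $k\le K$, the $K$th-order expansion lets one trade powers of $\tilde{\mathcal{L}}_l$ for powers of $h$, writing $\tilde{\mathcal{L}}_l^k\psi_{(l-1)h}$ as a telescoping one-step difference plus a martingale fluctuation plus an $h^{K-k+1}$-order remainder controlled by $\tilde{\mathcal{L}}_l^{K+1}\psi$ and Assumption~\ref{ass:Fx}; martingale orthogonality on the fluctuation, telescoping on the difference, and $\big(\sum_l a_l\big)^2\le L\sum_l a_l^2$ on the remainder then give $\mathbb{E}\big(\sum_l\tilde{\mathcal{L}}_l^k\psi_{(l-1)h}\big)^2=O\!\big(h^{-2}+h^{2(K-k+1)}\tfrac{1}{L}\sum_l\mathbb{E}\tilde{\mathcal{L}}_l^{k+1}\psi_{(l-1)h}\big)$; multiplying by $\tfrac{h^{2(k-1)}}{k!L^2}$ and summing over $k=2,\dots,K$ produces the two claimed terms (the $h^{-2}$ piece is of order $h^{2k-4}/L^2$, which is $O(1/(Lh))$ for $k\ge2$, and the remainder contributes $h^{2K}\tfrac{1}{k!L}\sum_l\mathbb{E}\tilde{\mathcal{L}}_l^{k+1}\psi_{(l-1)h}$). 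The step I expect to be the real obstacle is exactly this control of $\mathbb{E}\big(\sum_l\tilde{\mathcal{L}}_l^k\psi_{(l-1)h}\big)^2$: since Assumption~\ref{ass:assumption1} bounds only the low-order derivatives of $\psi$ while $\tilde{\mathcal{L}}_l^k$ is a differential operator of order $2k$, one cannot differentiate naively and must lean on the recursive integrator expansion, keeping track of which remainders stay bounded --- the bookkeeping already carried out in \cite{ChenDC:NIPS15}, which is why I would cite it rather than reproduce it in full. Everything else (martingale orthogonality and the single-step moment estimate) is routine and untouched by the staleness.
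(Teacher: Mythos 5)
Your proposal matches the paper exactly: the paper offers no independent proof of this lemma, simply observing that $A_2$ and $A_4$ do not involve the stale gradient and invoking the proof of Theorem~3 in \cite{ChenDC:NIPS15}, which is precisely your strategy. The additional sketch you give (martingale orthogonality with one-step increments of size $O(h)$ in $L^2$ for $A_2$, and the recursive integrator expansion behind \eqref{eq:expansion23} for $A_4$) is consistent with the cited argument and, if anything, more careful than the paper's own presentation.
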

We are left to show a bound for $\mathbb{E}A_3$. First we have
\begin{align*}
&\mathbb{E}A_3 = \mathbb{E}\left(\frac{1}{L}\sum_{l=1}^L \Delta V_l\psi_{(l-1)h}\right)^2 \\
=&\mathbb{E}\left(\frac{1}{L}\sum_{l=1}^L \left(\tilde{G}_{(l-\tau_l)h} - G_{lh}\right)\cdot \nabla_{\pb} \psi_{(l-1)h}\right)^2 \\
=&\frac{1}{L^2}\sum_{i=1}^L\sum_{j=1}^L \mathbb{E} \left[\left(\tilde{G}_{(i-\tau_i)h} - G_{ih}\right)\cdot \nabla_{\pb} \psi_{(i-1)h} 
\left(\tilde{G}_{(j-\tau_j)h} - G_{jh}\right)\cdot \nabla_{\pb} \psi_{(j-1)h}\right]
\end{align*}
Using the Cauchy--Schwartz inequality, we have
\begin{align*}
\leq& \frac{1}{L^2}\sum_{i=1}^L\sum_{j=1}^L \left\|\mathbb{E} \left(\tilde{G}_{(i-\tau_i)h} - G_{ih}\right)\right\| 
\left\|\mathbb{E} \left(\tilde{G}_{(j-\tau_j)h} - G_{jh}\right)\right\| 
\left\|\mathbb{E}\nabla\psi_{(i-1)h}\right\| \left\|\mathbb{E}\nabla\psi_{(j-1)h}\right\| \\
\leq& \frac{1}{L^2}\sum_{i=1}^L\sum_{j=1}^L \left(\left\|\mathbb{E} \left(\tilde{G}_{(i-\tau_i)h} - \tilde{G}_{ih}\right)\right\| + \left\|\mathbb{E} \left(\tilde{G}_{ih} - G_{ih}\right)\right\|\right) \\
&\left(\left\|\mathbb{E} \left(\tilde{G}_{(j-\tau_j)h} - \tilde{G}_{jh}\right)\right\| + \left\|\mathbb{E} \left(\tilde{G}_{jh} - G_{jh}\right)\right\|\right) 
\left\|\mathbb{E}\nabla\psi_{(i-1)h}\right\| \left\|\mathbb{E}\nabla\psi_{(j-1)h}\right\| \\
=& \frac{1}{L^2}\sum_{i=1}^L\sum_{j=1}^L \left\|\mathbb{E} \left(\tilde{G}_{(i-\tau_i)h} - \tilde{G}_{ih}\right)\right\| 
\left\|\mathbb{E} \left(\tilde{G}_{(j-\tau_j)h} - \tilde{G}_{jh}\right)\right\| 
\left\|\mathbb{E}\nabla\psi_{(i-1)h}\right\| \left\|\mathbb{E}\nabla\psi_{(j-1)h}\right\|
\end{align*}
Applying \eqref{bound_g2} from Lemma~\ref{lem:bound_g}, we have
\begin{align*}
\mathbb{E}A_3 \leq \max_{l} \left\|\mathbb{E}\nabla\psi_{lh}\right\|^2 \max_l \left(\mathcal{L}_l f_{lh}\right)^2 C^2 \tau^2 h^{2}~.
\end{align*}
Collecting low order terms from the above bounds, we have there exists some constant $D_2$ independent of $(L, h, \tau)$,
such that
\begin{align*}
&\mathbb{E}\left(\hat{\phi} - \bar{\phi}\right)^2 \\
\leq& \frac{C_1}{Lh} + C_2h^{2K} + \max_{l} \left\|\mathbb{E}\nabla\psi_{lh}\right\|^2 \max_l \left\|\mathcal{L}_l\right\|^2 C^2 \tau^2 h^{2} \\
\leq& D_2\left(\frac{1}{Lh} + \tilde{M}_1\tau^2 h^{2} + \tilde{M}_2 h^{2K}\right)~,
\end{align*}
where $\tilde{M}_1 \triangleq \max_{l} \left\|\mathbb{E}\nabla\psi_{lh}\right\|^2 \max_l \left(\mathcal{L}_lf_{lh}\right)^2 C^2$,
$\tilde{M}_2 \triangleq \mathbb{E}\left(\frac{1}{L(K+1)!}\sum_l\tilde{\Lcal}_l^{K+1}\psi_{(l-1)h}\right)^2$.
This completes the proof.
\end{proof}

\section{Proof of Theorem~\ref{theo:var}}

In the proof, we will use the following simple result stated Lemma~\ref{lem:martingale}.

\begin{lemma}\label{lem:martingale}
Let $(\mathcal{M}_1, \cdots, \mathcal{M}_N)$ be a set of independent martingale, {\it i.e}, 
$\mathbb{E}\left[\mathcal{M}_n | \mathcal{F}\right] = 0$, where $\mathcal{F}$ is the filtration
generated by $\mathcal{M}_n$. Then we have
\begin{align}\label{eq:martingale}
	\mathbb{E}\left[\left(\sum_{n=1}^N \mathcal{M}_n\right)^2|\mathcal{F}\right] = \sum_{n=1}^N \mathbb{E}\left[\mathcal{M}_n^2|\mathcal{F}\right]~.
\end{align}
\end{lemma}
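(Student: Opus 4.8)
The plan is to prove the identity by expanding the square and showing that every off-diagonal (cross) term contributes nothing, so that only the diagonal terms survive. First I would write
\[
\left(\sum_{n=1}^N \mathcal{M}_n\right)^2 = \sum_{n=1}^N \mathcal{M}_n^2 + \sum_{i \neq j} \mathcal{M}_i \mathcal{M}_j~,
\]
and, by linearity of conditional expectation, reduce the claim to verifying that $\mathbb{E}[\mathcal{M}_i \mathcal{M}_j \mid \mathcal{F}] = 0$ for every pair $i \neq j$. This isolates the only nontrivial point: the orthogonality of distinct martingale-difference terms.

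Next I would make the filtration explicit, since the statement uses a single symbol $\mathcal{F}$ somewhat informally. Let $\mathcal{F}_n$ denote the $\sigma$-algebra generated by $\mathcal{M}_1, \ldots, \mathcal{M}_n$, so that $\mathcal{F}_0 \subseteq \mathcal{F}_1 \subseteq \cdots$ is increasing and the hypothesis reads as the martingale-difference property $\mathbb{E}[\mathcal{M}_n \mid \mathcal{F}_{n-1}] = 0$. For a fixed pair with $i < j$, the factor $\mathcal{M}_i$ is $\mathcal{F}_{j-1}$-measurable, so by the tower property and by pulling out the known factor,
\[
\mathbb{E}[\mathcal{M}_i \mathcal{M}_j] = \mathbb{E}\!\left[\mathbb{E}[\mathcal{M}_i \mathcal{M}_j \mid \mathcal{F}_{j-1}]\right] = \mathbb{E}\!\left[\mathcal{M}_i\, \mathbb{E}[\mathcal{M}_j \mid \mathcal{F}_{j-1}]\right] = 0~,
\]
where the final equality invokes the martingale-difference property; the symmetric computation disposes of the case $i > j$. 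Assembling the surviving diagonal terms then yields $\sum_{n=1}^N \mathbb{E}[\mathcal{M}_n^2 \mid \mathcal{F}]$, which is exactly \eqref{eq:martingale}.

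The main obstacle is interpretive rather than computational: the lemma writes a single $\mathcal{F}$, but the argument genuinely requires a nested family so that ``each term has conditional mean zero given the strict past'' can be used to kill the cross terms. Once the filtration is set up correctly—treating the $\mathcal{M}_n$ as a sequence of martingale increments adapted to $(\mathcal{F}_n)$—the vanishing of cross terms is the classical orthogonality of martingale differences, and the remaining bookkeeping is routine. I would therefore spend the bulk of the write-up clarifying the filtration convention and state the tower-property step carefully, leaving the final collection of diagonal terms as immediate.
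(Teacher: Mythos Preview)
Your proposal is correct and follows the same overall strategy as the paper: expand the square, separate diagonal from off-diagonal terms, and show the cross terms vanish so only $\sum_n \mathbb{E}[\mathcal{M}_n^2\mid\mathcal{F}]$ remains.

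The one difference worth noting is \emph{how} the cross terms are killed. The paper leans on the word ``independent'' in the hypothesis and simply factors $\mathbb{E}[\mathcal{M}_i\mathcal{M}_j\mid\mathcal{F}]=\mathbb{E}[\mathcal{M}_i\mid\mathcal{F}]\,\mathbb{E}[\mathcal{M}_j\mid\mathcal{F}]=0$; no filtration bookkeeping is done at all. You instead take the ``martingale'' reading seriously, introduce the nested filtration $(\mathcal{F}_n)$, and use the tower property together with $\mathcal{F}_{j-1}$-measurability of $\mathcal{M}_i$ to get orthogonality of increments. Your route is a bit longer but is the more honest one given the lemma's name, and it has the advantage of not requiring full independence---it works for any martingale-difference sequence. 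The paper's route is shorter but relies on the independence assumption that is only loosely stated. Either way the computation is routine and your write-up is fine; just be aware that the paper's own argument is the one-line factorization rather than the tower-property version.
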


\begin{proof}
\begin{align*}
&\mathbb{E}\left[\left(\sum_{n=1}^N \mathcal{M}_n\right)^2|\mathcal{F}\right]
= \mathbb{E}\left[\sum_{i=1}^N\sum_{j=1}^N \mathcal{M}_i\mathcal{M}_j|\mathcal{F}\right] \\
=& \mathbb{E}\left[\sum_{i=1}^N \mathcal{M}_i^2|\mathcal{F}\right] + \sum_{i\neq j} \mathbb{E}\left[\mathcal{M}_i|\mathcal{F}\right] \mathbb{E}\left[\mathcal{M}_j|\mathcal{F}\right] \\
=& \sum_{i=1}^N \mathbb{E}\left[\mathcal{M}_i^2|\mathcal{F}\right]~.
\end{align*}
\end{proof}

In the following we will omitted the filtration $\mathcal{F}$ in the expectation for simplicity.
We we now ready to prove Theorem~\ref{theo:var}.

\begin{proof}
By definition, we have
\begin{align*}
	\mbox{Var}\left(\hat{\phi}_L\right) = \mathbb{E}\left(\hat{\phi}_L - \bar{\phi} - \left(\mathbb{E}\hat{\phi}_L - \bar{\phi}\right)\right)^2
\end{align*}
Substitute \eqref{eq:expansion11} and \eqref{eq:expansion12} into the above equation, we have 
\begin{align*}
	\hat{\phi}_L &- \mathbb{E} \bar{\phi} = -\frac{1}{Lh}\sum_l \left(\mathbb{E} \psi_{(l-1)h} - \psi_{(l-1)h}\right)\\
	&- \frac{1}{L} \sum_l \left(A_1 - \mathbb{E}A_1\right)
	- \sum_k \frac{h^{k-1}}{k! L} \sum_l \left(A_2 - \mathbb{E}A_2\right) 
	- \frac{h^{K}}{(K+1)!L} \sum_l \left(A_3 - \mathbb{E}A_3\right)~,
\end{align*}
where
\begin{align*}
	A_1 &\triangleq \Delta V_l \psi_{(l-1)h} \\
	A_2 &\triangleq \tilde{\mathcal{L}}_l^k \psi_{(l-1)h} \\
	A_3 &\triangleq \tilde{\mathcal{L}}_l^{K+1} \psi_{(l-1)h}~.
\end{align*}
Take square on both sides, following by expectation, and note that all $(A_i - \mathbb{E}A_i)$ are martingale for
$i = 1, 2, 3$, which allows us to use \eqref{eq:martingale} from Lemma~\ref{lem:martingale}.
We have there exists a constant $D$ independent of $(L, h, \tau)$, such that
\begin{align*}
	&\mbox{Var}\left(\hat{\phi}_L\right) \leq D\left(\frac{1}{L^2h^2}\mathbb{E}\left(\sum_l \left(\mathbb{E} \psi_{(l-1)h} - \psi_{(l-1)h}\right)\right)^2\right.\\
	&+ \frac{1}{L^2} \sum_l \mathbb{E}\left(A_1 - \mathbb{E}A_1\right)^2 
	+ \sum_k \frac{h^{2(k-1)}}{(k! L)2} \sum_l \mathbb{E}\left(A_2 - \mathbb{E}A_2\right)^2 \\
	&\left.+ \frac{h^{2K}}{((K+1)!L)^2} \sum_l \mathbb{E}\left(A_3 - \mathbb{E}A_3\right)^2\right) \\
	&\leq D\left(\underbrace{\frac{1}{L^2h^2} \mathbb{E}\left(\sum_l \left(\mathbb{E} \psi_{(l-1)h} - \psi_{(l-1)h}\right)\right)^2}_{B_1} \right. \\
	&~~~~+ \left.\frac{1}{L^2} \sum_l \mathbb{E}\left(A_1 - \mathbb{E}A_1\right)^2
	+ \sum_{k=2}^K \frac{h^{2(k-1)}}{(k! L)2} \sum_l \mathbb{E}A_2^2 
	+ \frac{h^{2K}}{((K+1)!L)^2} \sum_l \mathbb{E}A_3^2\right)~.
\end{align*}
According to Lemma~\ref{lem:A_2_A_3}, $B_1$ is bounded by 
\begin{align*}
	B_1 = O\left(\frac{1}{Lh}\right)~. 
\end{align*}	
Furthermore, according to the assumptions,
both $\mathbb{E}A_2^2$ and $\mathbb{E}A_3^2$ are bounded. The delayed parameter $\tau$ exists in 
$\mathbb{E}\left(A_1 - \mathbb{E}A_1\right)^2$, we have
\begin{align*}
	&\mathbb{E}\left(A_1 - \mathbb{E}A_1\right)^2 \\
	=& \mathbb{E}\left(\Delta V_l\psi_{(l-1)h} - \mathbb{E}\Delta V_l\psi_{(l-1)h}\right)^2 \\
	=&\mathbb{E}\left(\left(\tilde{G}_{(l-\tau_l)h} - G_{lh}\right)\cdot \nabla_{\pb} \psi_{(l-1)h} 
	- \mathbb{E}\left(\tilde{G}_{(l-\tau_l)h} - G_{lh}\right)\cdot \nabla_{\pb} \psi_{(l-1)h}\right)^2
\end{align*}
Expanding the terms, we have there exists a constant $D_1$ such that
\begin{align*}
	&\mathbb{E}\left(A_1 - \mathbb{E}A_1\right)^2 \\
	\leq&D_1\mathbb{E} \left(\tilde{G}_{(l-\tau_l)h}\cdot \nabla_{\pb} \psi_{(l-1)h} - \mathbb{E}\tilde{G}_{(l-\tau_l)h}\cdot \nabla_{\pb} \psi_{(l-1)h} \right)^2\\
	&~~+D_1\mathbb{E} \left(G_{lh}\cdot \nabla_{\pb} \psi_{(l-1)h} - \mathbb{E}G_{lh}\cdot \nabla_{\pb} \psi_{(l-1)h} \right)^2 \\
	=&D_1\mathbb{E} \left(\tilde{G}_{(l-\tau_l)h}\cdot \nabla_{\pb} \psi_{(l-1)h}\right)^2
	+D_1\mathbb{E} \left(G_{lh}\cdot \left(\nabla_{\pb} \psi_{(l-1)h} - \nabla_{\pb} \psi_{(l-1)h}\right) \right)^2 \\
	\leq&D_1\left(\mathbb{E} \left\|\tilde{G}_{(l-\tau_l)h}\right\|^2 \mathbb{E}\left\|\nabla_{\pb} \psi_{(l-1)h}\right\|^2
	+ \mathbb{E} \left\|G_{lh}\right\|^2 \mathbb{E}\left\|\nabla_{\pb} \psi_{(l-1)h} - \nabla_{\pb} \psi_{(l-1)h}\right\|^2\right) \\
	\leq&D_1\sup_l \left\{\mathbb{E} \left\|\tilde{G}_{lh}\right\|^2 \mathbb{E}\left\|\nabla_{\pb} \psi_{lh}\right\|^2 
	+ \mathbb{E} \left\|G_{lh}\right\|^2 \mathbb{E}\left\|\nabla_{\pb} \psi_{lh}\right\|^2\right\}~.
\end{align*}
According to the assumptions, the above bound is bounded, and does not depend on $\tau$.
As a result,
\begin{align*}
	\frac{1}{L^2}\sum_{l}\mathbb{E}\left(A_1 - \mathbb{E}A_1\right)^2
	\leq \frac{D_1}{L}~.
\end{align*}
In addition, the bounds for both $\mathbb{E}A_2^2$ and $\mathbb{E}A_3^2$ are given in Lemma~\ref{lem:A_2_A_3},
which are higher-order terms with respect to $h$, {\it i.e.}, $O\left(h^{2K}\right)$. 

Collecting low order terms, we have 
there exists a constant $D$ independent of $(L, h, \tau)$, such that the variance is bounded by:
\begin{align*}
\mbox{Var}\left(\hat{\phi}_L\right) \leq D\left(\frac{1}{Lh} + h^{2K} \right) = D\left(\frac{1}{W \bar{L} h} + h^{2K} \right)~.
\end{align*}
\end{proof}

\section{Proof of Theorem~\ref{theo:mul-servers}}

We separate the proof for the bias and MSE, respectively.

\begin{proof}[Proof for the bias]
According to the definition of $\hat{\phi}_L^S$, we have
\begin{align}
	&\left|\mathbb{E}\hat{\phi}_L^S - \bar{\phi}\right| = \left|\mathbb{E}\sum_{s=1}^S\frac{T_s}{T}\hat{\phi}_{L_s} - \bar{\phi}\right| \nonumber\\
	=&\left|\sum_{s=1}^S\frac{T_s}{T}\mathbb{E}\left(\hat{\phi}_{L_s} - \bar{\phi}\right)\right| \nonumber\\
	\leq& \sum_{s=1}^S \frac{T_s}{T} \left|\mathbb{E}\hat{\phi}_{L_s} - \bar{\phi}\right| \nonumber\\
	=& \sum_{s=1}^S \frac{T_s}{T} D_1\left(\frac{1}{L_sh_s} + \left(M_1 \tau h_s + M_2h_s^K\right)\right) \label{eq:bias_single}\\
	=& D_1\left(\frac{S}{T} + \sum_{s=1}^S \frac{T_s}{T} \left(M_1 \tau h_s + M_2h_s^K\right)\right) \nonumber \\
	\leq& D_1\left(\frac{S}{T} + \frac{ST_m}{T} \left(M_1 \tau h_m + M_2h_m^K\right)\right)~,
\end{align}
where $T_m \triangleq \max_l T_l$, $h_m \triangleq \max_l h_l$, \eqref{eq:bias_single} follows by 
substituting the bias from Theorem~\ref{theo:bias} for each server into the formula.
\end{proof}

Similarly, for the MSE bound, we have
\begin{align*}
	&\mathbb{E}\left(\hat{\phi}_L^S - \bar{\phi}\right)^2 = \mathbb{E}\left(\sum_{s=1}^S \frac{T_s}{T}\left(\hat{\phi}_{L_s} - \bar{\phi}\right)\right)^2  \\
	=& \sum_{s=1}^S \frac{T_s^2}{T^2} \mathbb{E}\left(\hat{\phi}_{L_s} - \bar{\phi}\right)^2
		+ \sum_{i\neq j} \frac{T_i T_j}{T_2} \mathbb{E}\left[\hat{\phi}_{L_i} - \bar{\phi}\right] \mathbb{E}\left[\hat{\phi}_{L_j} - \bar{\phi}\right]  \\
	\leq& \sum_{s=1}^S \frac{T_s^2}{T^2} \mathbb{E}\left(\hat{\phi}_{L_s} - \bar{\phi}\right)^2
		+ \sum_{i\neq j} \frac{T_i T_j}{T^2} \left|\mathbb{E}\hat{\phi}_{L_i} - \bar{\phi}\right| \left|\mathbb{E}\hat{\phi}_{L_j} - \bar{\phi}\right|~.
\end{align*}
Substituting the bounds for single chain bias and MSE from Theorem~\ref{theo:bias}
and Theorem~\ref{theo:MSE}, respectively, we have
\begin{align*}
	\leq& \sum_{s=1}^S \frac{T_s^2}{T^2} D_2^\prime\left(\frac{1}{T_s} + \left(\tilde{M}_1\tau^2 h_s^{2} + \tilde{M}_2 h_s^{2K}\right)\right)  \\
	&+ \sum_{i\neq j} \frac{T_i T_j}{T^2} D_1\left(\frac{1}{T_i} + \left(M_1\tau h_i + M_2 h_i^{K}\right)\right)
	D_1\left(\frac{1}{T_j} + \left(M_1\tau h_j + M_2 h_j^{K}\right)\right) \\
	\leq& D_2\left(\frac{1}{T} + \frac{S^2-S}{T^2} + \sum_{i,j} \frac{T_i T_j}{T^2} \left(M_1^2\tau^2 h_m^{2} + M_2^2 h_m^{2K}\right)\right) \\
	\leq& D_2\left(\frac{1}{T} + \frac{S^2-S}{T^2} + \frac{S^2 T_m^2}{T^2} \left(M_1^2\tau^2 h_m^{2} + M_2^2 h_m^{2K}\right)\right)~,
\end{align*}
where $D_2 = \max\{D_2^\prime, D_1^2\}$, $T_m \triangleq \max_l T_l$, $h_m \triangleq \max_l h_l$, 
the last equality collects the low order terms. 
This completes the proof.

\section{Proof of Theorem~\ref{theo:mul-servers-var}}

\begin{proof}
Following the proof of Theorem~\ref{theo:mul-servers}, for the variance, we have
\begin{align*}
	&\mathbb{E}\left(\hat{\phi}_L^S - \mathbb{E}\hat{\phi}\right)^2 = \mathbb{E}\left(\sum_{s=1}^S \frac{T_s}{T}\left(\hat{\phi}_{L_s} - \mathbb{E}\hat{\phi}_{L_s}\right)\right)^2  \\
	=& \sum_{s=1}^S \frac{T_s^2}{T^2} \mathbb{E}\left(\hat{\phi}_{L_s} - \bar{\phi}_{L_s}\right)^2
	+ \sum_{i\neq j} \frac{T_i T_j}{T_2} \mathbb{E}\left[\hat{\phi}_{L_i} - \mathbb{E}\hat{\phi}_{L_i}\right] \mathbb{E}\left[\hat{\phi}_{L_j} - \mathbb{E}\hat{\phi}_{L_j}\right]  \\
	=& \sum_{s=1}^S \frac{T_s^2}{T^2} \mathbb{E}\left(\hat{\phi}_{L_s} - \bar{\phi}_{L_s}\right)^2~.
\end{align*}
Substituting the variance bound in Theorem~\ref{theo:var} for each server, we have
\begin{align*}
	&\mathbb{E}\left(\hat{\phi}_L^S - \mathbb{E}\hat{\phi}\right)^2 
	\leq D \sum_{s=1}^S \frac{T_s^2}{T^2} \left(\frac{1}{L_s h_s} + h_s^{2K}\right) \\
	=& D \sum_{s=1}^S \left(\frac{T_s}{T^2} + \frac{T_s^2}{T^2} h_s^{2K}\right) \\
	=& D\left(\frac{1}{T} + \sum_{s=1}^S \frac{T_s^2}{T^2} h_s^{2K}\right)
\end{align*}
\end{proof}

\section{Additional Results}\label{sec:results}

See Figure~\ref{fig:loss_single_full} \ref{fig:loss_mul_time} \ref{fig:loss_server_1} \ref{fig:loss_server_2} \ref{fig:loss_server_4}
\ref{fig:loss_server_6}. The content of the figures is described in the titles.

\begin{figure}[!htb]
	\begin{minipage}{0.49\linewidth}
		\includegraphics[width=\linewidth]{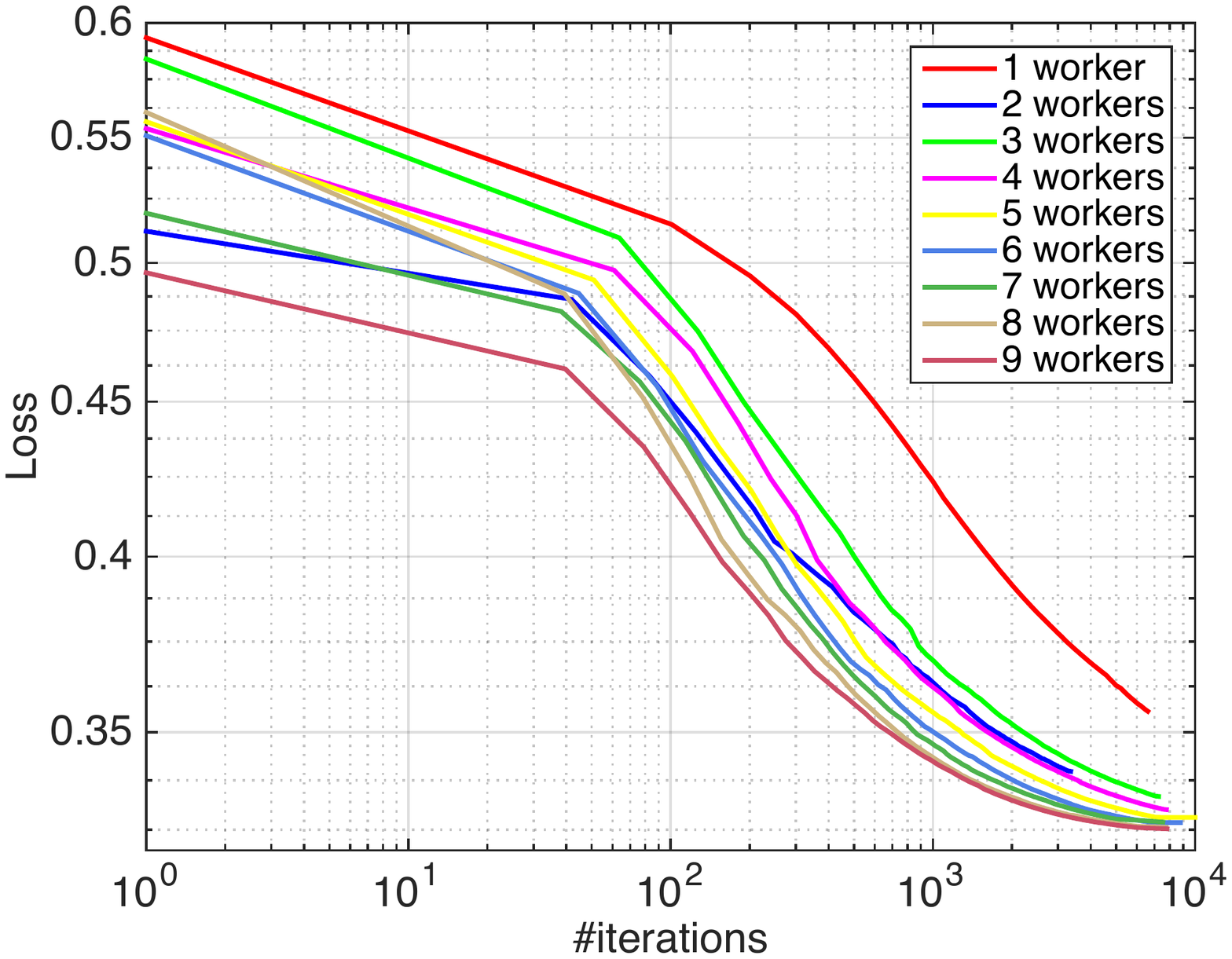}
	\end{minipage}
	\begin{minipage}{0.49\linewidth}
		\includegraphics[width=\linewidth]{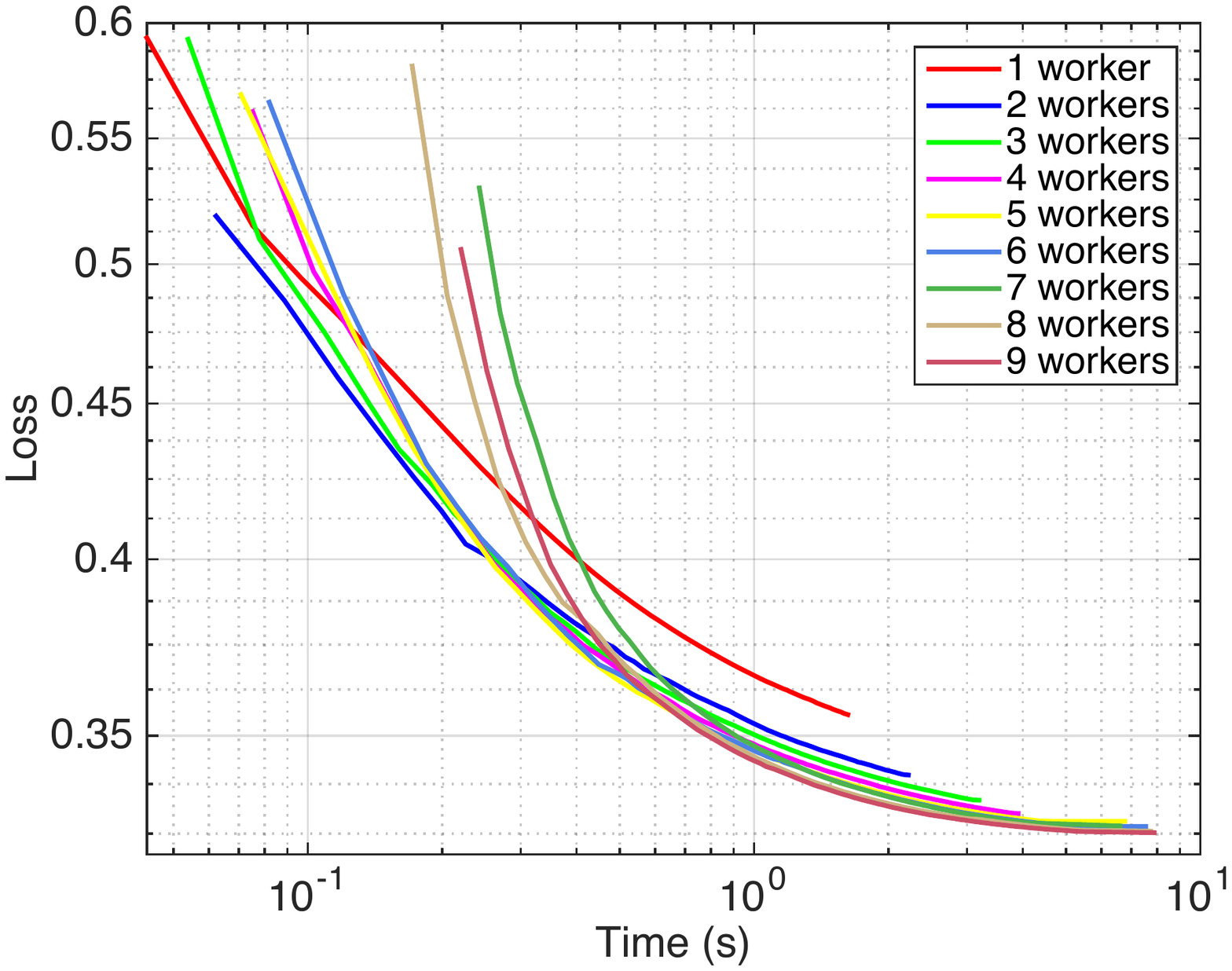}
	\end{minipage}
	
	\begin{minipage}{0.49\linewidth}
		\includegraphics[width=1.02\linewidth]{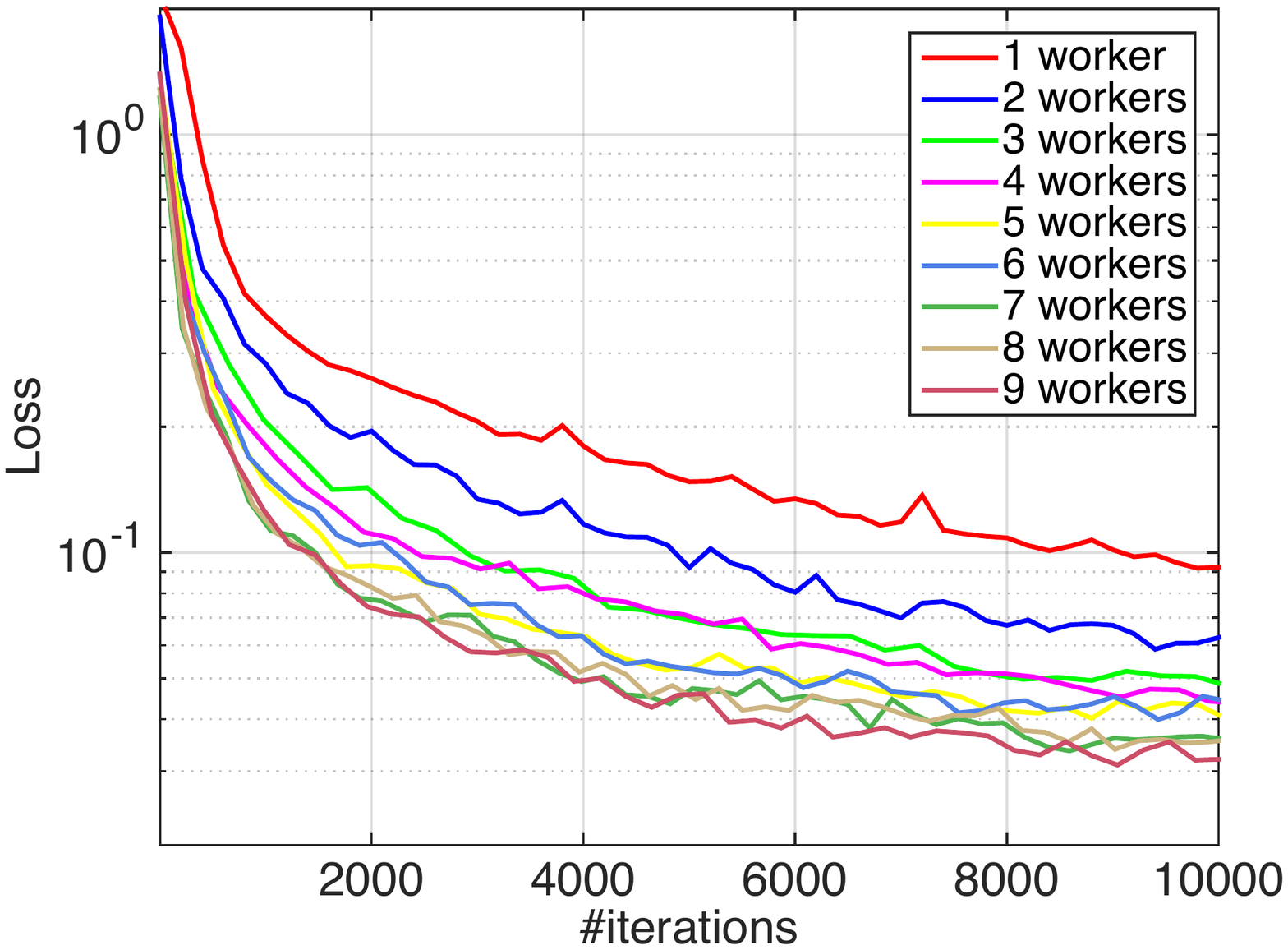}
	\end{minipage}
	\begin{minipage}{0.49\linewidth}
		\includegraphics[width=\linewidth]{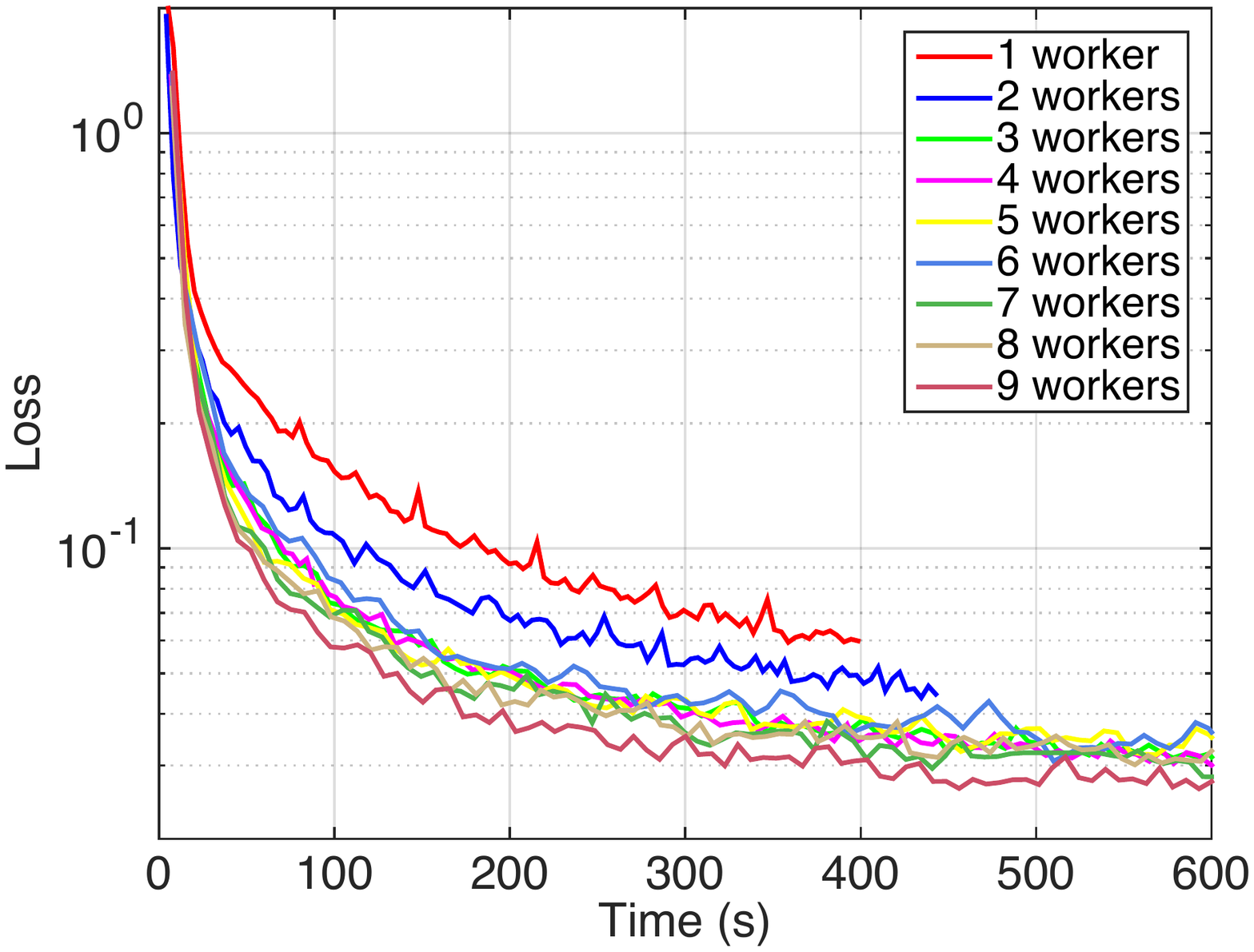}
	\end{minipage}
		
	\begin{minipage}{0.49\linewidth}
		\includegraphics[width=0.96\linewidth]{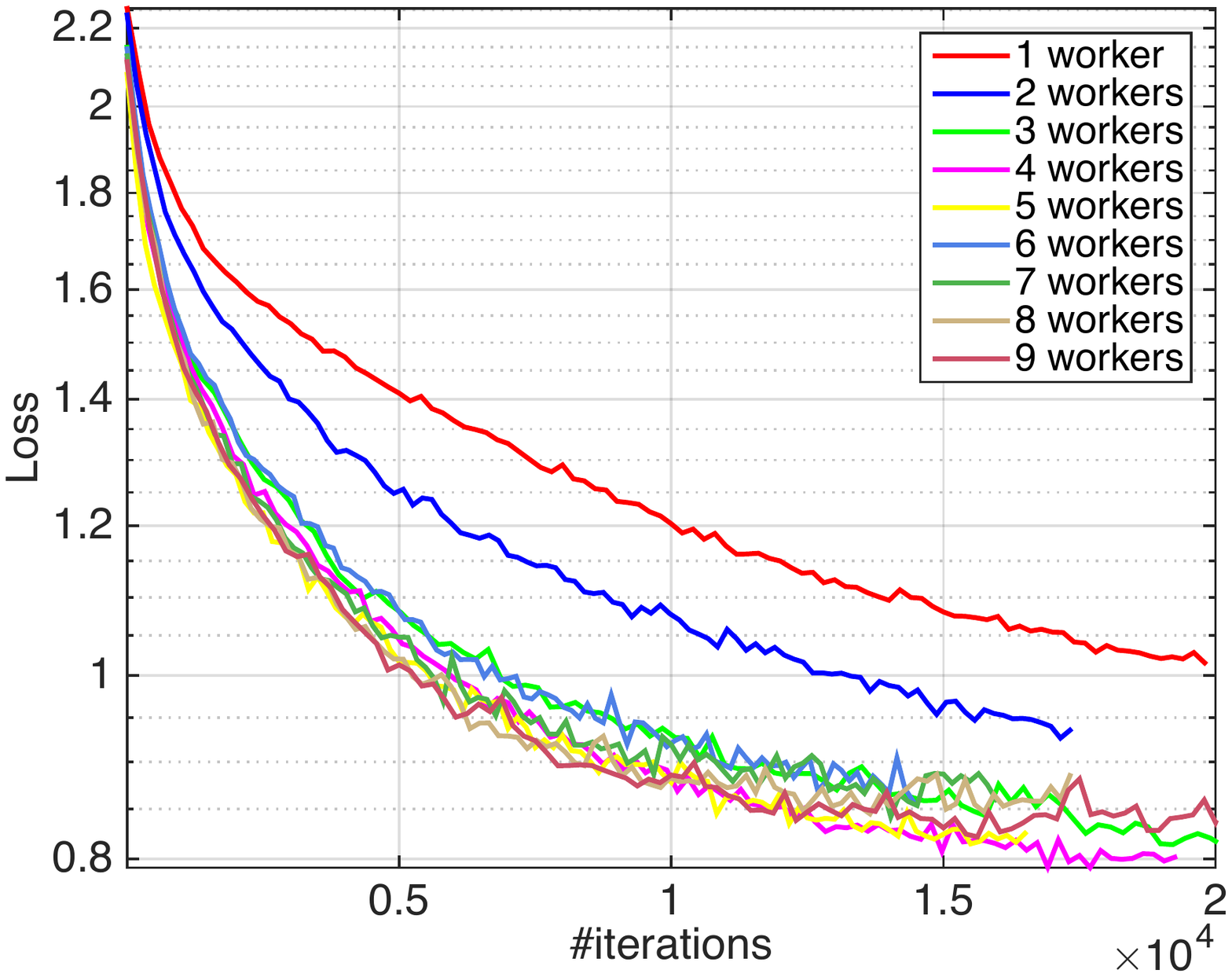}
	\end{minipage}	
	\begin{minipage}{0.49\linewidth}
		\includegraphics[width=\linewidth]{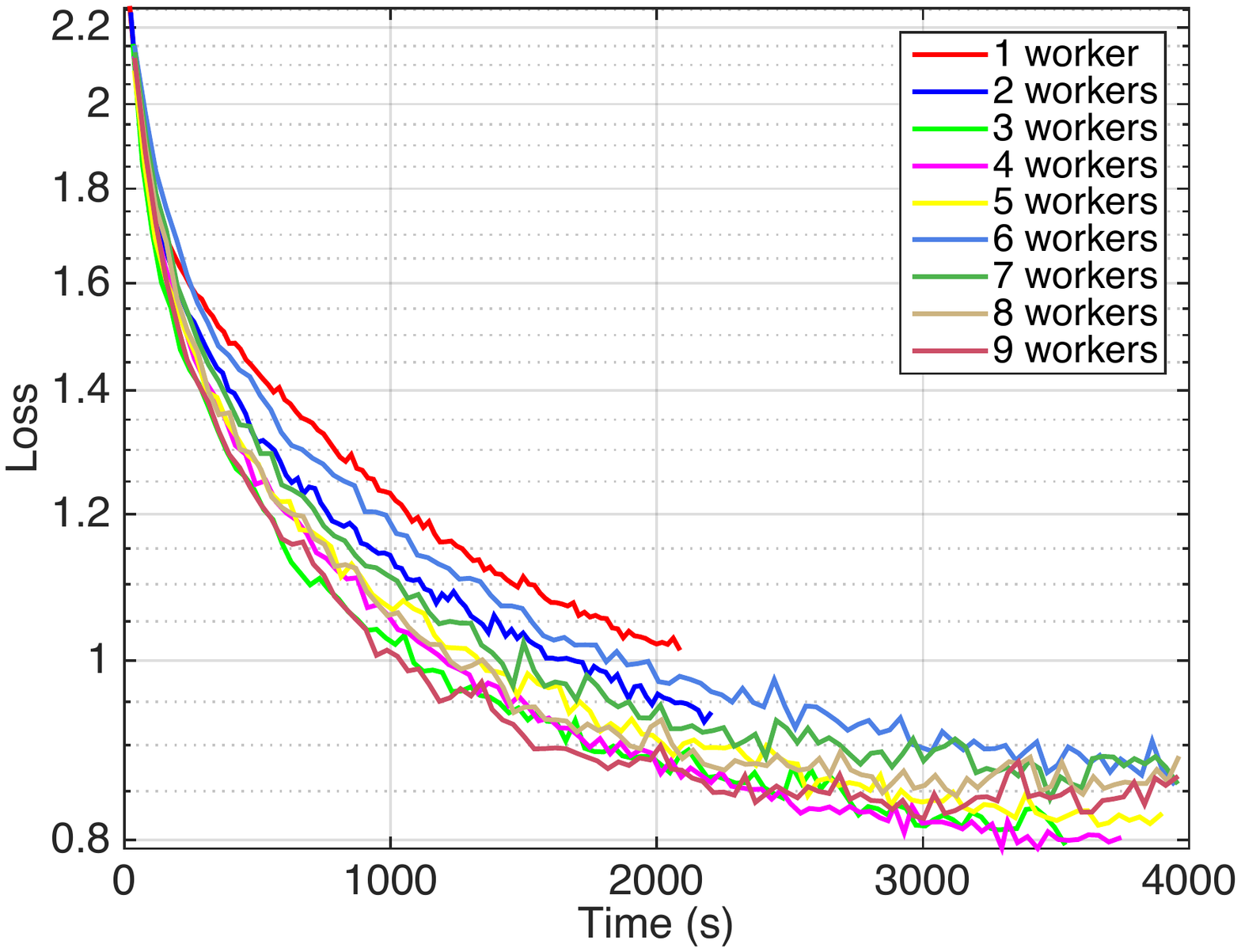}
	\end{minipage}
	\caption{Testing loss vs. \#workers. From top down, each row corresponds to
		the a9a, MNIST and CIFAR dataset, respectively.}
	\label{fig:loss_single_full}
\end{figure} 

\begin{figure}[!htb]
	\begin{center}
		
		\begin{minipage}{0.49\linewidth}
			\centerline{\includegraphics[width=\linewidth]{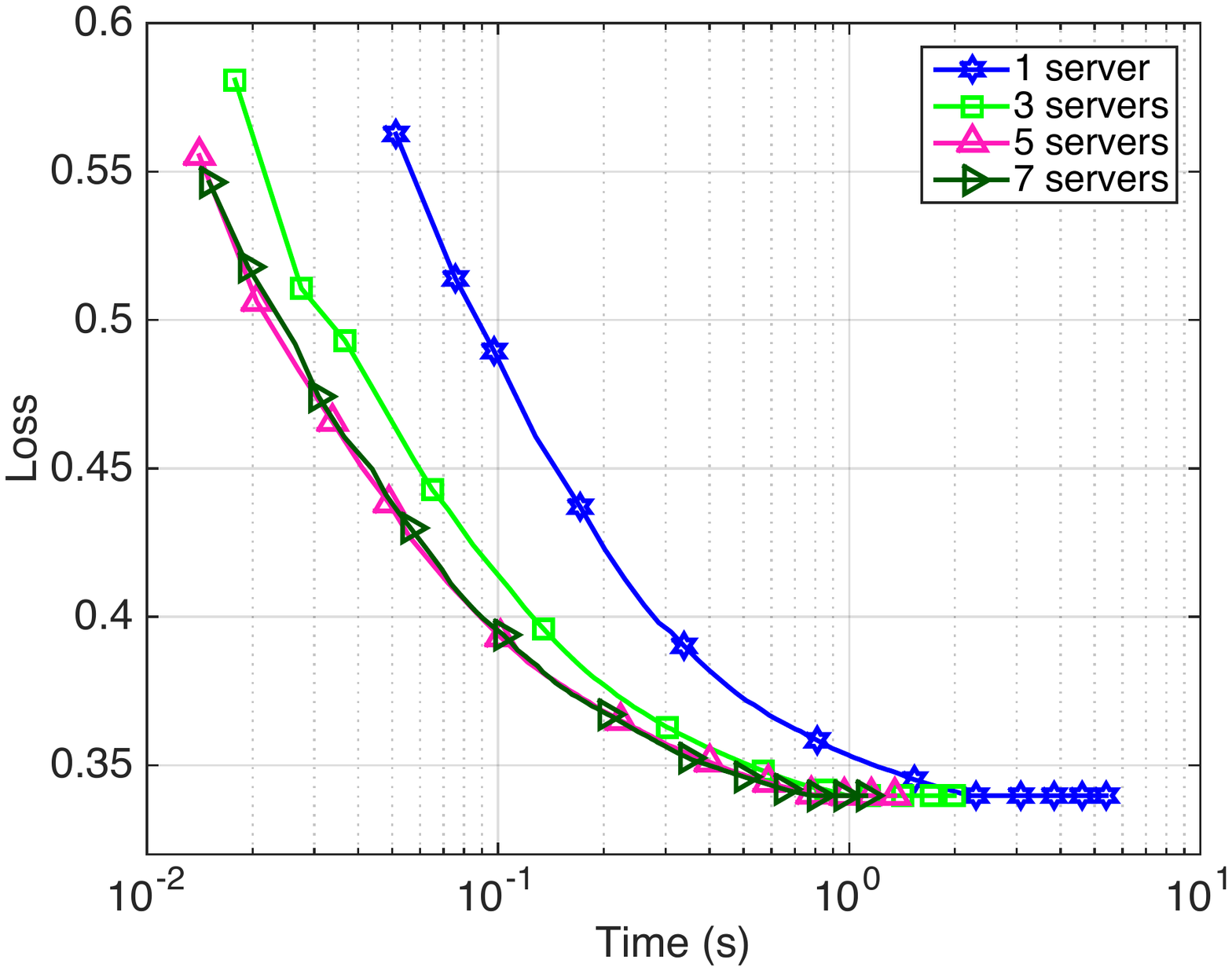}}
		\end{minipage}
		\begin{minipage}{0.49\linewidth}
			\centerline{\includegraphics[width=\linewidth,height=0.77\linewidth]{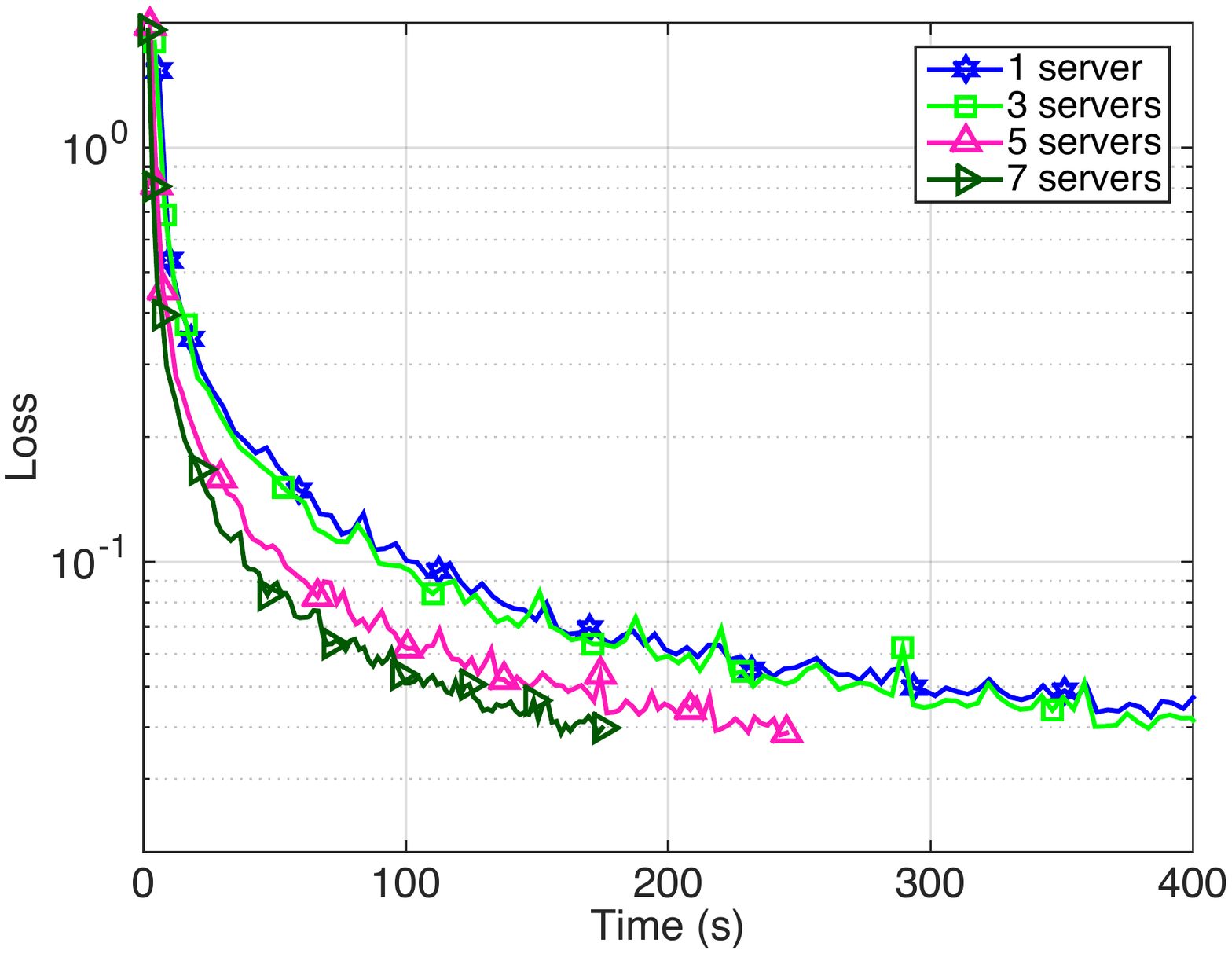}}
		\end{minipage}
		\begin{minipage}{0.49\linewidth}
			\centerline{\includegraphics[width=\linewidth]{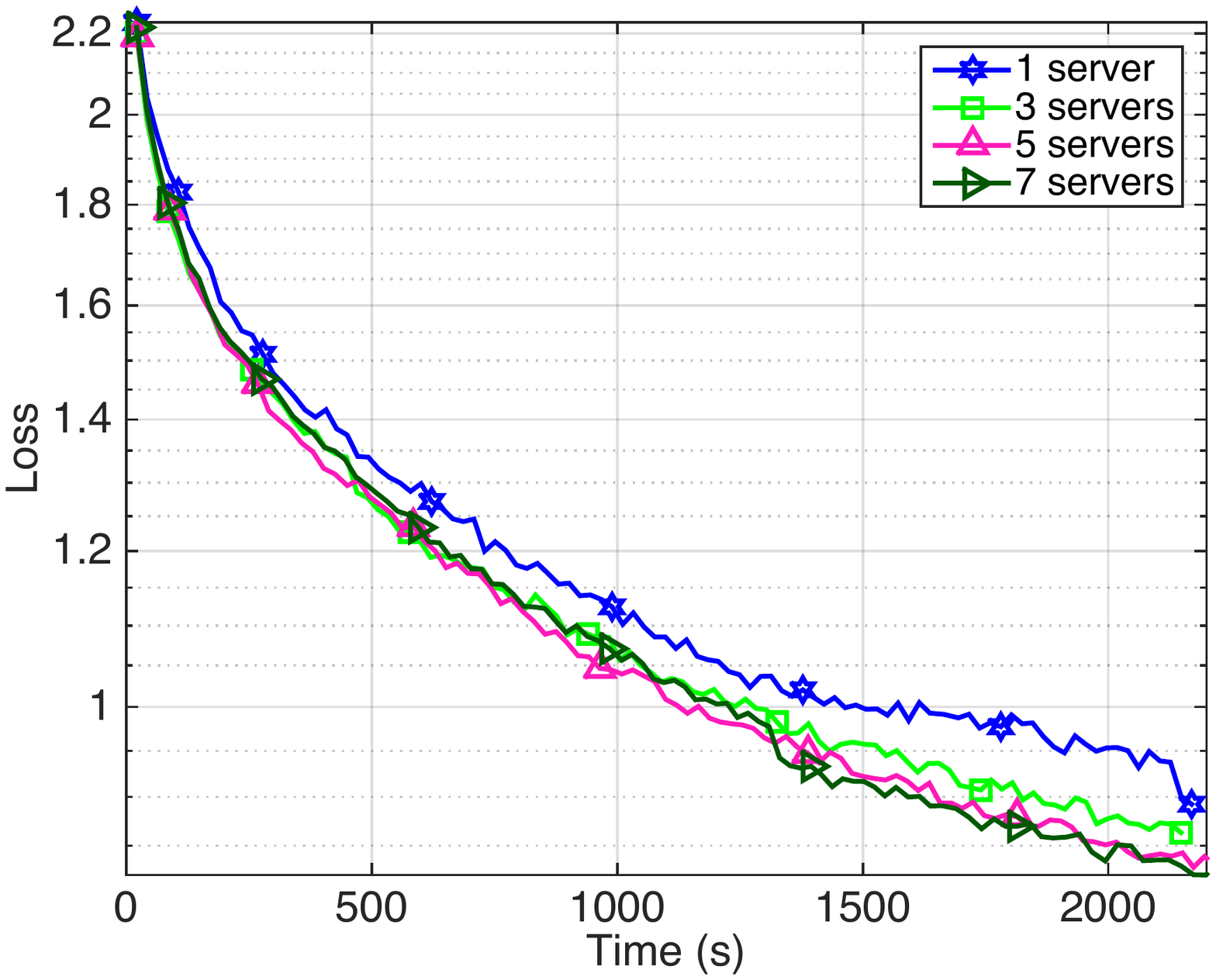}}
		\end{minipage}
		\caption{Testing loss vs. \#servers. From left to right, the first row corresponds to
			the a9a, MNIST datasets, and the second row corresponds to the CIFAR dataset, respectively.}
		\label{fig:loss_mul_time}
	\end{center}
\end{figure}

\begin{figure}[!htb]
	\begin{minipage}{0.49\linewidth}
		\includegraphics[width=\linewidth]{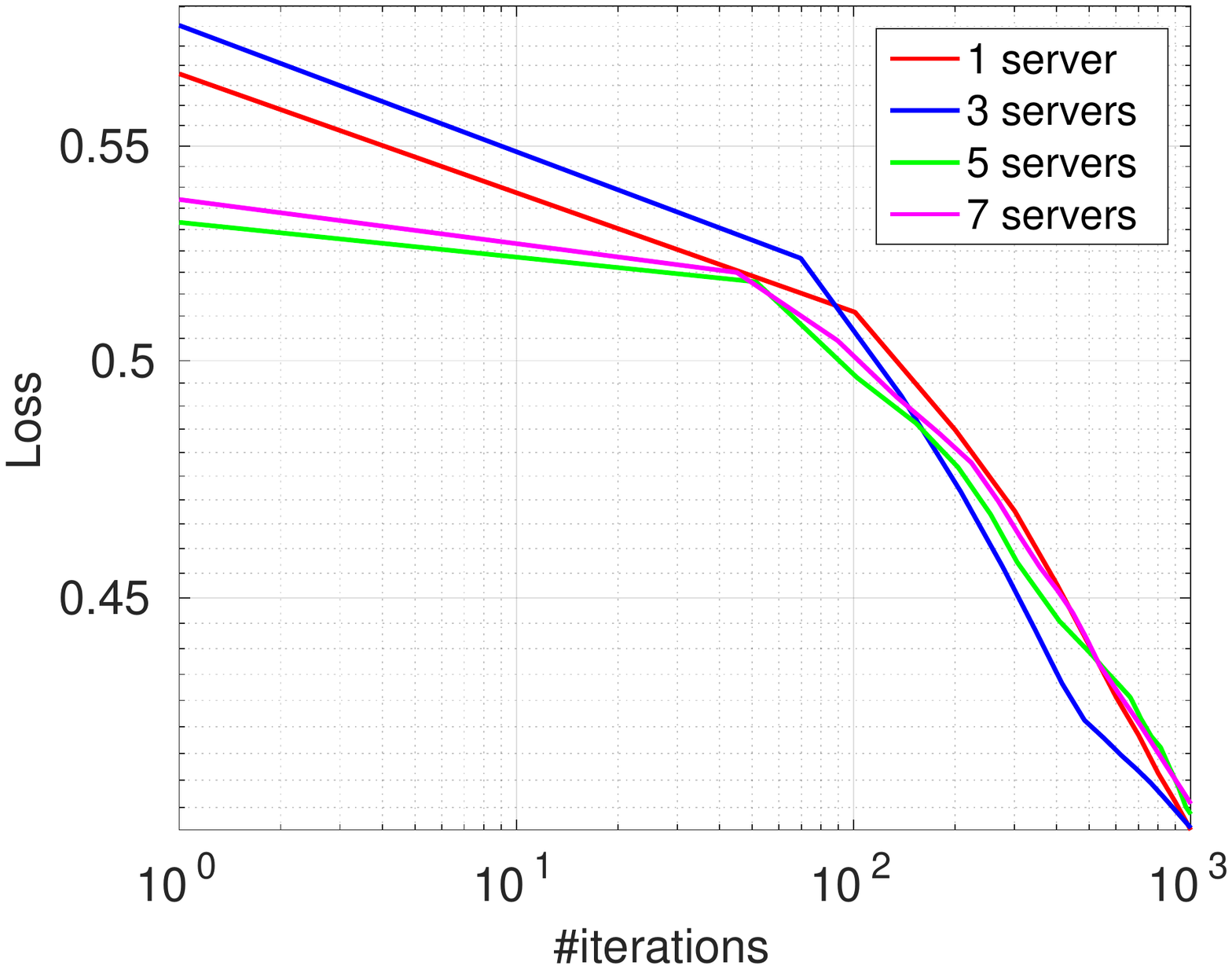}
	\end{minipage}\hspace{2mm}
	\begin{minipage}{0.49\linewidth}
		\includegraphics[width=\linewidth]{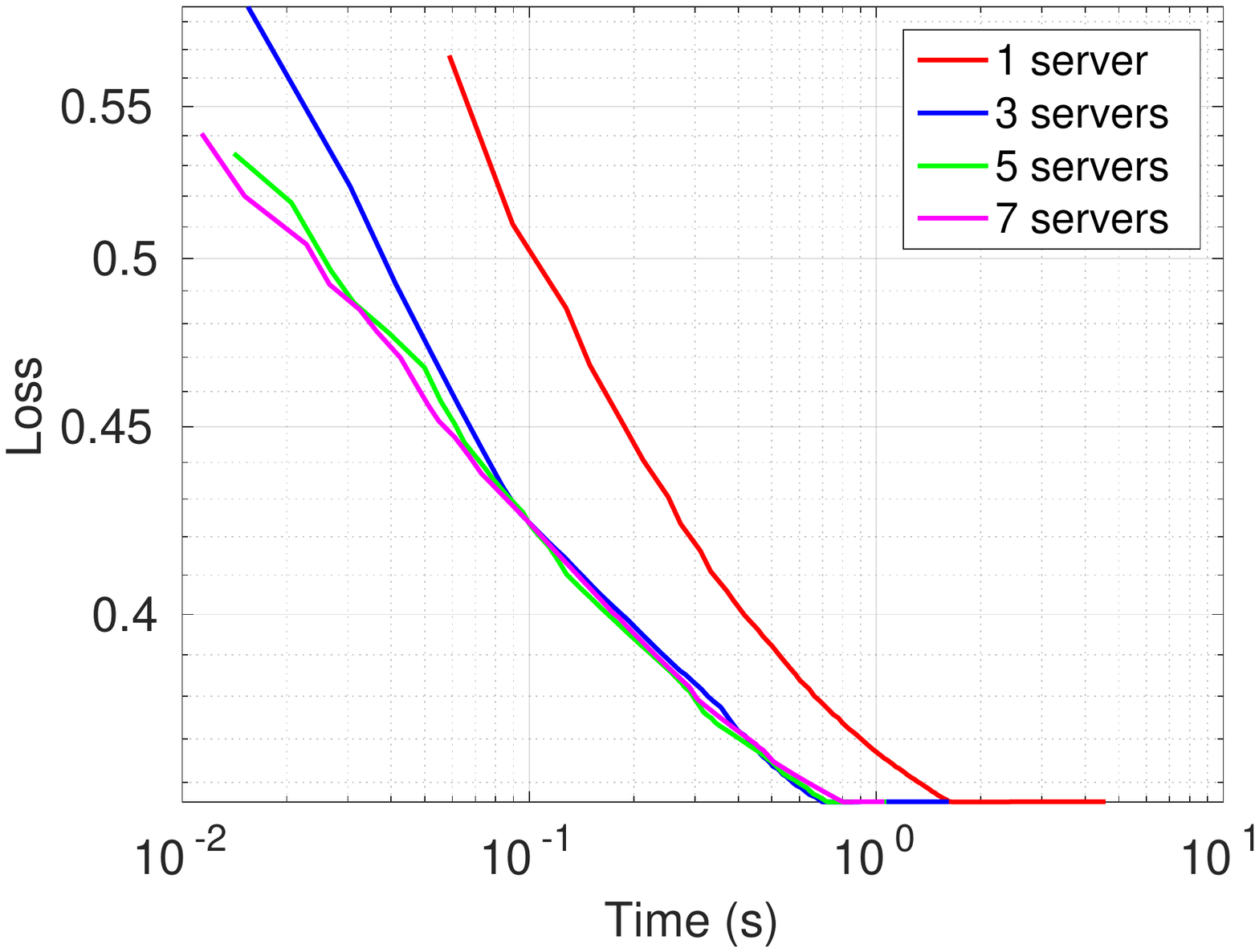}
	\end{minipage}
	
	\begin{minipage}{0.49\linewidth}\hspace{-1mm}
		\includegraphics[width=1.02\linewidth]{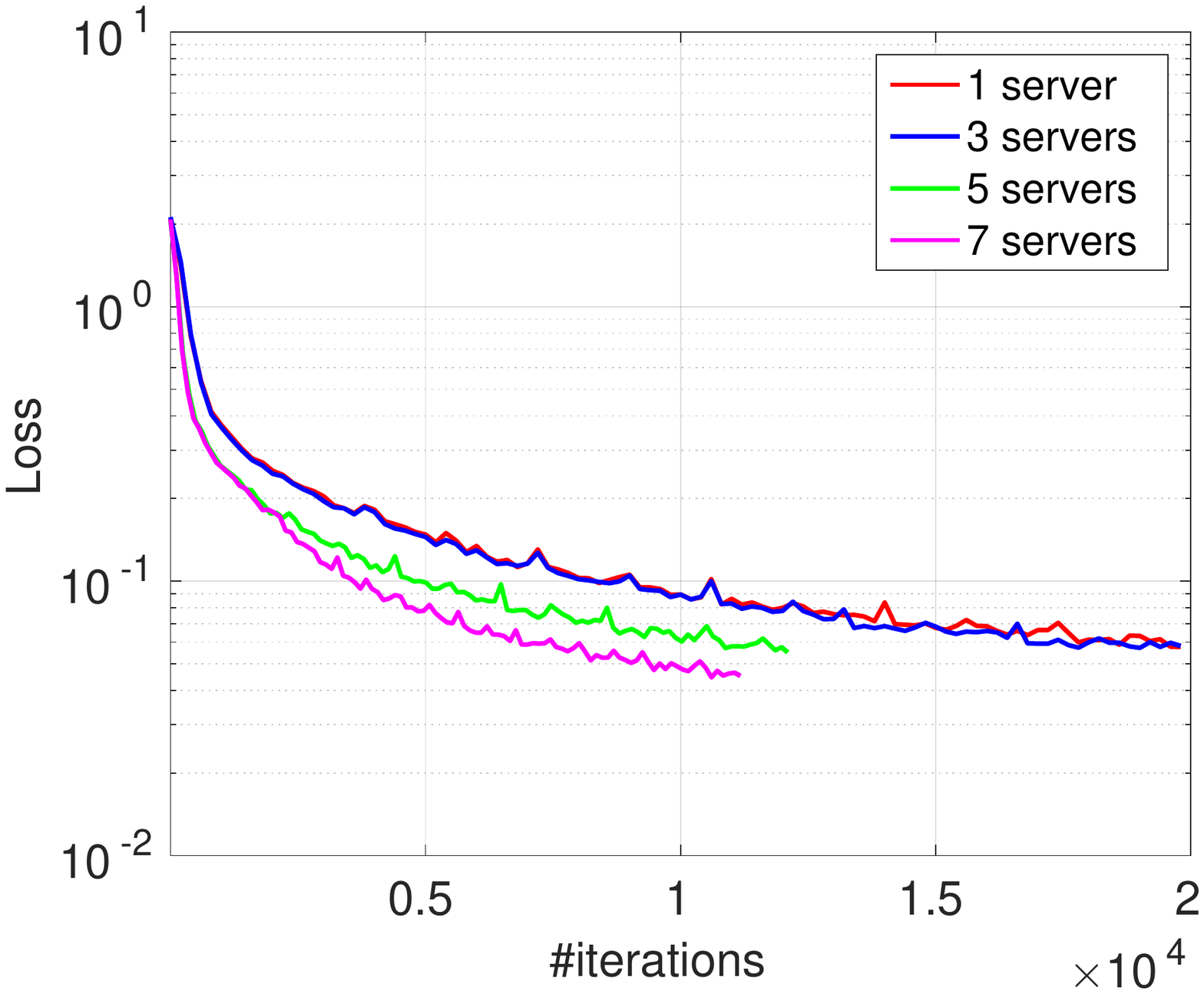}
	\end{minipage}
	\begin{minipage}{0.49\linewidth}\vspace{-0mm}\hspace{-1mm}
		\includegraphics[width=\linewidth]{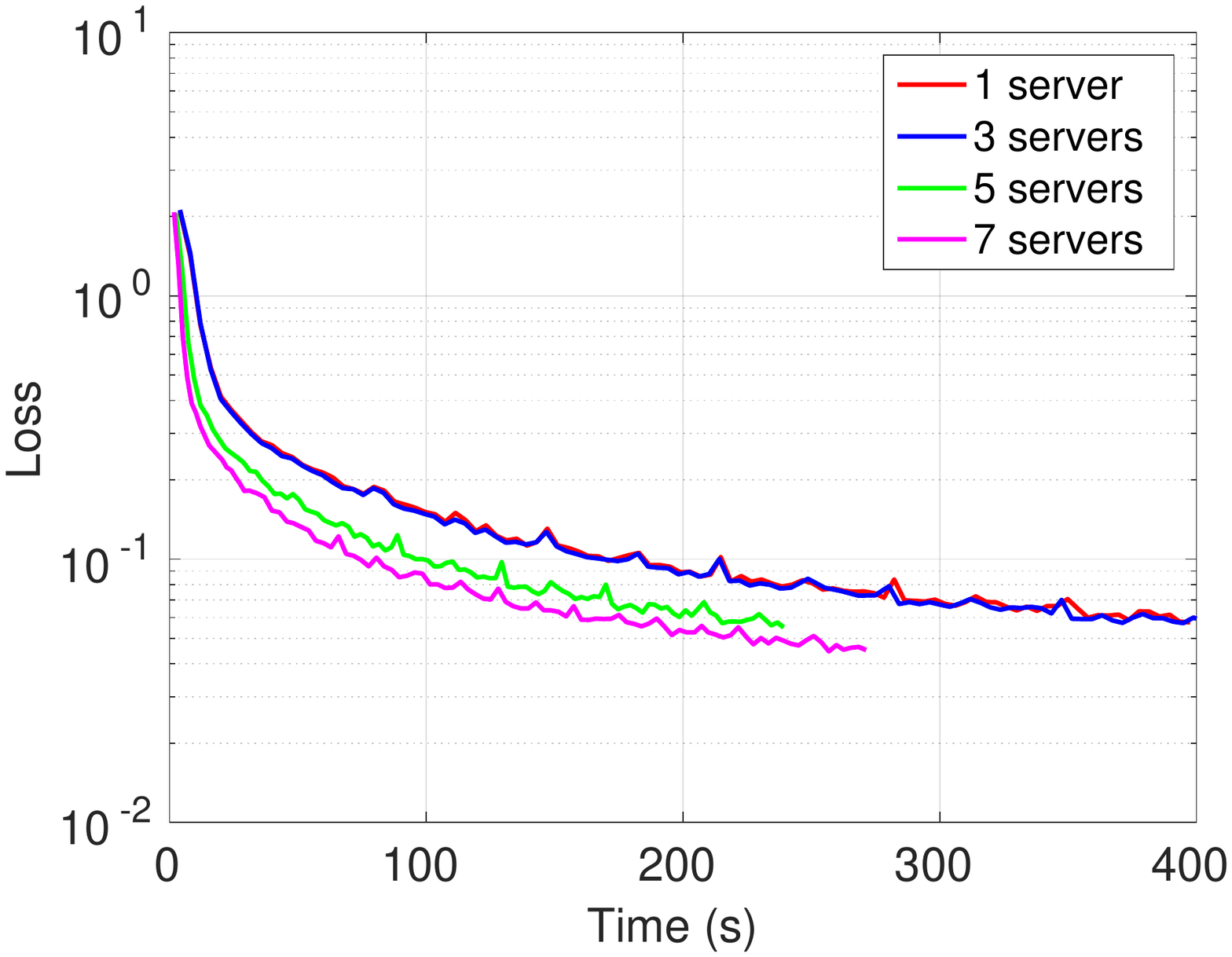}
	\end{minipage}
		
	\begin{minipage}{0.49\linewidth}\hspace{1mm}
		\includegraphics[width=0.96\linewidth]{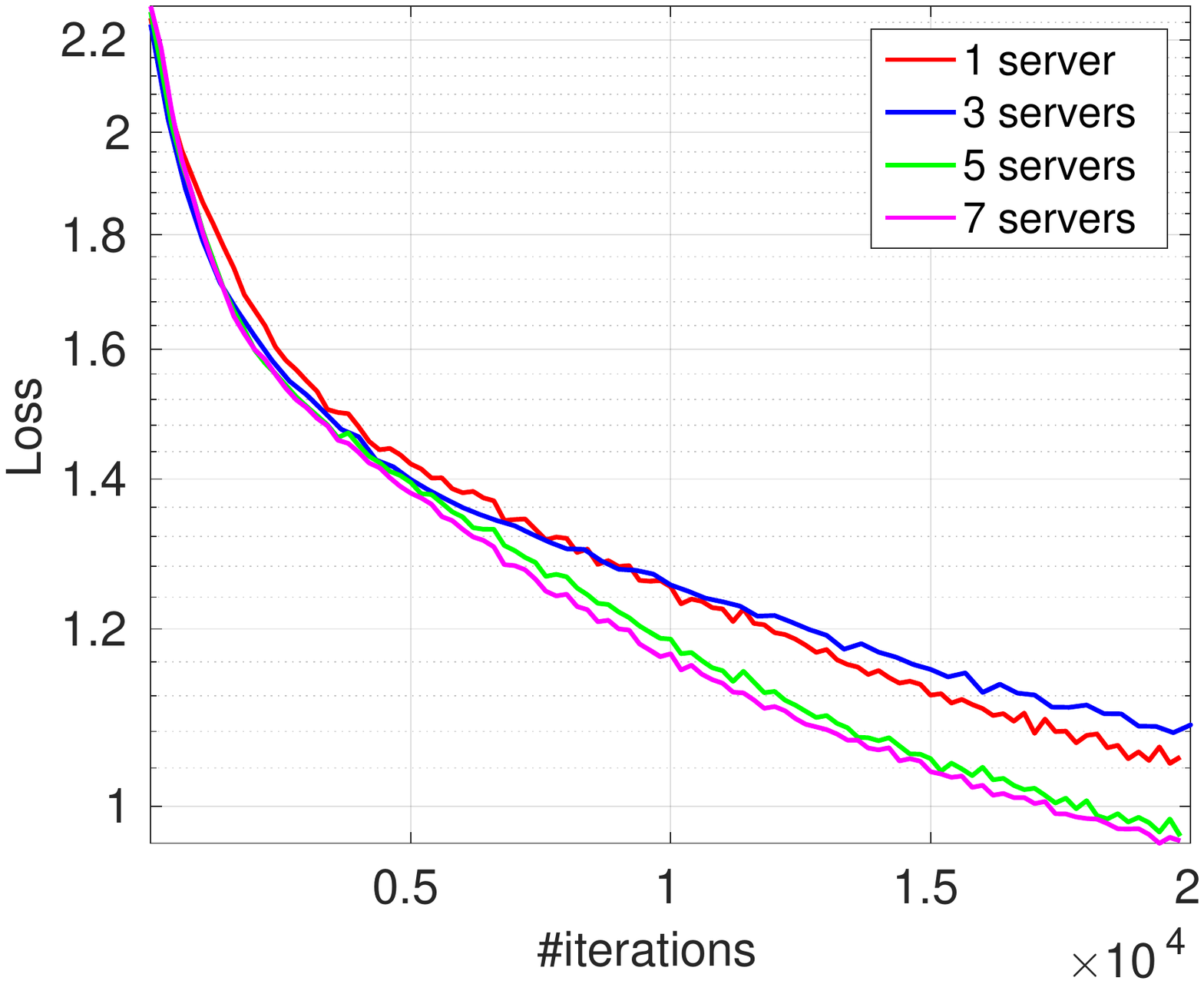}
	\end{minipage}	
	\begin{minipage}{0.49\linewidth}\hspace{1mm}
		\includegraphics[width=\linewidth]{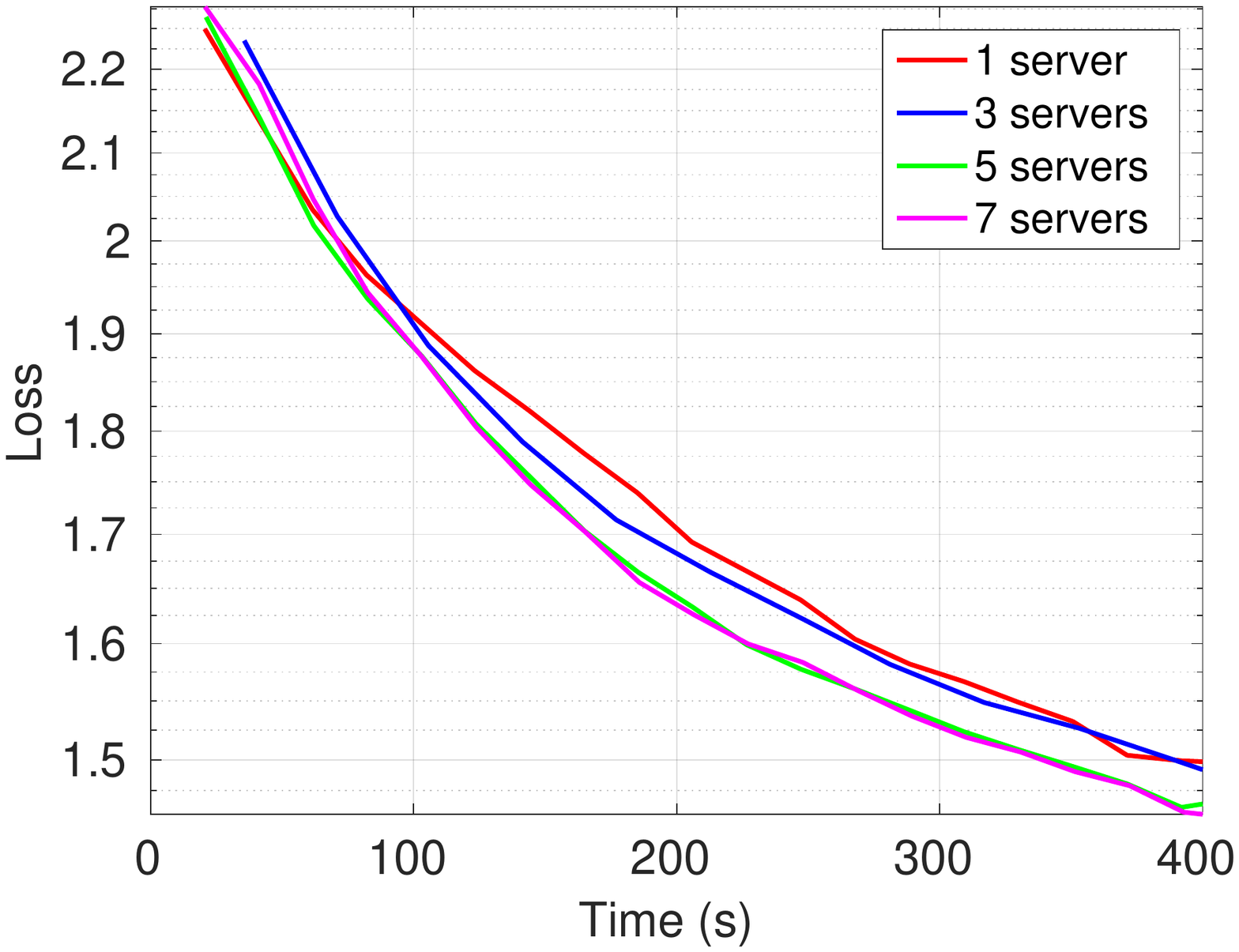}
	\end{minipage}
	\caption{Testing loss vs. \#servers. From top down, each row corresponds to
			the a9a, MNIST and CIFAR dataset, respectively. Each server is associated with 1 worker.}
	\label{fig:loss_server_1}
\end{figure} 

\begin{figure}[!htb]
	\begin{minipage}{0.49\linewidth}
		\includegraphics[width=\linewidth]{a9a_s_1_7_2_iter}
	\end{minipage}
	\begin{minipage}{0.49\linewidth}
		\includegraphics[width=\linewidth]{a9a_s_1_7_2_time}
	\end{minipage}
	
	\begin{minipage}{0.49\linewidth}
		\includegraphics[width=1.02\linewidth]{mnist_s_1_7_2_iter}
	\end{minipage}
	\begin{minipage}{0.49\linewidth}
		\includegraphics[width=\linewidth]{mnist_s_1_7_2_time}
	\end{minipage}
		
	\begin{minipage}{0.49\linewidth}
		\includegraphics[width=0.96\linewidth]{cifar_s_1_7_2_iter}
	\end{minipage}
	\begin{minipage}{0.49\linewidth}
		\includegraphics[width=\linewidth]{cifar_s_1_7_2_time}
	\end{minipage}
	\caption{Testing loss vs. \#servers. From top down, each row corresponds to
			the a9a, MNIST and CIFAR dataset, respectively. Each server is associated with 2 workers.}
	\label{fig:loss_server_2}
\end{figure} 

\begin{figure}[!htb]
	\begin{minipage}{0.49\linewidth}
		\includegraphics[width=\linewidth]{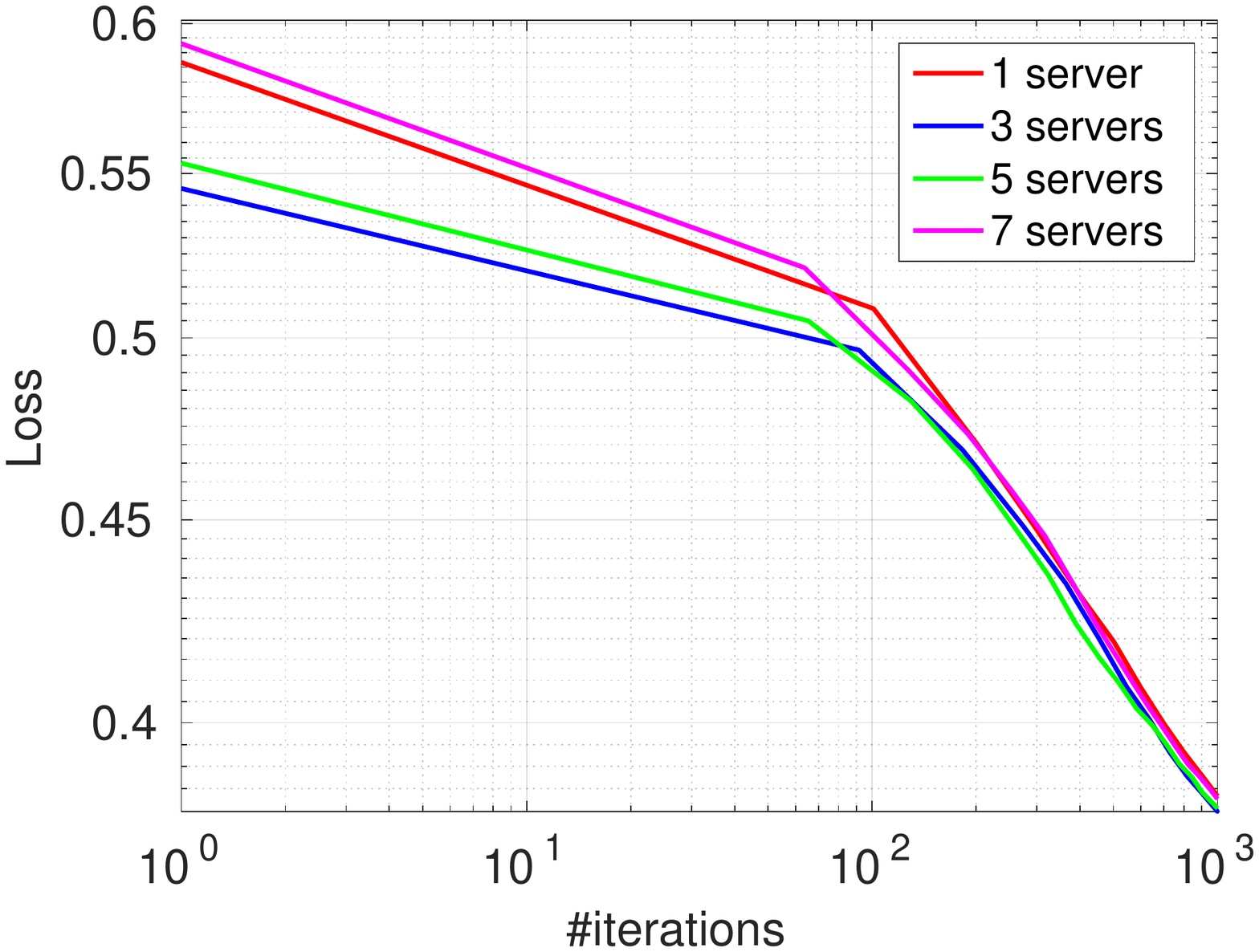}
	\end{minipage}
	\begin{minipage}{0.49\linewidth}
		\includegraphics[width=\linewidth]{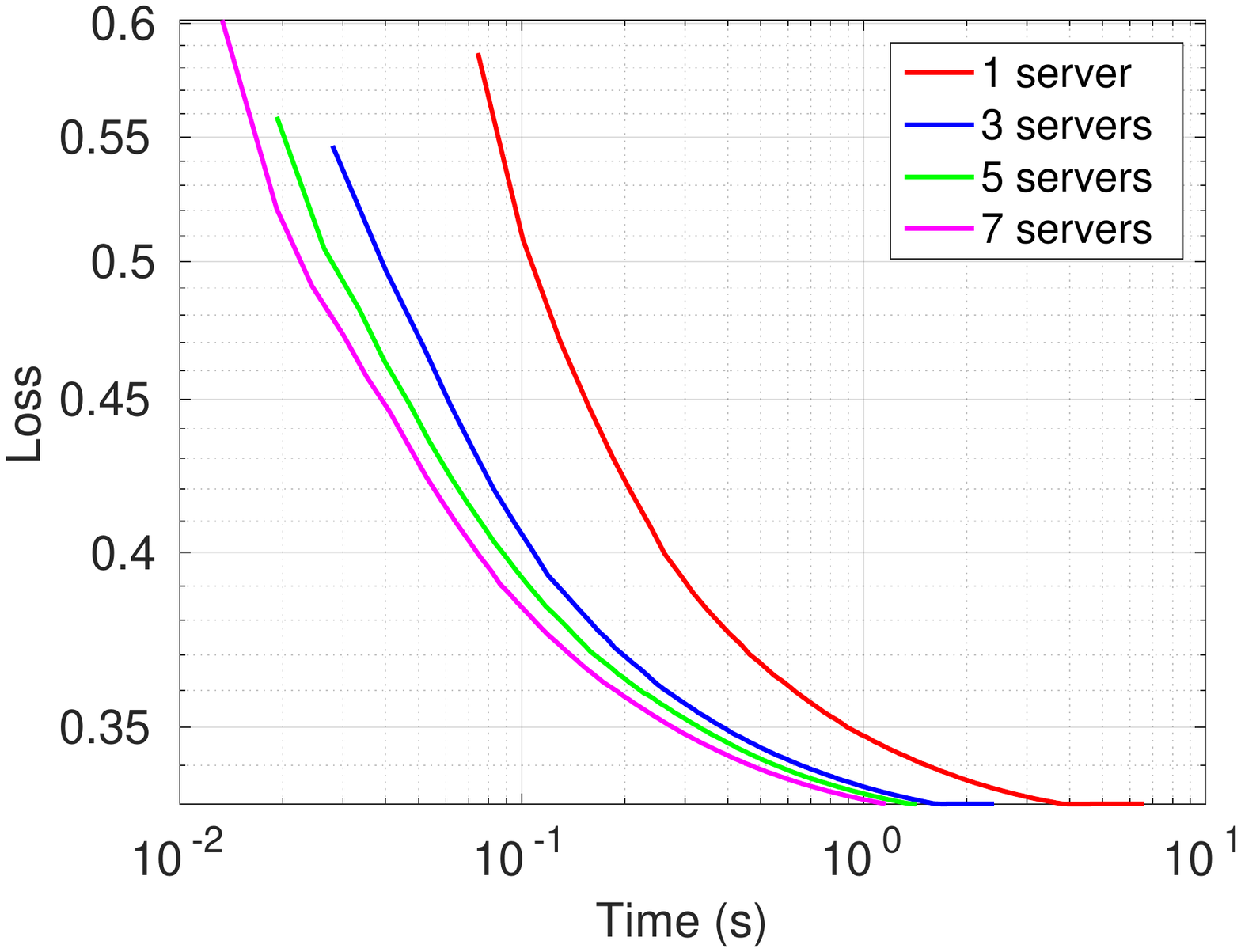}
	\end{minipage}
	
	\begin{minipage}{0.49\linewidth}
		\includegraphics[width=1.02\linewidth]{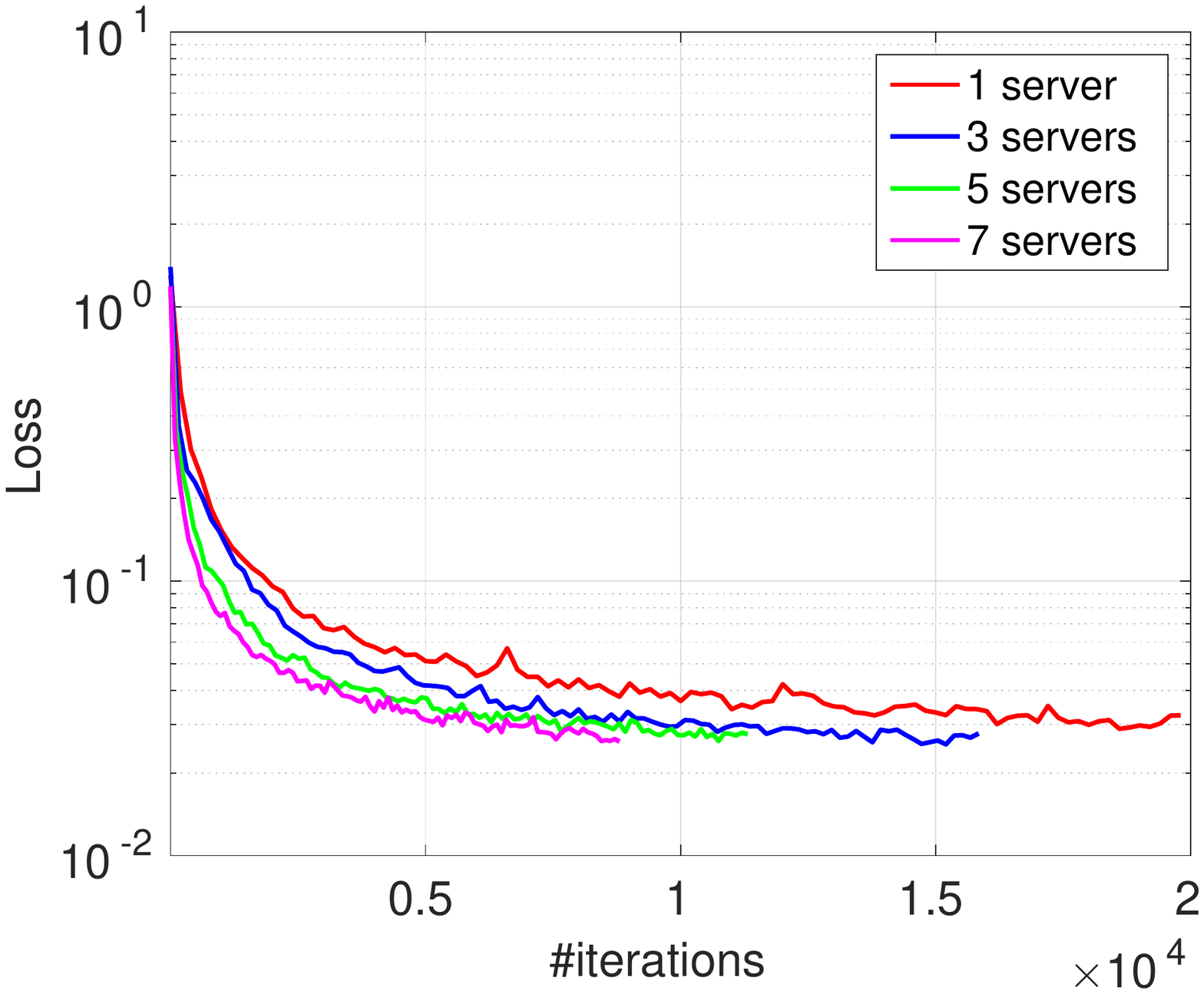}
	\end{minipage}
	\begin{minipage}{0.49\linewidth}
		\includegraphics[width=\linewidth]{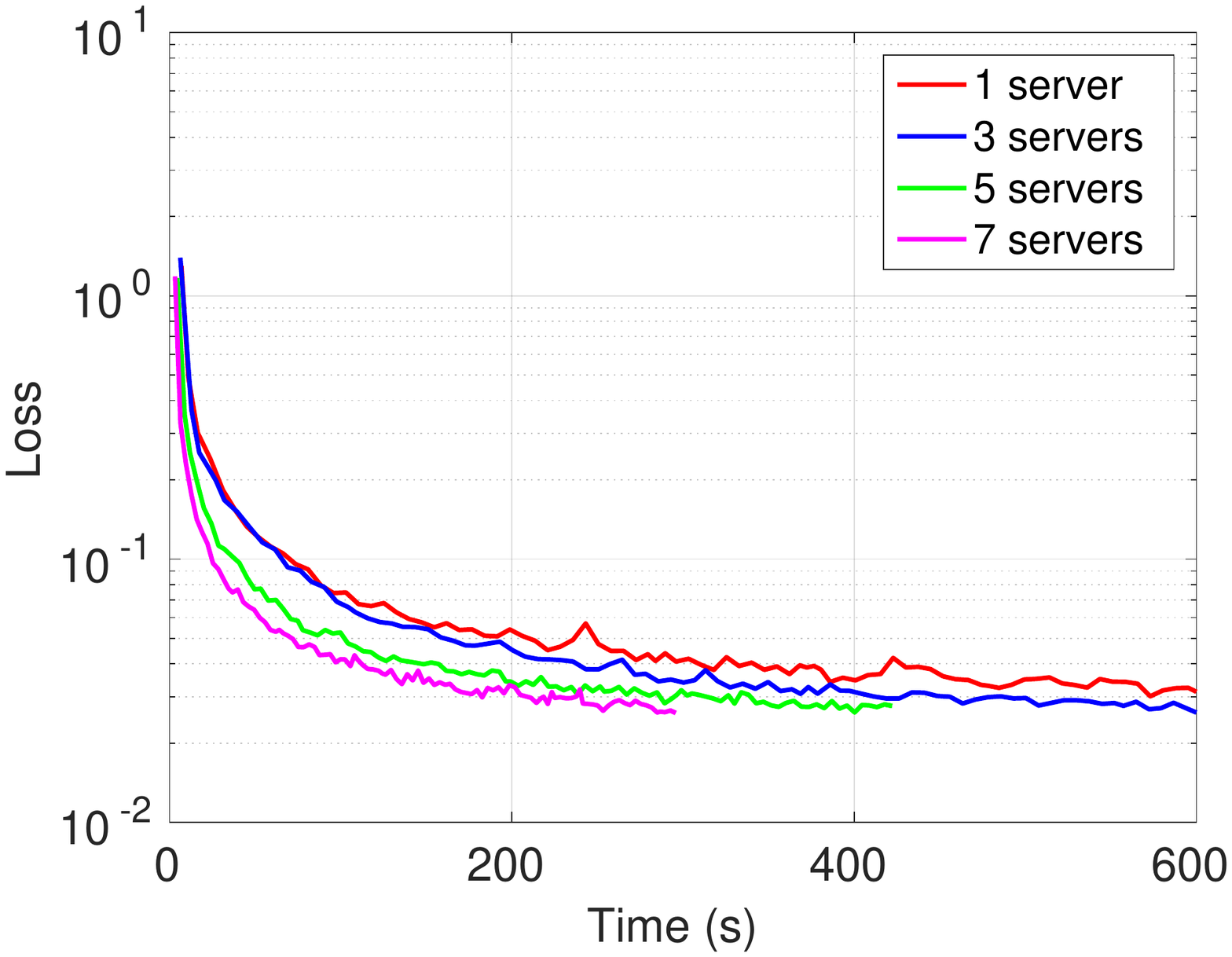}
	\end{minipage}
		
	\begin{minipage}{0.49\linewidth}
		\includegraphics[width=0.96\linewidth]{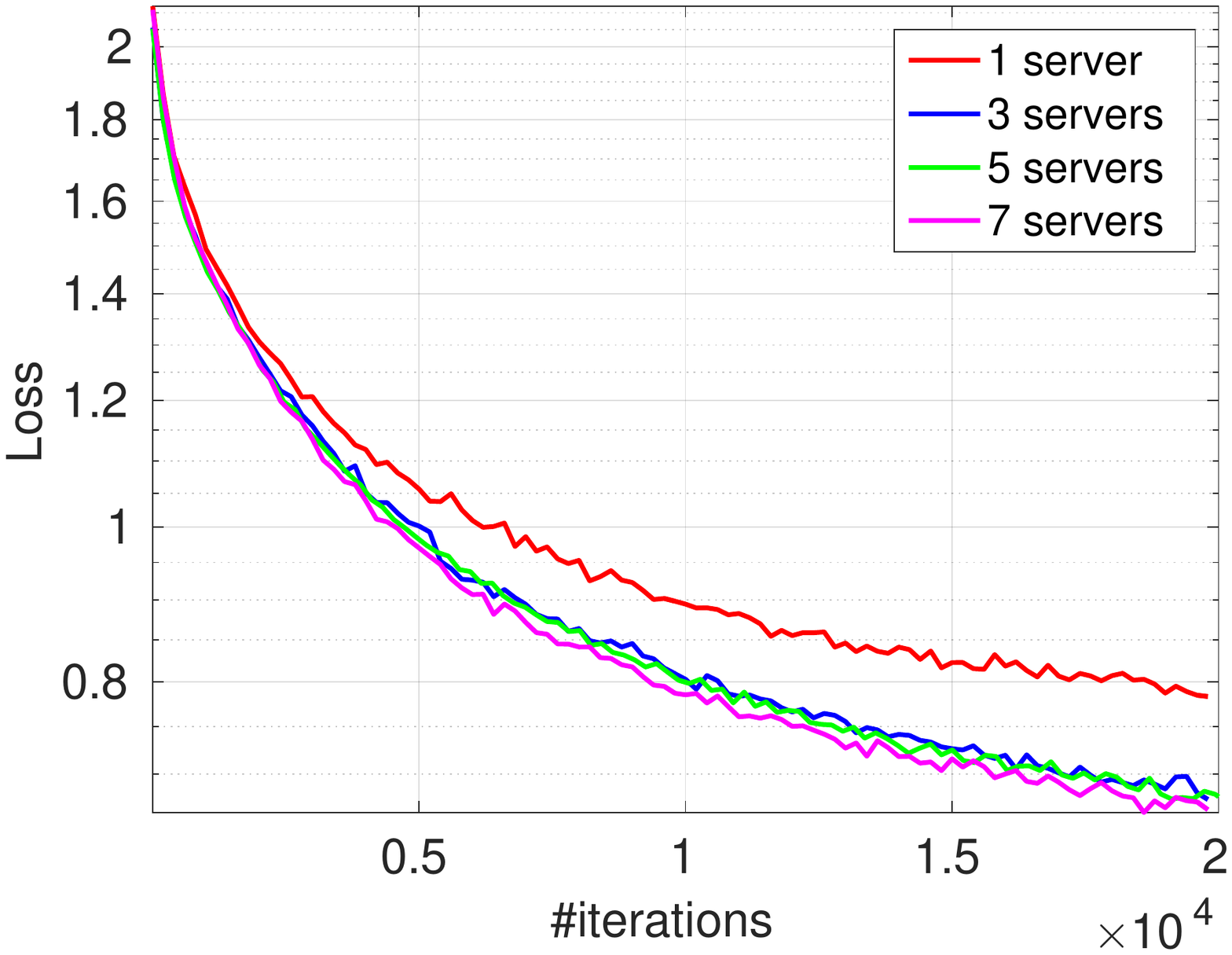}
	\end{minipage}
	\begin{minipage}{0.49\linewidth}
		\includegraphics[width=\linewidth]{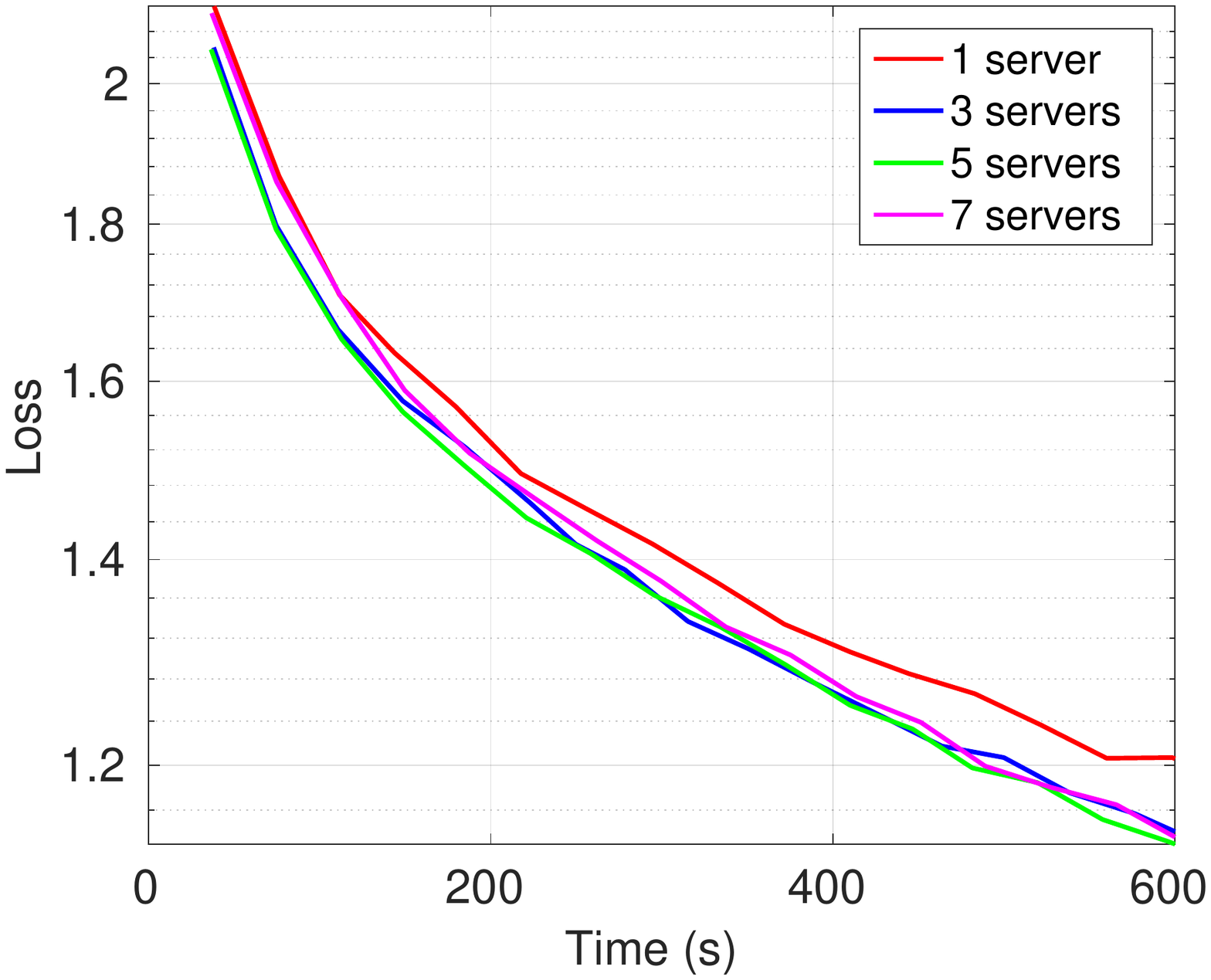}
	\end{minipage}
	\caption{Testing loss vs. \#servers. From top down, each row corresponds to
			the a9a, MNIST and CIFAR dataset, respectively. Each server is associated with 4 workers.}
	\label{fig:loss_server_4}
\end{figure} 

\begin{figure}[!htb]
	\begin{minipage}{0.49\linewidth}
		\includegraphics[width=\linewidth]{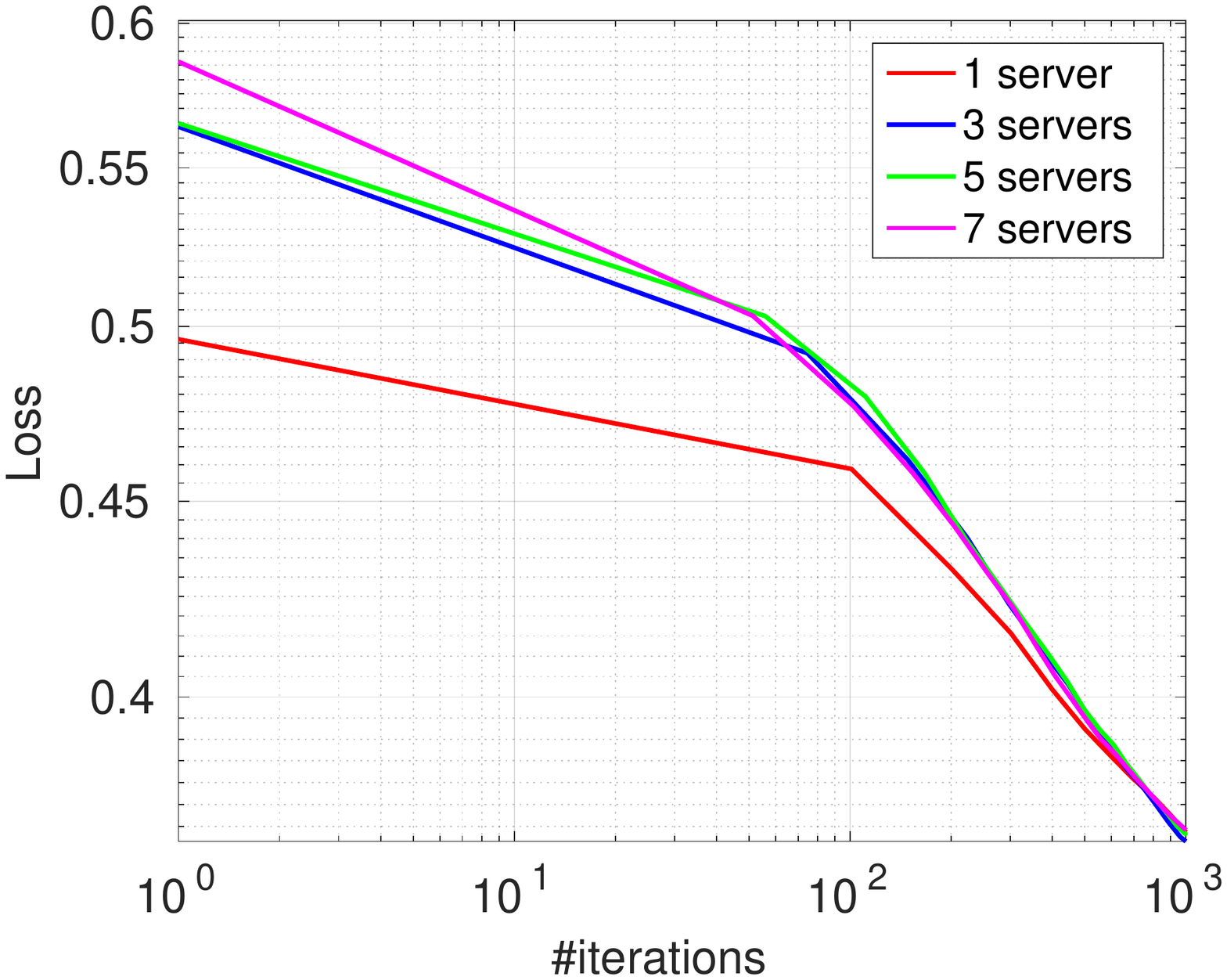}
	\end{minipage}
	\begin{minipage}{0.49\linewidth}
		\includegraphics[width=\linewidth]{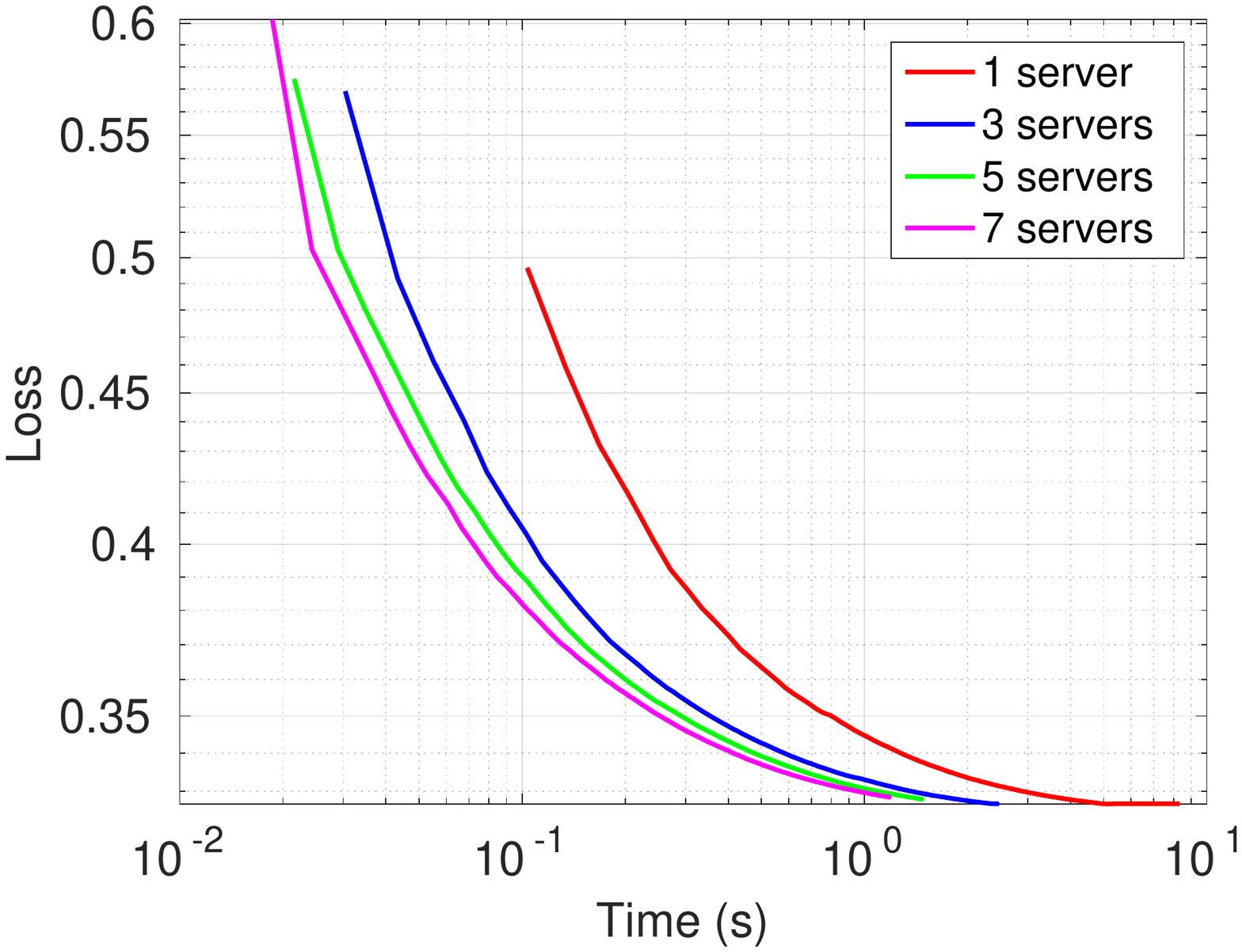}
	\end{minipage}
	
	\begin{minipage}{0.49\linewidth}\hspace{-1mm}
		\includegraphics[width=1.02\linewidth]{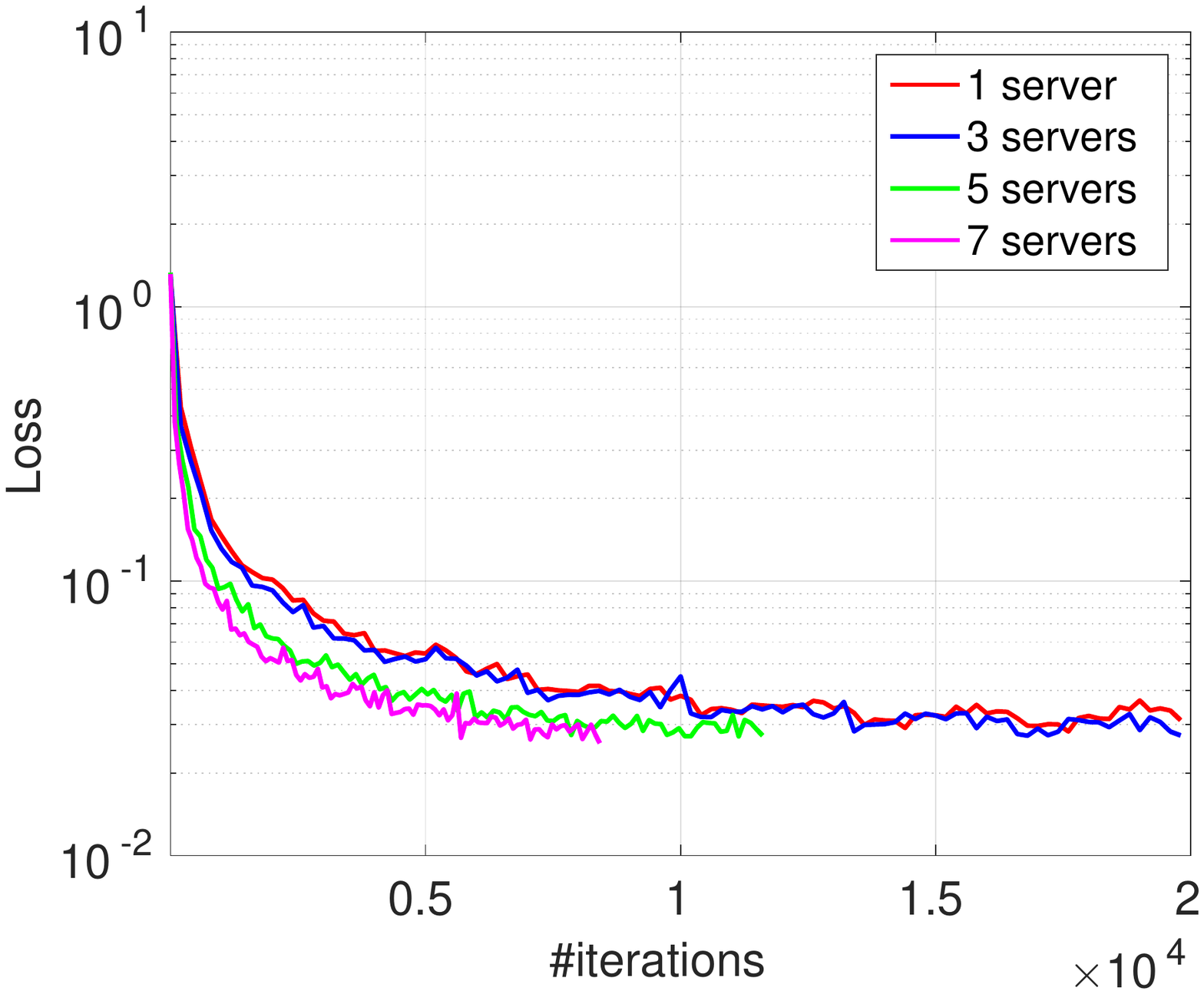}
	\end{minipage}
	\begin{minipage}{0.49\linewidth}\vspace{-0mm}\hspace{-1mm}
		\includegraphics[width=\linewidth]{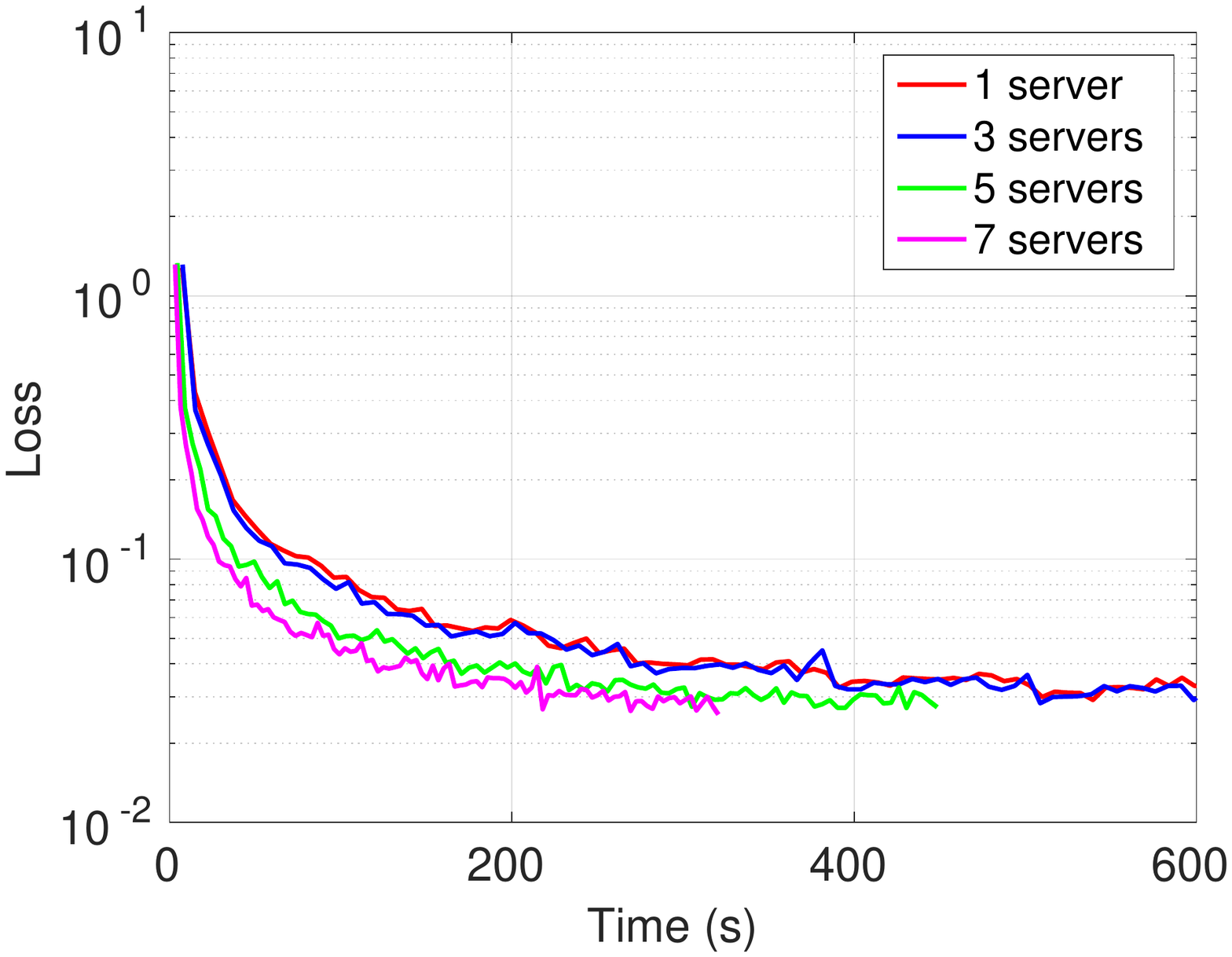}
	\end{minipage}
		
	\begin{minipage}{0.49\linewidth}\hspace{1mm}
		\includegraphics[width=0.96\linewidth]{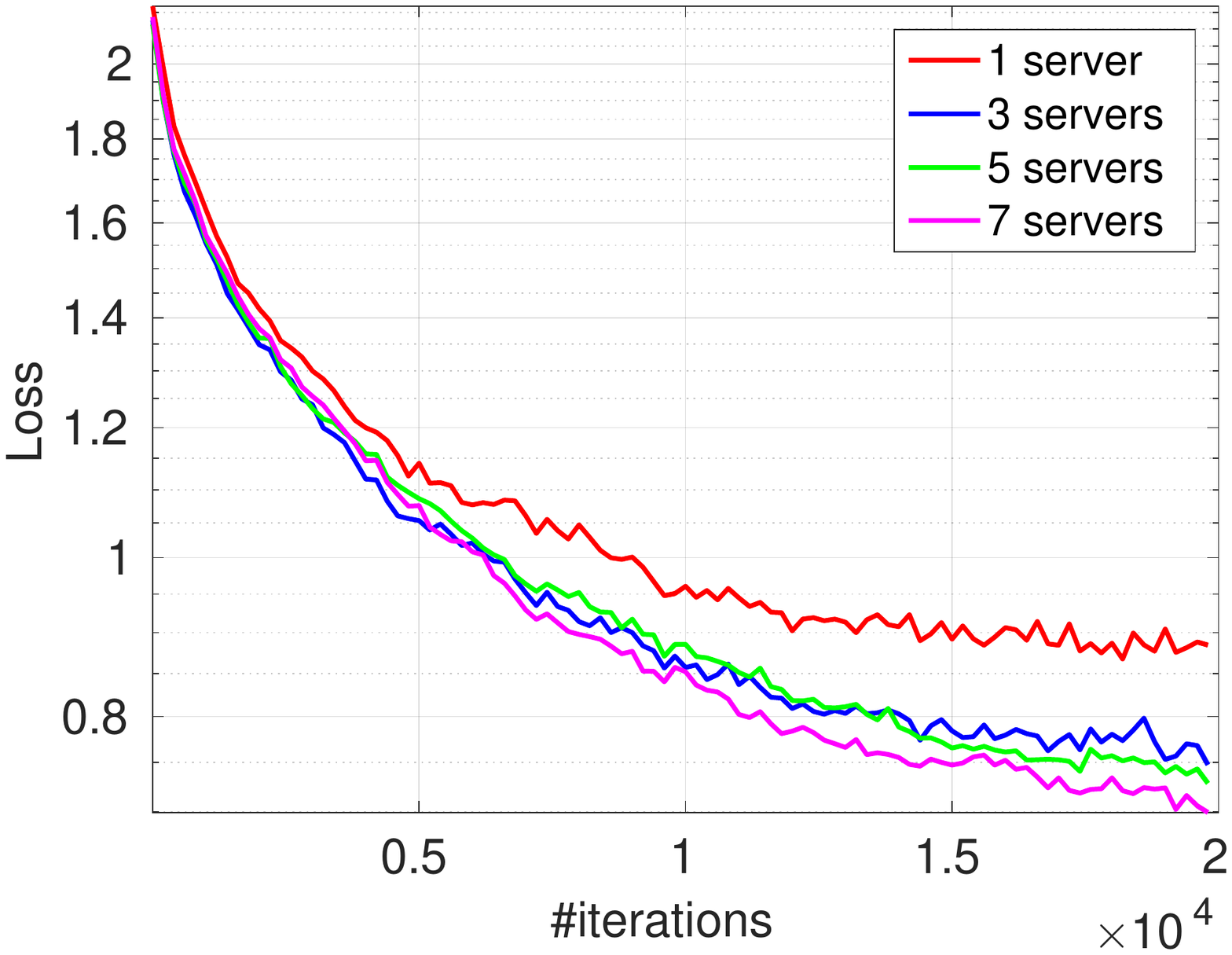}
	\end{minipage}
	\begin{minipage}{0.49\linewidth}\hspace{1mm}
		\includegraphics[width=\linewidth]{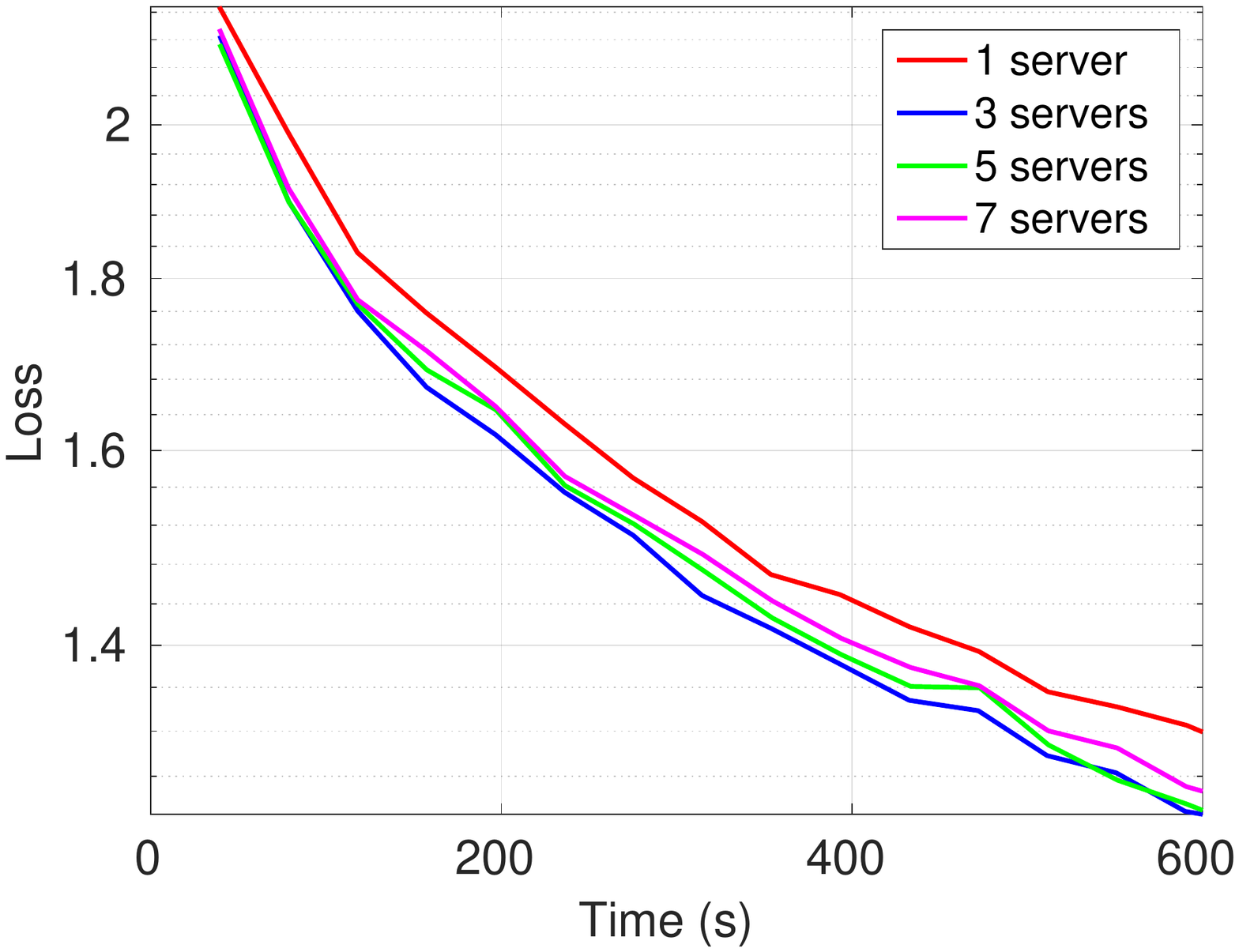}
	\end{minipage}
	\caption{Testing loss vs. \#servers. From top down, each row corresponds to
			the a9a, MNIST and CIFAR dataset, respectively. Each server is associated with 6 workers.}
	\label{fig:loss_server_6}
\end{figure} 

\end{document}